\newcommand{\R}{\mathbb{R}}
\newcommand{\Z}{\mathbb{Z}}
\newcommand{\comment}[1]{ }
\newtheorem{defn}{Definition}
\newtheorem{prop}{Proposition}
\begin{document}
%
% paper title
% Titles are generally capitalized except for words such as a, an, and, as,
% at, but, by, for, in, nor, of, on, or, the, to and up, which are usually
% not capitalized unless they are the first or last word of the title.
% Linebreaks \\ can be used within to get better formatting as desired.
% Do not put math or special symbols in the title.
\title{SurfCut: Surfaces of Minimal Paths From Topological Structures}
%
%
% author names and IEEE memberships
% note positions of commas and nonbreaking spaces ( ~ ) LaTeX will not break
% a structure at a ~ so this keeps an author's name from being broken across
% two lines.
% use \thanks{} to gain access to the first footnote area
% a separate \thanks must be used for each paragraph as LaTeX2e's \thanks
% was not built to handle multiple paragraphs
%
%
%\IEEEcompsocitemizethanks is a special \thanks that produces the bulleted
% lists the Computer Society journals use for "first footnote" author
% affiliations. Use \IEEEcompsocthanksitem which works much like \item
% for each affiliation group. When not in compsoc mode,
% \IEEEcompsocitemizethanks becomes like \thanks and
% \IEEEcompsocthanksitem becomes a line break with idention. This
% facilitates dual compilation, although admittedly the differences in the
% desired content of \author between the different types of papers makes a
% one-size-fits-all approach a daunting prospect. For instance, compsoc 
% journal papers have the author affiliations above the "Manuscript
% received ..."  text while in non-compsoc journals this is reversed. Sigh.

\author{Marei Algarni and Ganesh Sundaramoorthi 
  \thanks{M.~Algarni and G.~Sundaramoorthi are with KAUST (King
    Abdullah University of Science \& Technology), Thuwal, Saudi
    Arabia. Email: \{marei.algarni, ganesh.sundaramoorthi\}@kaust.edu.sa } 
}

\IEEEtitleabstractindextext{%
\begin{abstract}
  We present \emph{SurfCut}, an algorithm for extracting a smooth,
  simple surface with an unknown 3D curve boundary from a noisy 3D
  image and a seed point. Our method is built on the novel observation
  that certain ridge curves of a function defined on a front
  propagated using the Fast Marching algorithm lie on the surface. Our
  method extracts and cuts these ridges to form the surface boundary.
  Our surface extraction algorithm is built on the novel observation
  that the surface lies in a valley of the distance from Fast
  Marching. We show that the resulting surface is a collection of
  minimal paths. Using the framework of cubical complexes and Morse
  theory, we design algorithms to extract these critical structures
  robustly.  Experiments on three 3D datasets show the robustness of
  our method, and that it achieves higher accuracy with lower
  computational cost than state-of-the-art.

\comment{Our method requires only a seed point, rather than a 3D
  boundary curve as in existing methods.}
\end{abstract}

% Note that keywords are not normally used for peerreview papers.
\begin{IEEEkeywords}
  Segmentation, surface extraction, minimal paths, computational
  topology, cubical complex, Morse-Smale complex
\end{IEEEkeywords}}

% make the title area
\maketitle

% To allow for easy dual compilation without having to reenter the
% abstract/keywords data, the \IEEEtitleabstractindextext text will
% not be used in maketitle, but will appear (i.e., to be "transported")
% here as \IEEEdisplaynontitleabstractindextext when the compsoc 
% or transmag modes are not selected <OR> if conference mode is selected 
% - because all conference papers position the abstract like regular
% papers do.
\IEEEdisplaynontitleabstractindextext
% \IEEEdisplaynontitleabstractindextext has no effect when using
% compsoc or transmag under a non-conference mode.

% For peer review papers, you can put extra information on the cover
% page as needed:
% \ifCLASSOPTIONpeerreview
% \begin{center} \bfseries EDICS Category: 3-BBND \end{center}
% \fi
%
% For peerreview papers, this IEEEtran command inserts a page break and
% creates the second title. It will be ignored for other modes.
\IEEEpeerreviewmaketitle

\IEEEraisesectionheading{\section{Introduction}\label{sec:introduction}}

\IEEEPARstart{M}{inimal} path methods \cite{cohen1997global}, built on
the Fast Marching algorithm \cite{sethian1996fast} (see also
\cite{tsitsiklis1995efficient}), have been widely used in computer
vision. They provide a framework for extracting continuous curves from
possibly noisy images. For instance, they have been used in edge
detection \cite{kaul2012detecting} and object boundary detection
\cite{mille2015combination}, mainly in interactive settings as they
typically require user defined seed points. Because of their ability
to provide continuous curves, robust to clutter and noise in the
image, generalizations of these techniques to extract the equivalent
of edges in 3D images, which form surfaces, have been attempted
\cite{benmansour2009single,ardon2005new}.  These methods apply to
extracting a surface with a boundary that forms a curve, possibly in
3D, which we call a \emph{free-boundary}. Extraction of surfaces with
free-boundary is important because many edges form these surfaces, and
edges are fundamental structures that are prevalent in images. Some
applications include medical datasets (e.g., lung fissures, walls of
heart ventricles) \cite{grady2010minimal} and scientific imaging
datasets (e.g., fault surfaces in seismic images, an important problem
in the oil industry) \cite{hale2013methods}. In
\cite{grady2010minimal} an alternative method to extract such
surfaces, based on the theory of minimal surfaces
\cite{sullivan1990crystalline}, is provided. However, existing
approaches to surface extraction for surfaces with free-boundary have
a limitation - they require the user to provide the boundary of the
surface or other user laborious input.

In this paper, we use the Fast Marching algorithm and techniques from
computational topology to create an algorithm for extracting the
boundary of a surface from a 3D image and a single seed point, and an
algorithm to extract the surface.  Our main idea is to use Fast
Marching to ``smooth'' a local (possibly noisy) likelihood map of the
surface in a way that is guaranteed to preserve locations of critical
structures, and then extract the structures with methods, built from
computational topology, that guarantee correct topology. We show that
the resulting structures correspond to the surface of interest, and
the surface is a collection of minimal paths. Our method is applicable
to any imaging modality, and can be used to extract any simple surface
with boundary from an image that contains noisy local measurements
(possibly an edge map) of the surface. We demonstrate the method on
two applications - fault extraction from seismic images, and lung
fissure extraction from CT.

\comment{
We validate our
algorithm on seismic images for extracting fault surfaces, which form
surfaces with free-boundaries. This has wide ranging applications in
the oil industry \cite{hale2012fault}. Although we validate our method
with such images, 
}

Our contributions are: {\bf 1.}~We introduce the first algorithm, to
the best of our knowledge, to extract a closed 3D space curve forming
the boundary of a surface from a single seed point.  It is based on
extracting critical structures from a distance produced by Fast
Marching. {\bf 2.}~We introduce a new algorithm, based on extracting a
critical structure of the FM distance, to extract a surface given its
boundary and a noisy image. It produces a topologically simple surface
whose boundary is the given space curve. The surface is shown to be
formed from minimal paths. Both boundary and surface extraction have
$O(N \log N)$ complexity, where $N$ is the number of pixels. {\bf 3.}
We provide a fully automated algorithm using the algorithms above to
extract all such surfaces from a 3D image. {\bf 4.} We test our method
on challenging datasets, and we quantitatively out-perform comparable
state-of-the-art in free-boundary surface extraction.

\subsection{ Related Work}

\subsubsection{Surface Extraction}
Active surface methods
\cite{caselles1997geodesic,yezzi1997geometric,chan2001active}, based
on level set methods \cite{osher1988fronts}, their convex counterparts
\cite{pock2008convex}, graph cut methods
\cite{boykov2001interactive,rother2004grabcut}, and other image
segmentation methods partition the image into volumes and the surfaces
enclose these volumes. These methods have been used widely in
segmentation. However, they are not applicable to our problem since we
seek a surface, whose boundary is a 3D curve, that does not enclose a
volume nor partition the image.

Our method uses the Fast Marching (FM) Method
\cite{sethian1996fast}. This method propagates an initial surface
(e.g., a seed point) in an image in the direction of the outward
normal with speed proportional to a function defined at each pixel of
the image. The end result is a distance function, which gives the
shortest path length (measured as a path integral of the inverse
speed) from any pixel to the initial surface. The method is known to
have better accuracy than discrete algorithms based on Dijkstra's
algorithm. Shortest paths from any pixel to the initial surface can be
obtained from the distance function \cite{cohen1997global} (see also
\cite{ulen2015shortest}). This has been used in 2D images to compute
edges in images. A limitation of this approach is that it requires the
user to input two points - the initial and ending point of the
edge. In \cite{kaul2012detecting}, the ending point is automatically
detected. These methods are not directly applicable to extracting a
surface forming an edge in 3D.

Attempts have been made to use minimal paths to obtain edges that form
a surface. In \cite{ardon2005new,ardon2007new}, minimal paths are used
to extract a surface edge with a cylindrical topology, a topology
different from our problem. The user inputs the two boundary curves
(in parallel planes) of the cylinder and minimal paths joining the two
curves are computed conveniently using the solution of a regularized
transport partial differential equation. Surface extraction with less
intensive user input was attempted in
\cite{benmansour2009single}. There, a patch of a sheet-like surface is
computed with a user provided seed point and a bounding box, with the
assumption that the patch slices the box into two pieces. The
algorithm extracts a curve that is the intersection of the surface
patch with the bounding box using the distance function to the seed
point obtained with Fast Marching. Once this boundary curve is
obtained, the patch is computed using \cite{ardon2007new}. The obvious
drawbacks of this method are that only a patch of the desired surface
is obtained, and a bounding box, which may be cumbersome to obtain,
must be given by the user.

Another approach to obtaining a surface along image edges from its
boundary is minimal surfaces \cite{grady2006computing,
  grady2010minimal}. The minimal weighted area simple surface
interpolating the boundary is obtained by solving a linear
program. Faster implementations for minimal surfaces are explored in
\cite{grady2010minimal}, using algorithms for the minimum cost network
flow problem (e.g., \cite{goldberg1997efficient, kovacs2015minimum,
  brunsch2015smoothed, ford2015flows}). This significantly speeds up
the approach, although it requires an initial surface, and the
algorithm is dependent on it.  The main drawback of minimal surfaces
is that the user must input the boundary of the surface, which our
method addresses. It is also computationally expensive as we show in
experiments.

\comment{It is argued that minimal surfaces are more
natural extensions of the 2D shortest path problem to 3D.}

An approach for surface extraction that does not require user input is
\cite{schultz2010crease}. There, a matrix based on the local smoothed
Hessian matrix of the likelihood is used to generate a ridge in the
image near the desired surface. Then surface normals based on the
matrix are computed, which are used to generate several surfaces. This
method is convenient since it is fully automated. This approach has
been tailored to seismic images for extracting fault surfaces
\cite{hale2013methods}, and it is the state-of-the-art in that
field. Our method also smooths the likelihood, but in a way that
preserves locations of critical structures, resulting in a more
accurate surface. Also, our extraction of the critical structures, by
using tools from computational topology, guarantees a simple surface
topology.

\comment{However, it is sensitive to
noise as we show in experiments.}

\subsubsection{Computational Topology}
Our method is a discrete algorithm and is based on the framework of
cubical complexes
\cite{kovalevsky1989finite,kaczynski2006computational}. This framework
allows for performing operations analogous to topological operations
in the continuum. It has been used for thinning surfaces in 3D based
on their geometry \cite{chaussard2009surface} to obtain skeletons (or
medial representations
\cite{siddiqi1999shock,sebastian2004recognition, siddiqi2008medial}) of
geometrical shapes. This theory guarantees correct topology of
extracted structures. Our novel algorithms use concepts from cubical
complex theory. In contrast to \cite{chaussard2009surface}, our method
is designed to robustly extract ridges of a \emph{function} or data
defined on a surface (defined by Fast Marching), rather than
geometrical properties of a surface.

Our method uses a topological construction called the \emph{Morse
  complex} \cite{zomorodian2009topology} from Morse theory to extract
ridges on a manifold. There is a large literature that aims to compute
the Morse complex and an extension called the Morse-Smale Complex,
from discrete data \cite{edelsbrunner2001hierarchical,
  edelsbrunner2003morse, gyulassy2008practical,
  gyulassy2012computing}. Roughly, these Morse complexes describe the
behavior of the gradient flow of a function within regions. We use
cubical complexes to construct the Morse complex since they are
naturally suited for image data, defined on grids. Conceptually, our
algorithm for the Morse complex appears similar to
\cite{gyulassy2008practical}, even though the technical details and
notions of discrete topology are different. Our contribution is not to
provide another algorithm for the Morse complex, but to use the Morse
complex for the purpose of free-boundary surface extraction from
images.

\subsubsection{Extensions to Conference Paper}
A preliminary version of this manuscript has appeared in
\cite{algarni2016surfcut}. In this version, we have derived the
theoretical foundations: 1) we provide analytical arguments to show
that our algorithms correctly capture the surface of interest by
relating it to constructions in Morse theory, and 2) we relate the
surface extracted to minimal paths by showing the surface is formed by
collections of minimal paths, thus inheriting known regularity
properties from such paths. We extended our ridge extraction algorithm
to better deal with extraneous structures. We have also extended our
experiments to more datasets, including medical data, and extended the
quantitative comparison to minimal surface approaches.

\subsection{Overview of Method}

\def\fHeightO{1.2in}
\begin{figure}
  \centering
  {\scriptsize
    \begin{tabular}{c@{\hspace{0.03in}}c}
      Front Propagation & Ridge Extraction \\
      \includegraphics[totalheight=\fHeightO]{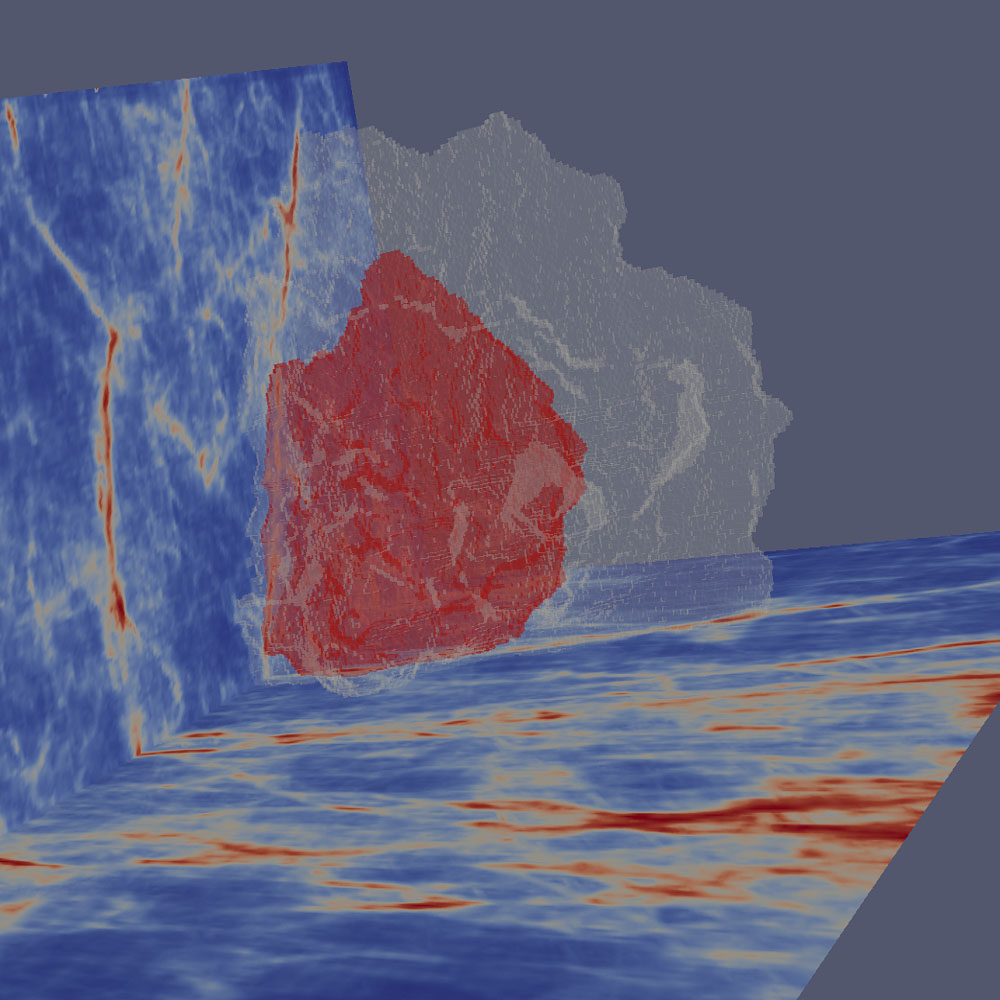} &
      \includegraphics[totalheight=\fHeightO]{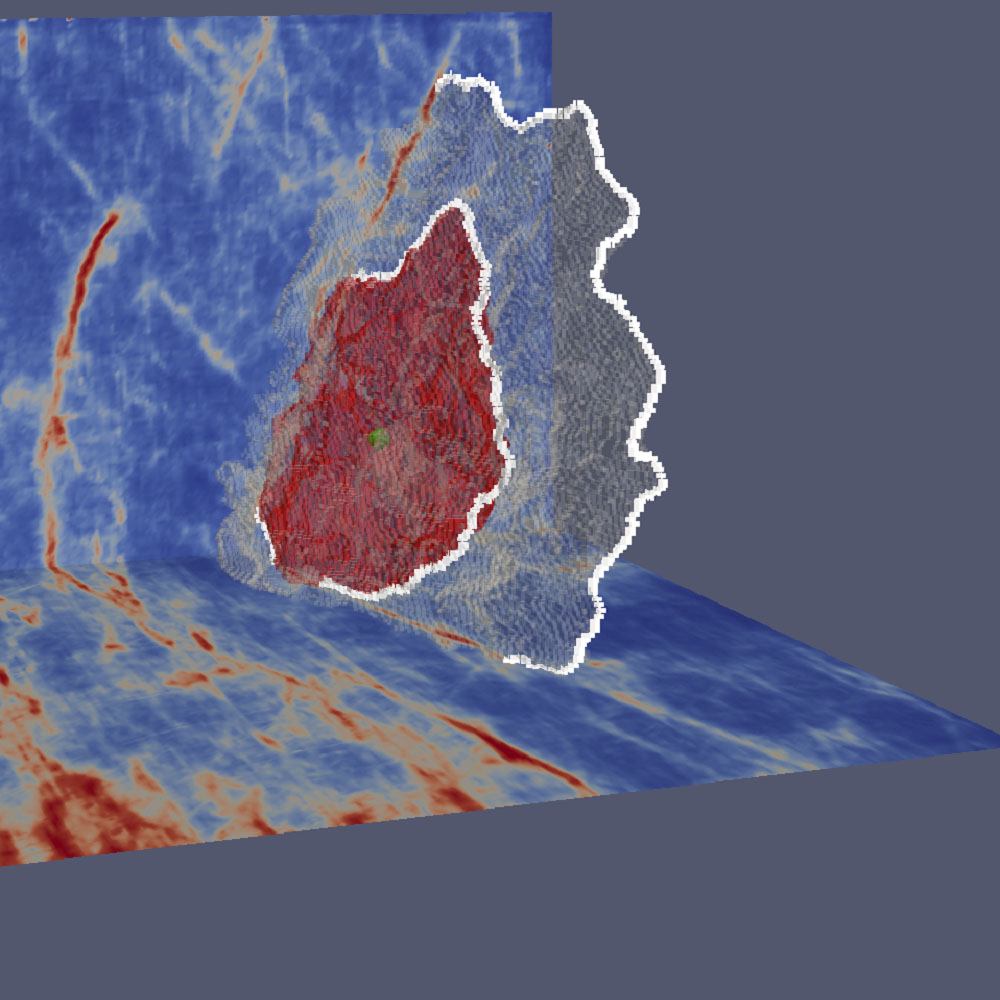} 
    \end{tabular}\\
    \begin{tabular}{c@{\hspace{0.03in}}c}
      Boundary Extraction & Surface Extraction \\
      \includegraphics[clip,trim=0 10 0 20,totalheight=\fHeightO]{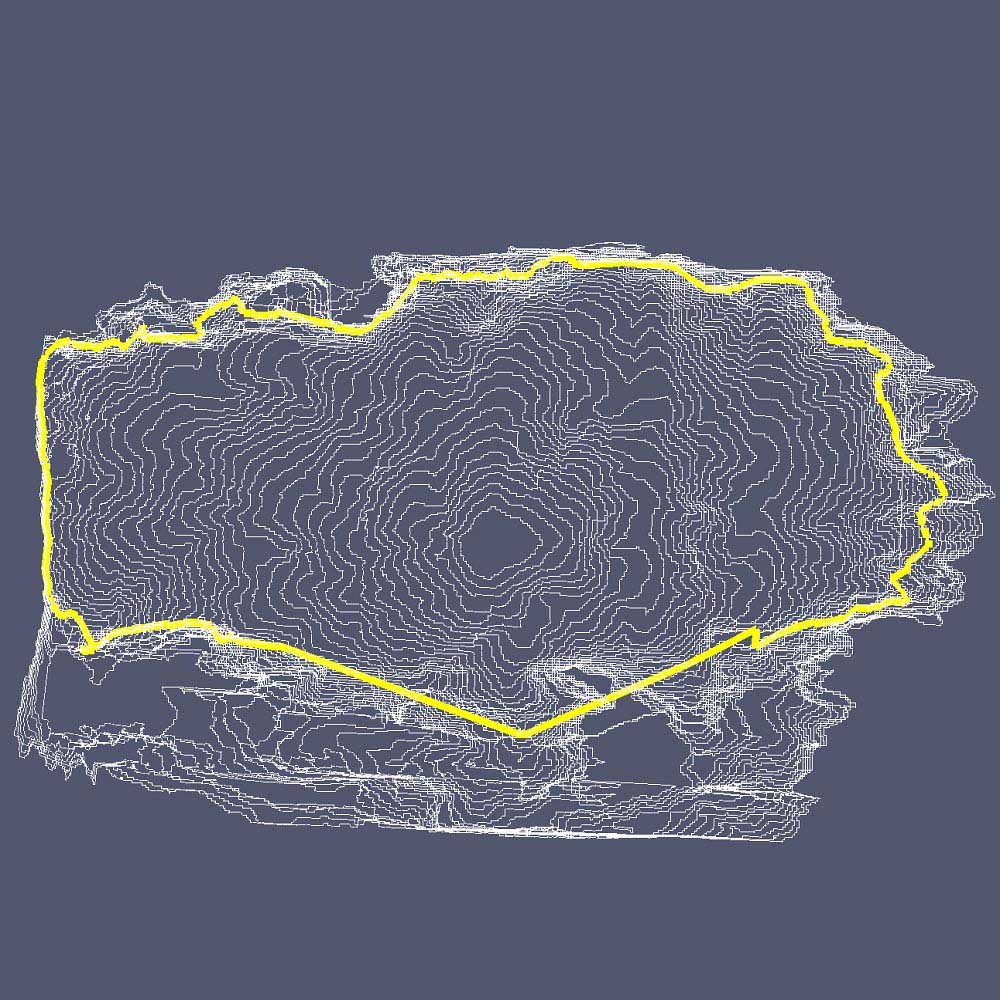} &
      \includegraphics[totalheight=\fHeightO]{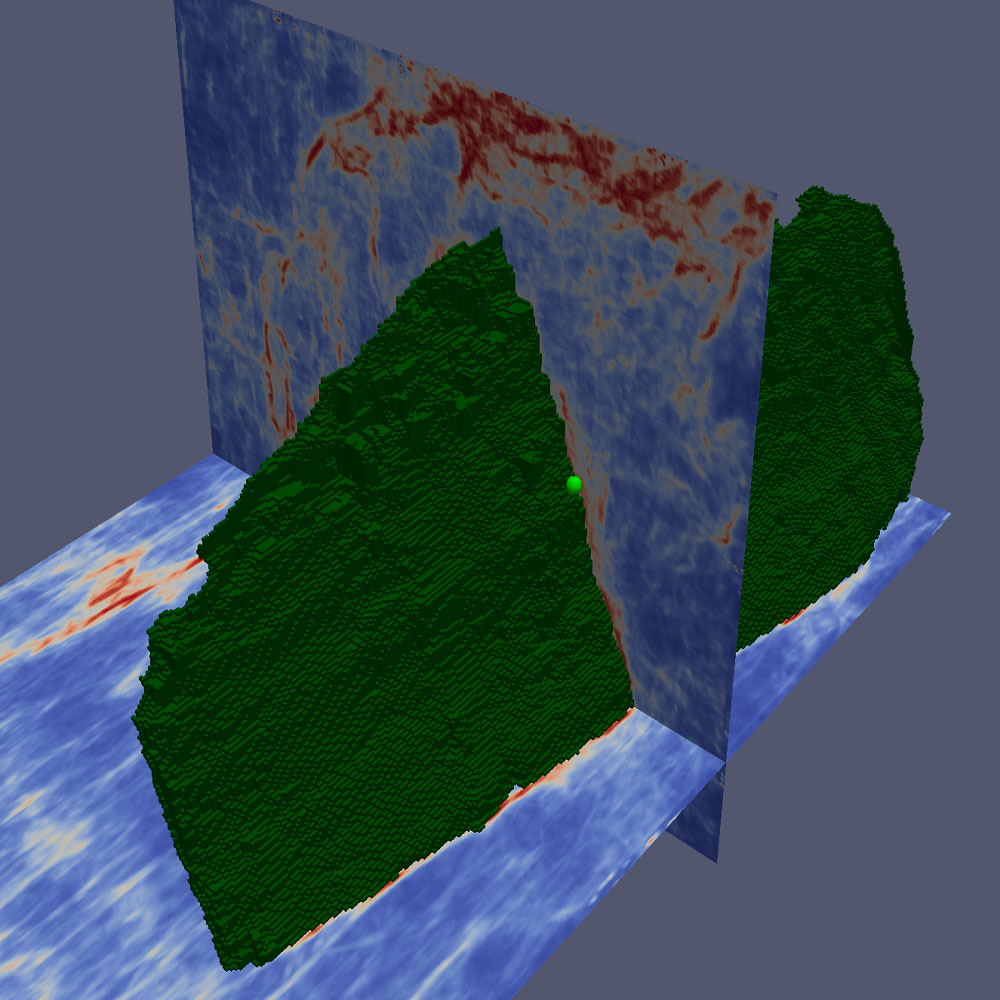} 
    \end{tabular}\\
}
\caption{ {\bf Overview of SurfCut}. Starting from a seed point on the
  surface, a front is propagated (top, left), ridges are extracted
  (top, right), a cut is performed forming the boundary (bottom,
  left), and the surface is extracted..}
\label{fig:overview}
%\vspace{-0.75cm}
\end{figure}

Our algorithm consists of the following steps (see
Figure~\ref{fig:overview}): {\bf i) Weighted Distance to Seed Point
  Computation}: From a given seed point on the surface, the Fast
Marching algorithm is used to propagate a front to compute shortest
path distance from any point in the image to the seed point
(Section~\ref{subsec:fast_marching}). {\bf ii) Ridge Curve
  Extraction}: At samples of the propagating front, the ridge curves
of the Euclidean path distance of minimal paths to the seed point are
computed (Section~\ref{subsec:curve_extraction}). These curves lie on
the surface of interest. {\bf iii) Surface Boundary Detection}: At
snapshots, a graph is formulated from curves from the previous step,
and is cut along locations where the Euclidean distance between points
on adjacent curves are small, resulting in the outer boundary of the
surface (Section~\ref{subsec:cut}). {\bf iv) Surface Extraction}:
Finally, the desired surface with boundary obtained from the last step
is computed (Section~\ref{subsec:surface_extraction}).

Our method requires notions from topology, which we review next. We
then proceed to our algorithm.

\section{Topological Preliminaries}

In this section, we present theory and notions from topology and
computational topology that will be relevant in subsequent sections in
designing and justifying our novel algorithms for surface extraction.

\subsection{Topological Structures}

Our algorithms extract topological structures from functions defined
on the image domain and manifolds embedded in the image. We give
formal definitions for these topological structures, \emph{ridges} and
\emph{valleys}, and then the Morse complex.

\subsubsection{Critical Structures}
Intuitively, ridge points of a function defined on a manifold
correspond to local maxima when restricted to sub-spaces of directions
rather than the whole space of possible directions. Similarly, valley
points correspond to local minima of a function when restricted to
sub-spaces of directions. We now give more formal definitions.  We
consider functions $h : M \subset \R^n \to \R$, defined on a $n-1$
dimensional manifold. For a point $x\in M$, we denote $T_x M$ to be
the tangent space of $M$ at $x$, which consists of all valid
directions at the point $x$ on $M$. We first define the \emph{critical
  points} of $h$ as the points $p$ on $M$ where the gradient vanishes,
i.e., $\nabla h(p) = 0$. Note that the gradient refers to the
\emph{intrinsic} gradient $\nabla h(x)\in T_x M$, i.e., it is defined
by the relation $d h(x)\cdot v = \nabla h(x) \cdot v$ for all
$v\in T_xM$ where $d h(x)\cdot v$ denotes the directional derivative
of $h$ at $x$ and the right hand side is the usual Euclidean dot
product. Ridges and valleys are formally defined by
\cite{eberly1994ridges} as follows.
 
\begin{defn}[Ridge and Valley] \label{defn:ridge}
  Let $h : M \subset \R^n\to \R$ where $M$ is an $n-1$ dimensional
  manifold. Let $\lambda_1 \leq \cdots \leq \lambda_{n-1}$ and
  $e_1, \ldots, e_{n-1} \in T_x M$, be eigenvalues and eigenvectors of
  the Hessian $Hh(x)$ at $x\in M$. Let $k<n-1$.
  \begin{itemize}
  \item A point $x\in M$ is a {\bf $n-1-k$ dimensional ridge
      point} of $h$ if $\lambda_{k} < 0$ and $\nabla h(x) \cdot
    e_m = 0$ for $m=1,\ldots, k$.
  \item A point $x\in M$ is a {\bf $n-1-k$ dimensional valley
      point}  of $h$ if $\lambda_{n-k} > 0$ and $\nabla h(x) \cdot
    e_m = 0$ for $m=n-k,\ldots, n-1$.
  \end{itemize}
\end{defn}

The conditions above ensure zero derivatives in a subspace of
directions, and the conditions on the Hessian ensure the function is
concave (for ridges) and convex (for valleys) in the appropriate
subspace. Differentiability in the definition is not needed, and there
are more generic conditions for continuous functions, e.g., that the
function value is higher at the ridge point than other points in a
sub-neighborhood corresponding to a subspace of directions.

\begin{figure}
  \centering
  \includegraphics[totalheight=1.6in]{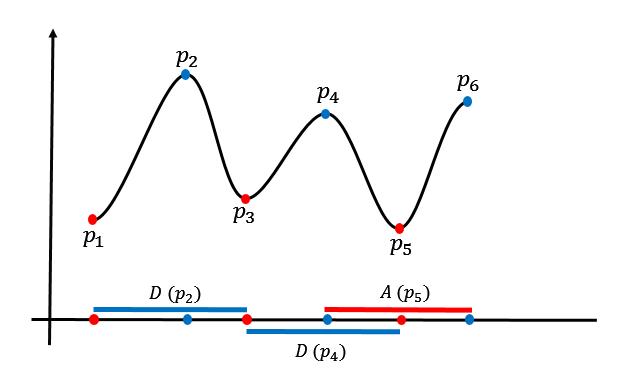}
  \caption{Illustration of some ascending and descending manifolds of a
    one-dimensional function.}
  \label{fig:morse_manifolds}
\end{figure}

\comment{
As that
literature can be fairly technical, we convey the main ideas without
excessive technicalities.
}

\subsubsection{Morse Complex}
Our algorithms for extracting the previous structures do not directly
use the differential definitions above, as they are not robust to
noise in the image. We will design our algorithms based on topological
constructions in \emph{Morse theory} \cite{milnor2016morse}. We
introduce basic notions from that literature, and the exact relation
to ridges and valleys will be left to subsequent sections when we
specify our algorithms. We define the \emph{ascending} and
\emph{descending} manifolds of a critical point as all points on a
path along the negative (positive, respectively) gradient direction
that leads to the given critical point. A path on a manifold $M$ is a
mapping $\gamma : [0,\infty) \to M$.  A gradient path is specified by
the differential equation $\gamma'(t) = \pm \nabla h(\gamma(t))$,
where $h$ is some function defined on $M$. Formally, the ascending and
descending manifolds of a critical point $p$ of $h$ are defined as
follows \cite{zomorodian2009topology}.
\begin{defn}[Ascending and Descending Manifolds]
  Let $h : M\to \R$ be a function and $p$ be a critical point of
  $h$. The {\bf ascending manifold} at $p$ is
  \begin{multline}
    A(p) = \{ x\in M \,:\, \mbox{ there exists $\gamma : [0,\infty)\to M$
      such that } \\ 
    \gamma(0) = x, \gamma(\infty) = p, \gamma'(t) = -\nabla
    h(\gamma(t)) \}.
  \end{multline}
  The {\bf descending manifold} at $p$ is
  \begin{multline}
    D(p) = \{ x\in M \,:\, \mbox{ there exists $\gamma : [0,\infty)\to M$
      such that } \\ 
    \gamma(0) = x, \gamma(\infty) = p, \gamma'(t) = \nabla
    h(\gamma(t)) \}.
  \end{multline}
\end{defn}
For instance, consider the function $h : \R^2 \to \R$ defined by
$h(x,y) = x^2+y^2$. Its ascending manifold at the critical point $0$
is $A(0) = \R^2$ as all negative gradient paths lead to the origin. Note also
that $D(0) = 0$. See Figure~\ref{fig:morse_manifolds} for a
visualization in the one-dimensional case.

The ascending manifolds of local minima decompose the manifold $M$
into disjoint sets. Similarly, the descending manifolds of all local
maxima decomposes the manifold $M$ into disjoint open sets. The latter
decomposition forms the \emph{Morse complex} of $h$, and the former is
the Morse complex of $-h$. We will use the Morse complex in future
sections.

\subsection{Cubical Complexes Theory}

We now introduce notions from cubical complex theory, which is the
basis for our algorithms in future sections. This theory defines
topological notions (and computational methods) for discrete data that
are analogous to topological notions in the continuum. The notion of
\emph{free pairs}, i.e., those parts of the data that can be removed
without changing topology of the data, is pertinent to our
algorithms. Since the algorithms we define require the extraction of
lower dimensional structures (ridge curves from surfaces, and valley
surfaces from volumes), it is important that the algorithms are
guaranteed to produce lower dimensional structures with correct
topology. The theory of cubical complexes (e.g.,
\cite{kaczynski2006computational,chaussard2009surface}) guarantees
such lower dimensional structures are generated with homotopy
equivalence to the original data.

Our data (either a curve, surface or volume) will be represented
discretely by a cubical complex. A cubical complex consists of basic
elements, called \emph{faces}, of $d$-dimensions, e.g., points
(0-faces), edges (1-faces), squares (2-faces) and cubes
(3-faces). Formally, a $d$-\emph{face} is the cartesian product of $d$
intervals of the form $(a,a+1)$ where $a$ is an integer. We can now
define a cubical complex (see Fig.~\ref{fig:Cubical_Complex}) as
follows.

\begin{defn}
  A $d$-dimensional {\bf cubical complex} is a finite set of faces of
  $d$-dimensions and lower such that every sub-face of a face in the
  set is contained in the set.
\end{defn}

\def\fHeightComplexFree{0.7in}
\begin{figure}
  \centering
  {\scriptsize
    \begin{tabular}{c@{\hspace{0.03in}}c@{\hspace{0.03in}}c@{\hspace{0.03in}}c}
      cubical complex & not cubical complex & free 1D faces & free 2D faces \\
      \includegraphics[totalheight=\fHeightComplexFree]{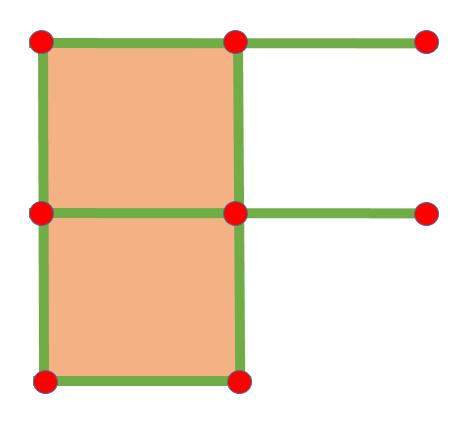} &
      \includegraphics[totalheight=\fHeightComplexFree]{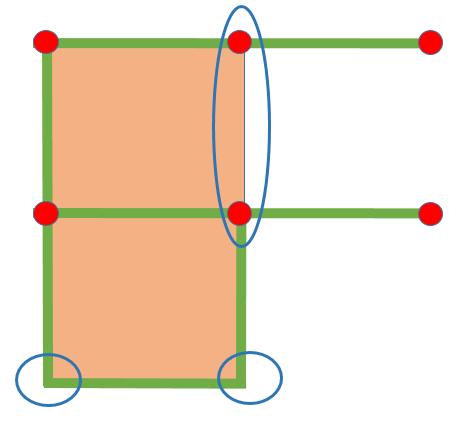} &
      \includegraphics[totalheight=\fHeightComplexFree]{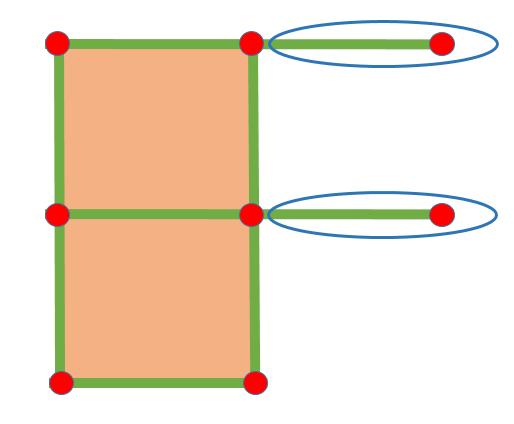} &
      \includegraphics[totalheight=\fHeightComplexFree]{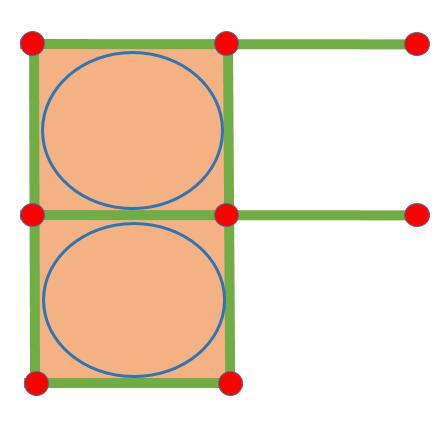}
    \end{tabular}\\
}
\caption{[Left two images]: Illustration of faces that
  form a cubical complex (left) and faces that do not form a
  cubical complex (0,1,2-faces are marked in red, green and orange).
  The missing 1-face and 0-faces circled in blue on the right are
  not in the complex, but they are sub-faces of other faces in the
  set. [Right two images]: Example of 1-face, 0-face free pairs,
  and 2-face, 1-face free pairs (circled in blue).}
\label{fig:Cubical_Complex}
\end{figure}

Our algorithms consist of simplifying cubical complexes by an
operation that is analogous to the continuous topological operation
called a \emph{deformation retraction}, i.e., the operation of
continuously shrinking a topological space to a subset. For example, a
punctured disk can be continuously shrunk to its boundary
circle. Therefore, the boundary circle is a deformation retraction of
the punctured disk, and the two are said to be \emph{homotopy
  equivalent}. We are interested in an analogous discrete operation,
whereby faces of the cubical complex can be removed while preserving
homotopy equivalence.  \emph{Free faces} (see
Fig.~\ref{fig:Cubical_Complex}), defined in cubical complex theory,
can be removed simplifying the cubical complex, while preserving a
discrete notion of homotopy equivalence. These are defined formally
as:

\begin{defn}
  Let $X$ be a cubical complex, and let $f, g \subset X$.
  
  $g$ is a {\bf proper face} of $f$ if $g\neq f$ and $g$ is a sub-face of $f$.

  $g$ is {\bf free} for $X$, and the pair $(g, f)$ is a {\bf free
    pair} for $X$ if $f$ is the only face of $X$ such that $g$ is a
  proper face of $f$. If $g$ is not free, it is called {\bf isthmus}.
\end{defn}

The definition provides a constant-time operation to check whether a
face is free. For example, if a cubical complex $X$ is a subset of the
3-dim complex formed from a 3D image grid, a 2-face is known to be
free by only checking whether only one 3-face containing the 2-face is
contained in $X$.

In the next section, we construct cubical complexes for the evolving
front produced from the Fast Marching algorithm, and retract this
front by removing free faces to obtain a lower dimensional ridge curve
that lies on the surface that we wish to obtain. We also retract a
volume to obtain a valley, which forms the surface of interest.

\section{Surface Boundary Extraction}

In this section, we present our algorithm for extracting the boundary
curve of a free-boundary surface from a possibly noisy local
likelihood map of the surface defined in a 3D image. The algorithm
consists of retracting the fronts (closed surfaces) generated by the
Fast Marching algorithm to obtain ridge curves on the surface of
interest. We therefore review Fast Marching in the first sub-section
before defining our novel algorithms for surface extraction.

\subsection{Fronts Localized to the Surface With Fast Marching}

\label{subsec:fast_marching}

We use the Fast Marching Method \cite{sethian1996fast} to generate a
collection of fronts that grow from a seed point and are localized to
the surface of interest. We denote by
$\phi : \Omega \subset \R^3 \to \mathbb{R}^+$, a possibly noisy
function defined on each pixel of the given image grid. It has the
property that (in the noiseless situation) a small value of $\phi(x)$
indicates a high likelihood of the pixel $x$ belonging to the surface
of interest.

Fast Marching solves, with complexity $O(N\log N)$ where $N$ is the
number of pixels, a discrete approximation to $U : \Omega \subset \R^3
\to \mathbb{R}^+$, the solution of the eikonal equation:
\begin{equation} \label{eq:eikonal}
  \begin{cases}
    |\nabla U(x)| = \phi(x) & x\in \Omega \backslash \{ p \}
    \\
    U(p) = 0 &
  \end{cases}
\end{equation}
where $\nabla$ denotes the spatial gradient (partials in all
coordinate directions), and $p \in \Omega$ denotes an initial seed
point. For our situation, $p$ will be required to lie somewhere on the
surface of interest. The function $U$ at a pixel $x$ is the weighted
minimum path length along any path from $x$ to $p$, with weight
defined by $\phi$. $U$ is called the weighted distance. Minimal paths
can be recovered from $U$ by following the gradient descent of $U$
from any $x$ to $p$. A front (a closed surface, which we hereafter
refer to as a front to avoid confusion with the free-boundary surface)
evolving from the seed point at each time instant is equidistant (in
terms of $U$) to the seed point and is iteratively approximated by
Fast Marching. As noted by \cite{cohen1997global}, a positive constant
added to the right hand side of \eqref{eq:eikonal} may be used to
induce smoothness of paths. The front, evolving in time, moves in the
outward normal direction with a speed proportional to
$1/\phi(x)$. Fronts can be alternatively obtained by thresholding $U$
at the end of Fast Marching. The solution of \eqref{eq:eikonal} is
continuous, and can be approximated as smooth since the solution is a
viscosity solution \cite{crandall1983viscosity}, and so a limit of
smooth functions.

\subsection{Contours on the Surface from Front Ridges}

\label{subsec:curve_extraction}

If we choose the seed point $p$ to be on the free-boundary surface of
interest, the front generated by Fast Marching will travel the fastest
when $\phi$ is small (i.e., along the surface) and travel slower away
from the surface, and thus the front is elongated along the surface at
each time instant (see Figure~\ref{fig:fmm-front}). Our algorithm is
based on the following observation: points along the front at a time
instant that have traveled the furthest (with respect to Euclidean
path length), i.e., traveled the longest time, compared to nearby
points, lie on the surface of interest. This is because points
traveling along locations where $\phi$ is low (on surface) travel
the fastest, tracing out paths that have large arc-length.

This property can be more easily seen in the 2D case (see
Figure~\ref{fig:fmm-front}): suppose that we wish to extract a curve
rather than a surface from a seed point, using Fast Marching to
propagate a front. At each time, the points on the front that travel
the furthest with respect to Euclidean path length lie on the 2D curve
of interest. This has been noted in 2D by \cite{kaul2012detecting}. In
3D (see Figure~\ref{fig:fmm-front}), we note this generalizes to
\emph{ridge points} of Euclidean minimal path length $U_E$ (defined
next) are on the surface of interest. The Euclidean minimal path
length $U_E$ is defined as follows. Define a front
$F = \partial \{ x\in \Omega \,:\, U(x) \leq D \}$ where $\partial$
denotes the boundary operator. The function $U_E : F\to \R^+$ is such
that $U_E(x) $ is the Euclidean path length of the minimal weighted
path (w.r.t to the distance $U$) from $x$ to $p$.

Computationally, $U_E$ is easy to obtain by keeping track of another
function $U_E : \Omega \to\R^+$ in Fast Marching for $U$. One follows
the ordered traversal of points according to Fast Marching in solving
for $U$, and simultaneously updates the value of $U_E$ based on a
discretization of \eqref{eq:eikonal} with $\phi$ chosen equal to
$1$. This gives the Euclidean length of minimal paths determined from
$U$.

The fact that ridge points lie on the surface is visualized in the
right of Figure~\ref{fig:fmm-front}. Points on the intersection of the
surface and the front are such that in the direction orthogonal to the
surface, the minimal paths have Euclidean lengths that decrease.  This
is because $\phi$ becomes large in this direction, thus minimal paths
travel slower in this region, so they have lower Euclidean path
length. Along the surface, at the points of intersection of the
surface and front, the path length may increase or decrease, depending
on the uniformity of $\phi$ on the surface. This implies points on the
intersection of the front and surface are ridge points of $U_E | F$.

\def\fHeightFront{1.0in}
\begin{figure}
\centering
\includegraphics[clip,trim=90 0 0 10,totalheight=\fHeightFront]{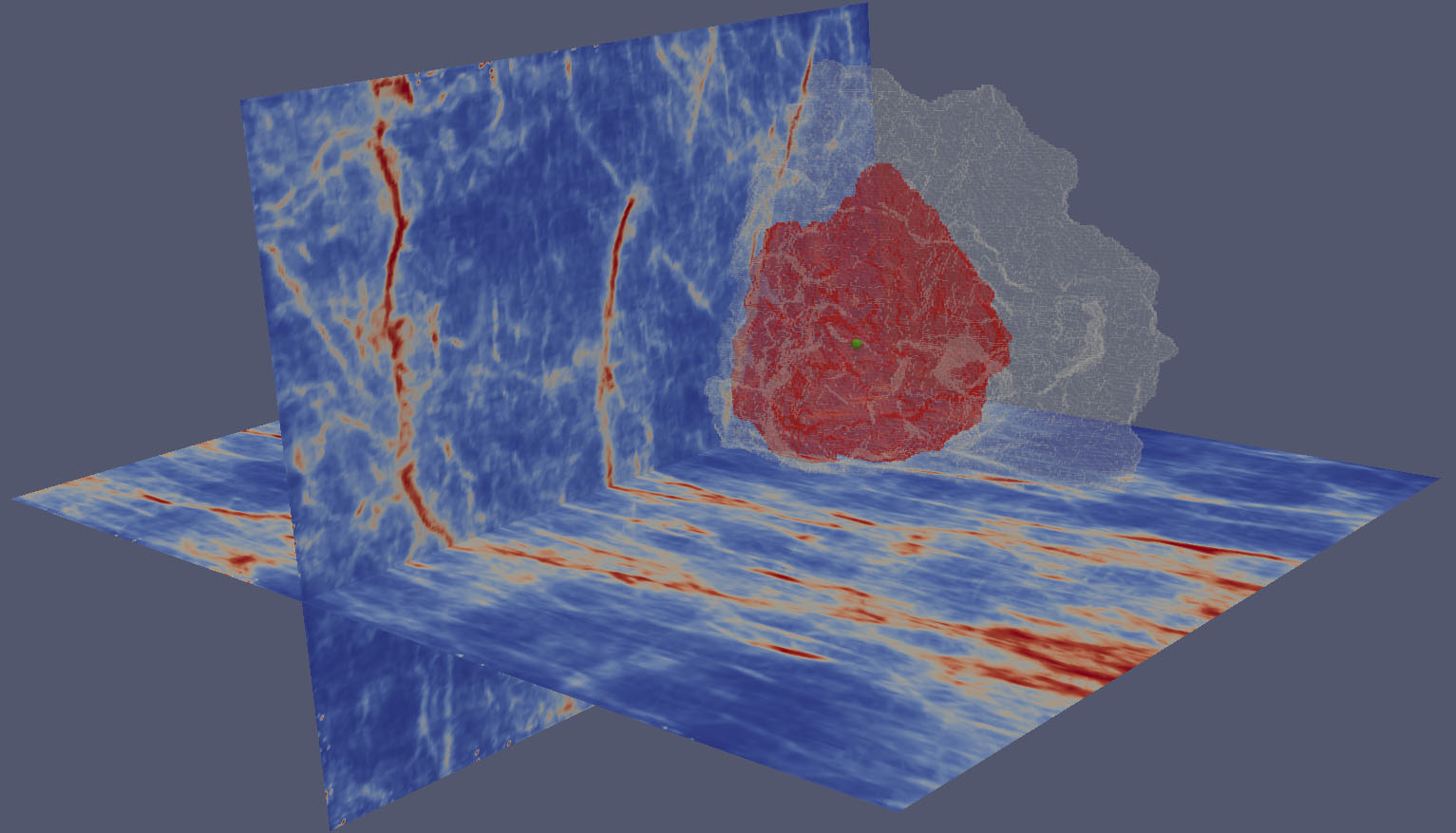}
\includegraphics[clip,trim=25 0 0 0,totalheight=\fHeightFront]{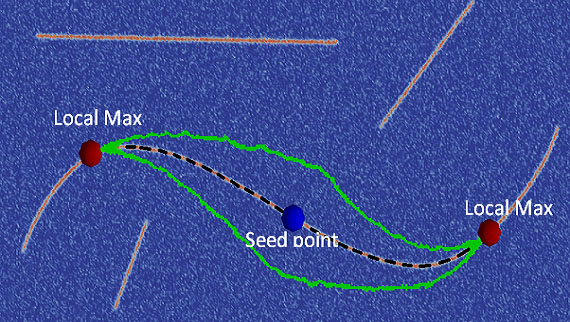} \\ \vspace{1mm}
\includegraphics[clip,trim=50 0 0 0,totalheight=\fHeightFront]{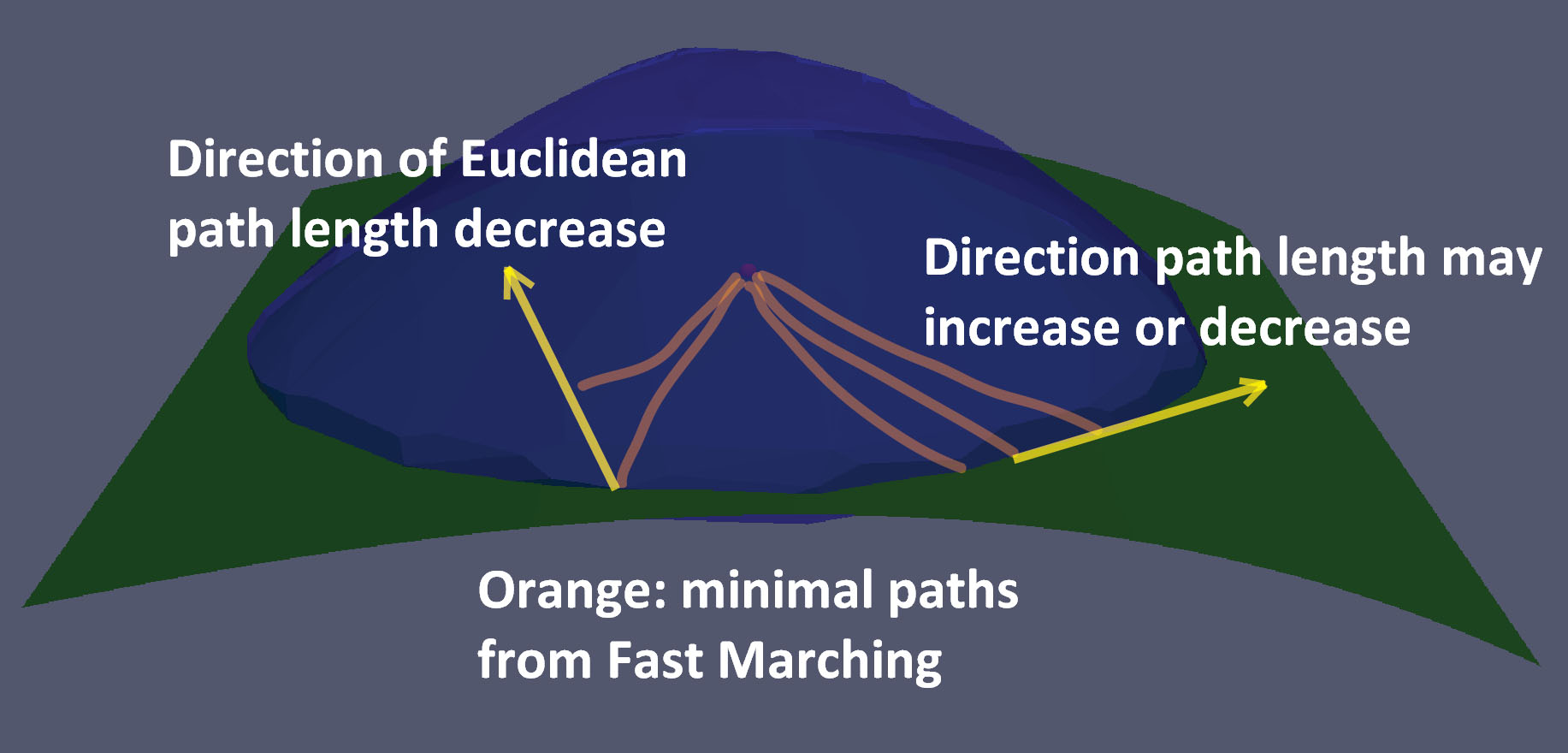}
\caption{[Top left]: The evolving Fast Marching (FM) front at two
  different time instances in orange and white. The function $1/\phi$
  evaluated at $x$ is the likelihood of surface passing through $x$,
  and is visualized (red - high values, and blue - low values).  The
  fronts are localized near the surface of interest. Ridge points of
  $U_E$, the Euclidean path length of minimal weighted paths, lie on
  the surface of interest.  [Top right]: This is more easily seen in 2D
  where the local maxima of the Euclidean path length (red balls) of
  minimal paths (dashed) are seen to lie on the curve of interest. The
  green contour is a snapshot of the front. [Bottom]: Schematic in 3D
  with front (blue), surface (green), and several minimal paths
  (orange). Orthogonal to the surface where the surface intersects the
  front, the Euclidean path length decreases. Along the surface, the
  path lengths may increase or decrease. This indicates ridge points.}
\label{fig:fmm-front}
\end{figure}

\comment{
points on the surface being
ridge points of $U_E$ defined on the front $F$ produced by Fast
Marching.
}

We now give an analytic argument that common points to the surface and
the front are ridge points.
\begin{prop} \label{prop:ridge_surface}
  Suppose $S\subset \Omega$ is a smooth surface and $p\in S$. Consider
  the front $F = \{ U = D \}$ and suppose $x\in S \cap F$ then $x$ is
  a ridge point of $U_E : F \to \R$, where $U_E(y)$ is defined as the
  Euclidean length of the minimal path from $y$ to $p$. We assume
  that locally $\phi$ is larger on $S$ than points not on $S$.
\end{prop}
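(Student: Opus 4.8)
The plan is to verify the two defining conditions of a ridge point in Definition~\ref{defn:ridge} for the function $h=U_E|_F$ on the $2$-dimensional front $F$, so here $n=3$ and $n-1=2$, and the relevant case is $k=1$, for which a ridge point is $n-1-k=1$ dimensional (a ridge curve). Letting $\lambda_1\le\lambda_2$ and $e_1,e_2\in T_xF$ be the eigenpairs of the Hessian of $U_E|_F$ at $x$, I must exhibit $\lambda_1<0$ together with $\nabla(U_E|_F)(x)\cdot e_1=0$. Geometrically $e_1$ is the direction inside $F$ transverse to the intersection curve $S\cap F$, so the claim amounts to showing that $U_E|_F$ is critical and concave (a transverse local maximum) as one moves across $S$ within the front.

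First I would extract a workable expression for $U_E$ on $F$ from the eikonal equation. Parametrizing a minimal weighted path from $p$ outward by Euclidean arclength $s$ along the characteristic direction $\nabla U/|\nabla U|$, the relation $|\nabla U|=\phi$ gives $\der{U}{s}=\phi(\gamma(s))$, so for $x\in F=\{U=D\}$ the Euclidean length $U_E(x)$ is determined implicitly by $\int_0^{U_E(x)}\phi(\gamma(s))\ud s = D$. To leading order the minimal path from $x$ to $p$ is the straight characteristic segment, and bending of the path off this segment changes its Euclidean length only at second order; this reduces the computation of $U_E$ along a ray leaving $p$ to the study of a line integral of $\phi$ along that ray, so the question becomes how this line integral varies as the exit direction of the ray rotates transversally across $S$.

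Next I would check the first-order (gradient) condition. Since $p\in S$ and, by hypothesis, $\phi$ is locally extremal across $S$ (being larger on $S$ than at neighboring points not on $S$), the transverse derivative of $\phi$ vanishes along $S$. Differentiating the line-integral expression for $U_E$ with respect to the transverse rotation angle and evaluating at $x\in S\cap F$ then produces a factor equal to this vanishing transverse derivative, giving $\nabla(U_E|_F)(x)\cdot e_1=0$. Note this step uses only criticality of $\phi$ across $S$, not the sign of its transverse second derivative, so it is robust.

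The main obstacle is the second-order (Hessian) condition, which is where the hypothesis on $\phi$ must be used with care. Differentiating the line-integral expression for $U_E$ twice in the transverse direction reduces $\lambda_1$ to a weighted line integral of the transverse second derivative of $\phi$ along the characteristic ray through $x$. The delicate point is pinning down the sign: one must verify that the hypothesis forces $\lambda_1<0$ (the ridge/concavity condition, a transverse local maximum of $U_E|_F$) rather than the opposite valley condition, and must simultaneously control the second-order path-bending and the curvature-of-$F$ corrections that were discarded in the first-order reduction, so that these do not overturn the sign. Once $\lambda_1<0$ is secured, both conditions of Definition~\ref{defn:ridge} hold at $x$ and the proof is complete.
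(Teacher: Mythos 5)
There is a genuine gap, and it sits exactly where you flagged it: your outline reduces the proposition to showing $\lambda_1<0$ for the Hessian of $U_E|_F$, and then states that this sign ``must be verified'' while simultaneously controlling second-order path-bending and curvature-of-$F$ corrections---but never carries out that verification. That sign is the entire content of the proposition, so what you have is a reduction, not a proof. The paper's own argument never touches Definition~\ref{defn:ridge}'s differential conditions. It freezes a local model in which $S$ is flat in a neighborhood $V_x$ and $\phi$ is piecewise constant, equal to $K_2$ on $S$ and $K_1$ off $S$ with $K_1>K_2>0$; minimal paths off $S$ are then straight segments meeting $S$ at some point $q$ on the minimal path from $x$ to $p$. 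For $y=x+\varepsilon N$, the point $z$ on the segment from $q$ to $y$ with $U(z)=U(x)=D$ satisfies $U_E(z)=\frac{K_2}{K_1}\ell+\mbox{len}(\gamma_{q,p})<\ell+\mbox{len}(\gamma_{q,p})=U_E(x)$, and symmetrically on the $-N$ side, giving a \emph{strict transverse local maximum} of $U_E|_F$ at $x$---the ``more generic'' continuous characterization of a ridge that the paper invokes right after Definition~\ref{defn:ridge}. Note that in this model $U_E|_F$ has a corner at $S\cap F$ (the decrease is first order in the transverse displacement, since $\ell$ scales with $\varepsilon$), so the smooth conditions you set out to verify---$\nabla(U_E|_F)(x)\cdot e_1=0$ and a negative Hessian eigenvalue---need not even hold; your premise that path-bending only enters at second order also fails here, because paths refract at $S$ and this is a first-order effect.

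Worse, your first-order step is built on the wrong orientation of the hypothesis. You read ``$\phi$ is larger on $S$'' literally and use it to conclude that the transverse derivative of $\phi$ vanishes along $S$ (a smooth maximum of $\phi$ across $S$). But if $\phi$ were maximal across $S$, the weighted cost per unit Euclidean length would be \emph{highest} along the surface ray, so the implicit relation $\int_0^{U_E(x)}\phi\,\mathrm{d}s=D$ would be met soonest there, making $U_E|_F$ transversally \emph{minimal} at $x$---a valley, the opposite of the claimed conclusion. Had you pushed your line-integral computation through, the sign would have come out $\lambda_1>0$. The printed hypothesis is a slip: throughout the paper small $\phi$ means high surface likelihood, Proposition~\ref{prop:surface_valley} explicitly assumes $\phi$ is low on $S$, and the paper's proof of this very proposition takes $K_1>K_2$ with $K_2=\phi$ on $S$. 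So beyond the admitted unfinished Hessian estimate, your framework assumes a smooth critical point of $\phi$ across $S$ that the intended local model (discontinuous $\phi$, smaller on $S$) does not provide, and with your reading the argument would disprove rather than prove the statement.
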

\begin{proof}
  Let $x\in S \cap F$ and let $N$ be a normal vector to $S$ at $x$. We
  choose a neighborhood $V_x \subset \Omega$ around $x$ so that $S$ is
  approximately flat and  $\phi$ is approximated as 
  \[
    \phi(x) = 
    \begin{cases}
      K_1 & x\notin S \cap V_x \\
      K_2 & x \in S \cap V_x
    \end{cases},
  \]
  where $K_1 > K_2 >0$, which are constants. Let us consider a point
  $y = x + \varepsilon N$, where $\varepsilon>0$ is small, and the
  minimal path from $y$ to $p$ (see
  Figure~\ref{fig:ridge_prf_schematic}).
  \begin{figure}
    \centering
    {\scriptsize
      \includegraphics[totalheight=1.1in]{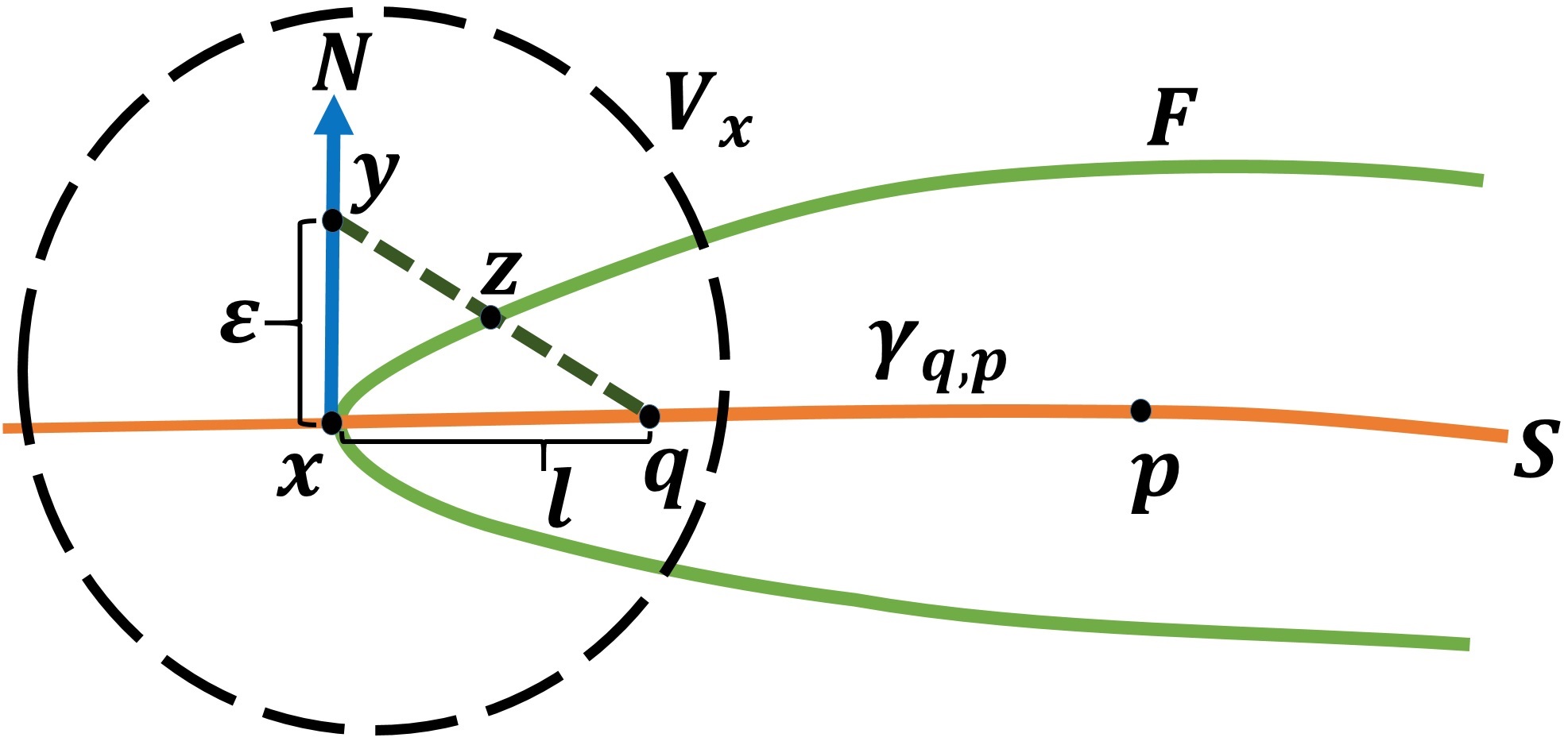}
    }
    \caption{Schematic of quantities in the proof of
      Proposition~\ref{prop:ridge_surface}.}
    \label{fig:ridge_prf_schematic}
    \comment{ that points on the
      front that lie on the surface are ridge points of Euclidean path
      length}
  \end{figure}
  We note that minimal paths within $V_x \backslash S$ will be
  straight lines as $\phi$ is uniform in that region. For
  $\varepsilon>0$ small enough, we can find $q \in S$ on the minimal
  path from $x$ to $p$ so that the minimal path from $y$ to $p$ is the
  straight line path from $y$ to $q$ appended to the minimal path from
  $q$ to $p$. We note that if we let $\ell = |x-q|$ then
  \[
    U(x) = U(q) + K_2 \ell.
  \]
  Also, 
  \[
    U(y) = U(q) + K_1 \sqrt{ \ell^2 + \varepsilon^2 },
  \]
  and any point $z$ on the line between $y$ and $q$ will have 
  \[
    U(z) = U(q) + t K_1 \sqrt{ \ell^2 + \varepsilon^2 }
  \]
  where $t\in (0,1)$. If we search for the point $z$ on the line
  between $q$ and $y$ on the front $F$, which has $U(z)=U(x)$, we find
  that
  \[
    t = \frac{K_2}{K_1} \frac{ \ell }{ \sqrt{ \ell^2 + \varepsilon^2 } }
    < 1.
  \]
  Therefore, the Euclidean length of the minimal path from $z$ to $p$
  is 
  \[
    U_E(z)  = \frac{K_2}{K_1} \ell + \mbox{len}(\gamma_{q,p})
    % \frac{ \ell }{ \sqrt{ \ell^2 + \varepsilon^2 } }
  \]
  where $\mbox{len}(\gamma_{q,p})$ is the length of the minimal path
  from $q$ to $p$. Notice this has less length than the path from $x$
  to $p$, which is $U_E(x) = \ell +
  \mbox{len}(\gamma_{q,p})$. Therefore, $U_E(z) < U_E(x)$. So moving
  in the direction $N$ along $F$ reduces the Euclidean length of
  minimal paths. This same argument holds for any $z$ within $V_x$
  along the direction $-N$ from $x$. This implies that $x\in F\cap S$
  is a one-dimensional ridge point of $U_E$.
\end{proof}
This tells us that points of the front that are on the surface must be
ridge points, and so we restrict our attention to ridge points on the
front as possible points on the surface.

\subsection{Ridge Curve Extraction Using the Morse Complex}

Since computing ridges directly from Definition~\ref{defn:ridge},
using differential operators, is sensitive to noise, scale spaces
\cite{lindeberg1998edge,kolomenkin2013multi} are often used. However,
that approach, while being more robust to noise, may distort the data,
and it is often difficult to obtain a connected curve as the
ridge. Therefore, we derive a robust method by making use of the Morse
complex and cubical complex theory to extract the ridge of interest
from the data $U_E$. Cubical complex theory guarantees the correct
topology of the desired ridge (as a 1-dimensional closed curve).

\comment{
Let $h : M \subset \R^n \to \R$ be a function from an
$n-1$ dimensional manifold $M$ to the real line. In our case, the
manifold $M$ is the Fast marching front at a particular time instant,
and the function $h$ is the Euclidean length of minimal paths ending
on the front, $d_E$. 
}

{\bf Relation Between Ridges and Morse Complex}: In the following
proposition, we note that certain ridges of a smooth function can be
computed by computing ascending manifolds. We assume that $M$ is a
2-manifold.
\begin{prop}
  Boundaries of ascending manifolds of $h$ are ridges of $h$.
\end{prop}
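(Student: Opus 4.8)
The plan is to identify the boundaries of the ascending manifolds with the separatrix curves of the Morse complex and then verify the two conditions of Definition~\ref{defn:ridge} for a one-dimensional ridge point on a $2$-manifold, namely $\lambda_1 < 0$ and $\nabla h(x)\cdot e_1 = 0$. Recall that for a Morse function on the $2$-manifold $M$ the ascending manifolds of the local minima (the basins of attraction of the flow $\gamma' = -\nabla h$) are $2$-dimensional cells that tile $M$, so their common boundaries are one-dimensional. These boundary curves are exactly the ascending manifolds (the stable sets under $-\nabla h$) of the index-$1$ saddles, together with the maxima; I would take this cell structure as the starting point, invoking the decomposition of $M$ stated just before the proposition.

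First I would fix a generic point $x$ lying on one such boundary curve $\sigma$, i.e.\ on the separatrix flowing into a saddle $s$, and away from $s$ and the maxima. Since $\sigma$ is by definition an integral curve of $-\nabla h$, its tangent at $x$ is parallel to $\nabla h(x)$; equivalently, $\nabla h(x)$ has no component in the direction $e_{\perp}$ transverse to $\sigma$. This supplies the vanishing-derivative half of the ridge condition, provided the transverse direction is identified with the concave Hessian eigendirection $e_1$. To justify that identification I would analyze the behavior near the saddle, where the stable manifold of $s$ is tangent to the eigenvector $e_2$ of the positive eigenvalue $\lambda_2 > 0$; hence along $\sigma$ the gradient is, to leading order, aligned with $e_2$ and orthogonal to $e_1$, giving $\nabla h(x)\cdot e_1 = 0$.

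Next I would establish the concavity condition $\lambda_1 < 0$ by a watershed argument. The curve $\sigma$ separates the basin $A(p)$ of one minimum from the basin $A(p')$ of a different minimum; moving off $\sigma$ in either transverse direction places a point inside a basin and sends the flow downhill toward the corresponding minimum. Thus $h$ restricted to a short transverse arc through $x$ attains a local maximum at $x$, which is precisely the concavity encoded by $\lambda_1 < 0$ in the eigendirection $e_1$ transverse to $\sigma$. Together with the gradient condition this shows $x$ is a one-dimensional ridge point, and since the boundary curve consists of such points the whole boundary is a ridge.

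The main obstacle is reconciling the two descriptions of the transverse direction: the differential definition in Definition~\ref{defn:ridge} uses the Hessian eigenvector $e_1$, whereas the separatrix furnishes only the direction orthogonal to the gradient flow line, and these coincide exactly only at the saddle (where the stable manifold is tangent to $e_2$). I expect the clean way around this is to invoke the more generic characterization of ridges noted after Definition~\ref{defn:ridge} --- that the function value is higher at the ridge point than at nearby points in the transverse subspace --- for which the watershed argument above is already a complete justification; the alignment of $e_1$ with the transverse direction then holds to leading order and exactly in the limit approaching the saddle. A secondary point to handle with care is the Morse (or Morse--Smale) genericity of $h$, which is what guarantees that the separatrices genuinely divide distinct basins and that the transverse behavior is as described along the entire curve rather than only near the critical points.
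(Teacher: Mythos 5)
Your high-level strategy --- a watershed argument on the basins, finished off by the relaxed, non-Hessian characterization of ridges --- matches the paper's, but the step that carries the actual content of the proposition, the transverse concavity, is a non sequitur as you state it. You argue: a point $y$ just off $\sigma$ lies in the basin of some minimum $p'$, the flow from $y$ runs downhill to $p'$, \emph{thus} $h$ restricted to a short transverse arc has a local maximum at $x$. The premise gives only $h(y) > h(p')$; it says nothing about the sign of $h(y)-h(x)$. Every point of $M$ --- including points on $\sigma$ itself and points where $h$ exceeds $h(x)$ --- flows downhill to some critical point, so ``lies in a basin and flows downhill'' cannot distinguish a divide that is a ridge from one that is transversally a valley floor draining toward the saddle. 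Concretely, for $h(x,y) = (y^2-\epsilon)x^2 + x^4 + y^2$ with small $\epsilon>0$, the $y$-axis is exactly the common boundary of the basins of the two minima $(\pm\sqrt{\epsilon/2},0)$ and all of your premises hold along it, yet for $|y|>\sqrt{\epsilon}$ the axis is a transverse local \emph{minimum} of $h$. So your stated inference cannot be valid; some input about gradient \emph{directions} near the boundary is indispensable.

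That input is precisely what the paper's proof supplies, and it is the heart of its argument: for $y\in V_x\cap A(p_2)$ the paper claims $-\nabla h(y)\cdot N_2 > 0$ (with $N_2$ the inward normal of $\partial A(p_2)$), on the grounds that a negative-gradient path from $y$ converges to $p_2$ and hence cannot cross $\partial A(p_2)$; symmetrically $-\nabla h(y)\cdot N_2 < 0$ on the $A(p_1)$ side. From these two signs the paper gets both ridge conditions at once: $h$ decreases away from $x$ along $\pm N_2$ (your concavity), and $\nabla h(x)\cdot N_2 = 0$ by continuity of $\nabla h$ (your gradient condition, obtained without invoking saddles, separatrices, or Morse--Smale genericity, all of which your flow-line-tangency argument needs). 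To repair your proof you must insert this sign argument, or an equivalent, before concluding concavity --- and even it must be handled with care, since a gradient path can approach a boundary asymptotically without crossing it, which is exactly what happens in the example above away from the saddle. Finally, what you flag as the ``main obstacle'' --- identifying the transverse direction with the Hessian eigenvector $e_1$ --- is not where the difficulty lies: both you and the paper legitimately sidestep it by appealing to the generic local-maximum characterization of ridges mentioned after Definition~\ref{defn:ridge}.
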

\begin{proof}
  Suppose that $x\in \partial A(p_1)$ then for any neighborhood $V_x$
  sufficiently small around $x$, we have that
  $\partial A(p_1) \cap V_x$ divides $V_x$, i.e.,
  $V_x = [V_x \cap A(p_1)] \cup [V_x \cap A(p_2)]$ ($p_1\neq p_2$) for
  the case when $V_x$ intersects two ascending manifolds. Note that
  $-\nabla h(y) \cdot N_2 > 0$ for $y \in V_x \cap A(p_2)$ where $N_2$
  is the inward normal to $\partial A(p_2)$ when $V_x$ is small
  enough. If this were not the case, then paths following the negative
  gradient would intersect the boundary $\partial A(p_2)$, which is
  not the case since they flow into $p_2$. By a similar argument,
  $-\nabla h(y) \cdot N_2 < 0$ for $y \in V_x \cap A(p_1)$. Since the
  function $h$ is assumed smooth and thus the gradient is continuous,
  we must have that $\nabla h(x) \cdot N_2 = 0$. Further, the function
  is decreasing away from $x$ along the directions $\pm N_2$ as points
  in $V_x \backslash \{x\}$ belong to ascending manifolds. Therefore,
  the point $x$ is a local maximum in the direction $N_2$. Ridges
  satisfy this property.  Hence, boundaries of the ascending manifolds
  are ridges.
\end{proof}

{\bf Algorithm for Ridges via Morse Complex}: Next, we specify a
discrete algorithm to determine the Morse complex of $-U_E|F$. The
boundaries of ascending manifolds can then be used to extract the
relevant ridge. We retract the front to the ridge curve by an ordered
removal of free faces based on lowest to highest ordering based on
$U_E|F$.

Given a front $F$, obtained by thresholding the distance $U$, the
two-dimensional cubical complex $C_F$ of the front
is constructed as follows. Let $\Z_n = \{ 0, 1, \ldots, n-1 \}$ be a
sampling of a coordinate direction of the image. Then
\begin{itemize}
\item $C_F$ contains all 2-faces $f$ in $\Z_n^3$ between any 3-faces
  $g_1, g_2$ with the property that one of $g_1,g_2$ has all its
  0-sub-faces with $U < D$ and one does not.
\item Each face $f$ of $C_F$ has cost equal to the average of $U_E$
  over 0-sub-faces of $f$.
\end{itemize}

Our algorithm for Morse complex extraction and boundaries of the
ascending manifolds is given in
Algorithm~\ref{alg:ridge_extraction}. The algorithm creates holes at
local minima of the function $U_E|F$ defined on 1-faces by removing
the adjacent 2-faces. It then removes free faces in increasing order
of $U_E|F$ so as to preserve homotopy equivalence. The removed points
associated with a local minimum form the ascending manifold for the
local minimum. The faces that cannot be removed without breaking
homotopy equivalence, i.e., the isthmus faces, form the boundaries of
the ascending manifolds. The algorithm removes all 2-faces and
preserves only isthmus 1-faces, and hence the remaining structure of
$C_F$ is one dimensional. Further, since the algorithm preserves
homotopy equivalence, the remaining structure at the end of the
algorithm is connected. This is a clear advantage over computation of
ridges from differential operators, which does not guarantee
connectedness. A heap is used to keep track of the faces in order. The
computational complexity of this extraction is therefore $O(N\log N)$
where $N$ is the number of pixels, an over-estimate since the faces in
the complex are significantly lower than the number of pixels.

\begin{algorithm}
  \begin{algorithmic}[1]
    \Procedure{Morse Complex}{$C_F$, $U_E$}
    \State \Comment{{\it $C_F = $ cubical 2-complex, $U_E = $ cost on 1-faces in $C_F$}}
    \State $\mbox{id} \gets 0$
    \State Create heap of 1-faces ordered by $U_E$ (min at top)
    \Repeat 
    \State Remove 1-face $g$ from heap
    \If { $g$ is a subset of two faces $f_1$ and $f_2$ in $C_F$}
    \State Remove $g, f_1, f_2$ from $C_F$
    \State $l(f_1) \gets l(f_2) \gets \mbox{id}$, $\mbox{id} \gets
    \mbox{id} +1$
    \State \Comment{ {\it new id for ascending manifold; hole
        at local min}}
    \ElsIf { $(g,f)$ is a free pair in $C_F$}
    \State Remove $g,f$ from $C_F$
    \State $l(f) \gets l(f_{adj})$ where $f_{adj} \supset g$ and
    $f_{adj} \notin C_F$
    \State \Comment{ {\it labels face same as adjacent face containing $g$} }
    \ElsIf { $(f,g)$ is a free pair in $C_F$ }
    \State Remove $g,f$ from $C_F$
    \ElsIf {$g$ is isthmus}
    \State $l(g) = \{ l(f_1), l(f_2) \}$ where $f_1,f_2 \supset g$ 
    \State \Comment{{\it label is unordered list}}
    \EndIf
    \Until{ heap is empty }
    \State \textbf{return} $C_F$, $l$ \Comment{{\it Ridges, labels for
      2-faces, ridges}}
    \EndProcedure
  \end{algorithmic}
  \caption{Morse Complex Extraction}
  \label{alg:ridge_extraction}
\end{algorithm}

Ideally, in the case of clean data $\phi$, the \emph{function} $U_E$
defined on the front would have a rather simple topology, indeed a
volcano structure (see left image in
Fig.~\ref{fig:ridge_curve_extraction}), where the ridge separates the
inside of the volcano from the outside. The two minimum of $U_E$ on
each side of the ridge would correspond to points away from the
surface in the direction of the surface normal. In this case, the
previous algorithm would produce the inside of the volcano, and the
outside as two components of the complex, and the boundary between
them as the ridge, as desired. However, due to noise other ridge
structures besides the main ridge of interest can be extracted.

Fortunately, we can simplify the extracted collection of ridges from
the previous algorithm by applying the algorithm iteratively. We
construct a new complex with a 2-face for each ascending manifold
computed, and a 1-face connecting 2-faces if two corresponding
ascending manifolds have intersecting boundaries. Each 1-face in this
new complex is assigned a value to be the average of 1-faces in the
common boundary between ascending manifolds. The Morse complex of this
simplified complex is then computed, and the process is repeated until
only one loop remains. The algorithm is given in
Algorithm~\ref{alg:highest_ridge_curve}. Figure~\ref{fig:noisy_example}
shows an example run through this algorithm.

\begin{algorithm}
  \begin{algorithmic}[1]
    \Procedure{HighestRidge}{$C_F$, $U_E$}
    \State \Comment{ {\it $C_F$ cubical 2-complex of Fast Marching front} }
    \State \Comment{ {\it Euclidean trajectory length $U_E$ defined on
      1-faces} }
    \Repeat
    \State ( $C_F'$, $l$ ) = {\sc Morse Complex} ( $C_F$, $U_E$ )
    \State Create 2-cubical complex $C_F''$ with 
    \State \indent a 2-face $f$ for each unique 2-face id in $l$
    \State \indent a 1-face $g$ for each unique 1-face id in $l$
    \State \indent $g$ joins $f_1$ and $f_2$ if $l(g) = \{ l(f_1),  l(f_2) \}$
    \For {$g$ each 1-face in $C_F''$ }
    \State $R = \{ g' \in C_F' \,:\, l(g') = l(g) \}$ \Comment{{\it a ridge}}
    \State $U_E'(g) \gets$ average of $U_E$ along $R$
    \EndFor
    \State $C_F \gets C_F''$, $U_E \gets U_E'$
    \Until{ no degree three 1-faces in $C_F'$}
    \State \textbf{return} $C_F'$
    \EndProcedure
  \end{algorithmic}
  \caption{Highest Ridge Curve Extraction}
  \label{alg:highest_ridge_curve}
\end{algorithm}

\def\fHeightDeform{0.48in}
\begin{figure}
  \centering
  {\scriptsize
    \includegraphics[clip,trim=30 0 30 0, totalheight=\fHeightDeform]{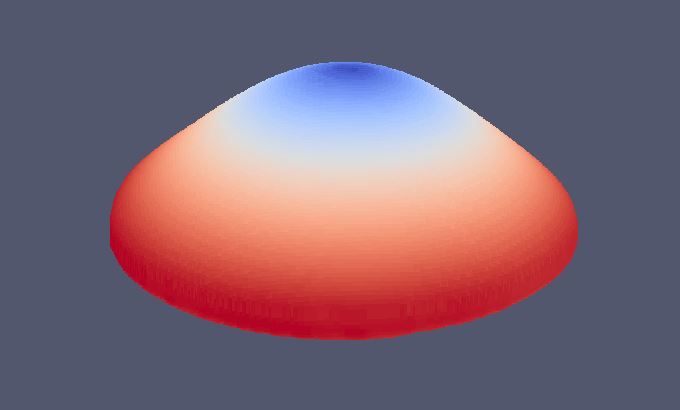}
    \includegraphics[totalheight=\fHeightDeform]{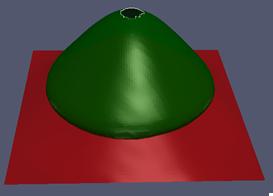}
    \includegraphics[totalheight=\fHeightDeform]{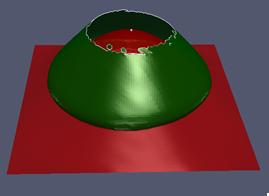}
    \includegraphics[totalheight=\fHeightDeform]{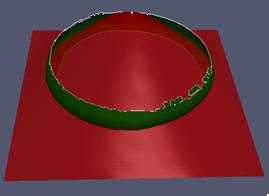}
    \includegraphics[totalheight=\fHeightDeform]{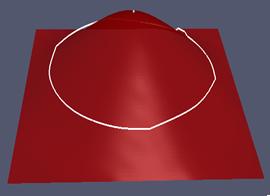}
  }
  \caption{ [1st image]: Front color coded with Euclidean path length
    $U_E$ (top view). Red indicates high values. The bottom view (not
    shown) is a symmetric flip. Topologically, $U_E$ forms a volcano
    structure (ridge, i.e., top of volcano, is darkest red), and
    inside the volcano is blue. [Subsequent images]: Illustration of
    iterations (from left to right) of
    Algorithm~\ref{alg:ridge_extraction} on noise-less data to obtain
    the ridge curve (white) on the Fast Marching front (green) by
    computing the Morse complex of $U_E$. The ridge curve lies on the
    surface of interest (red).}
  \label{fig:ridge_curve_extraction}
\end{figure}

\def\fHeightNoisy{0.77in}
\begin{figure}
  \centering
  {\scriptsize
    \includegraphics[totalheight=\fHeightNoisy]{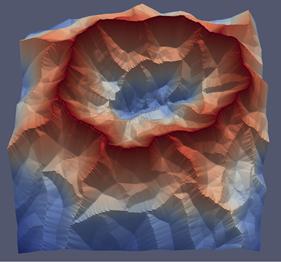}
    \includegraphics[totalheight=\fHeightNoisy]{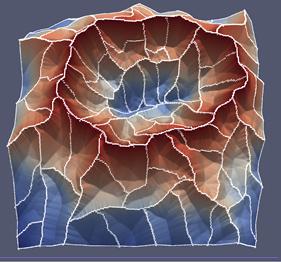}
    \includegraphics[totalheight=\fHeightNoisy]{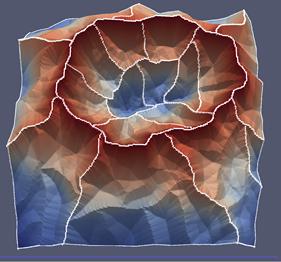}
    \includegraphics[totalheight=\fHeightNoisy]{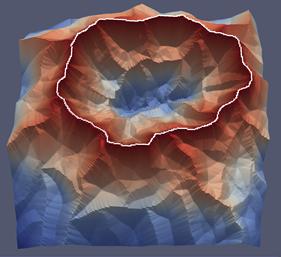}
  }
  \caption{Illustration of Algorithm~\ref{alg:highest_ridge_curve}
    operating on noisy data to obtain the highest ridge.}
  \label{fig:noisy_example}
\end{figure}

\comment{
\begin{algorithm}
  \begin{algorithmic}[1]
    \State Input: 2D cubical complex $C_F$ of FM front and
    Euclidean distance $U_E$ 
    \State Sort $C_F$ 1-faces from min to max based on $U_E$
    \Repeat 
    \State Let $g$ be the minimum value 1-face in $C_F$
    \If {$(f,g)$ or $(g,f)$ is a free pair in $C_F$ for some $f$}
    \State Remove $f$ and $g$ from $C_F$
    \EndIf
    \Until{no free pairs in $C_F$}
  \end{algorithmic}
  \caption{Ridge Curve Extraction}
  \label{alg:curve_extraction}
\end{algorithm}
}

An example of ridge curves detected for multiple fronts is shown in
Figure~\ref{fig:curve_extraction}. This procedure of retracting the
Fast Marching front to form the main ridge is continued for different
fronts of the form $\{ U < D \}$ with increasing $D$. This forms many
curves on the surface of interest.  In practice, in our experiments,
$D$ is chosen in increments of $\Delta D = 20$, until the stopping
condition is achieved, and this typically results in $10-20$ ridge
curves extracted. The next sub-section describes the stopping
criteria.

\def\fHeightCut{1.0in}
\begin{figure}
  \centering
  \includegraphics[totalheight=\fHeightCut]{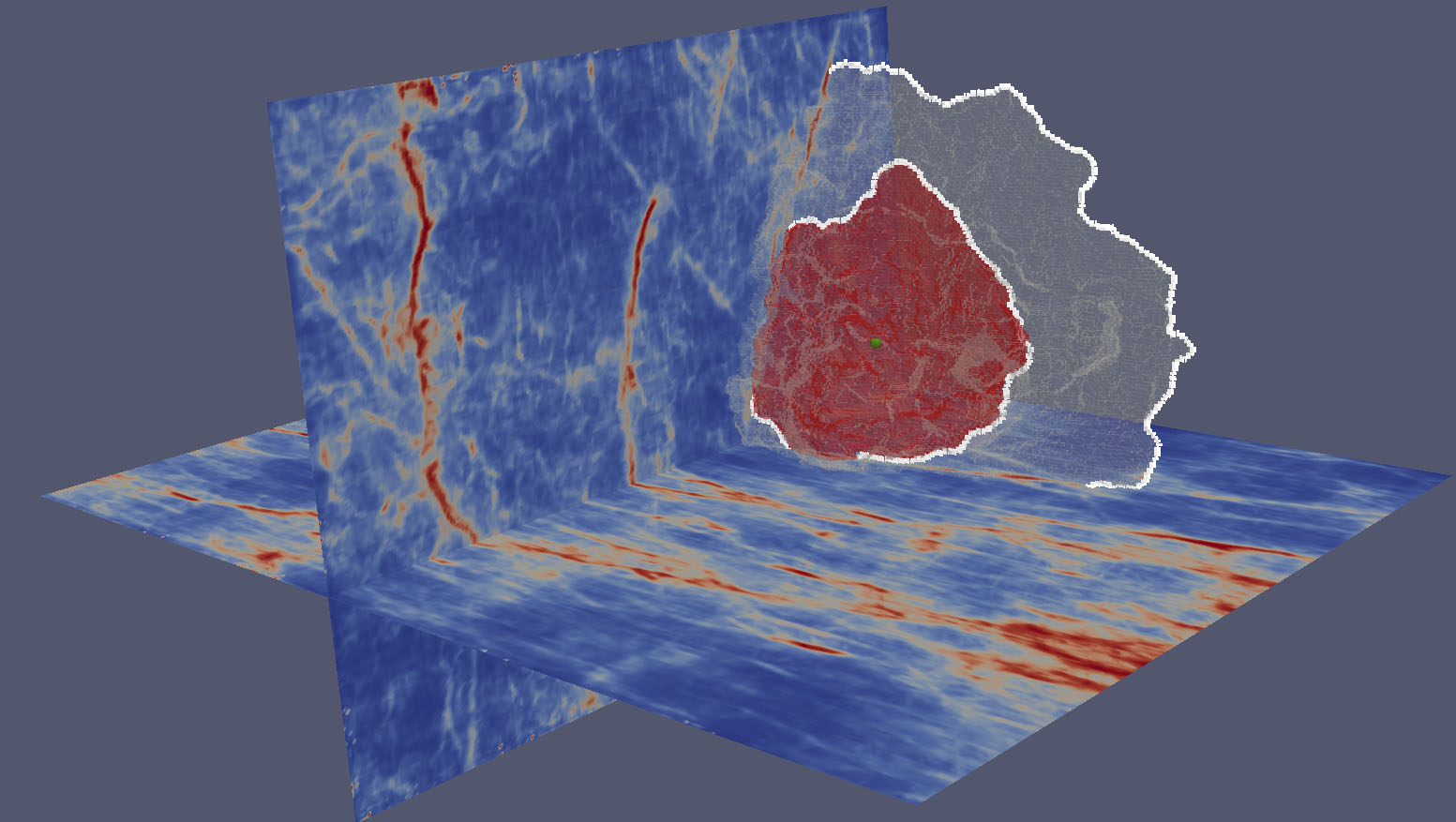}
  \includegraphics[clip,trim=30 0 0 0,totalheight=\fHeightCut]{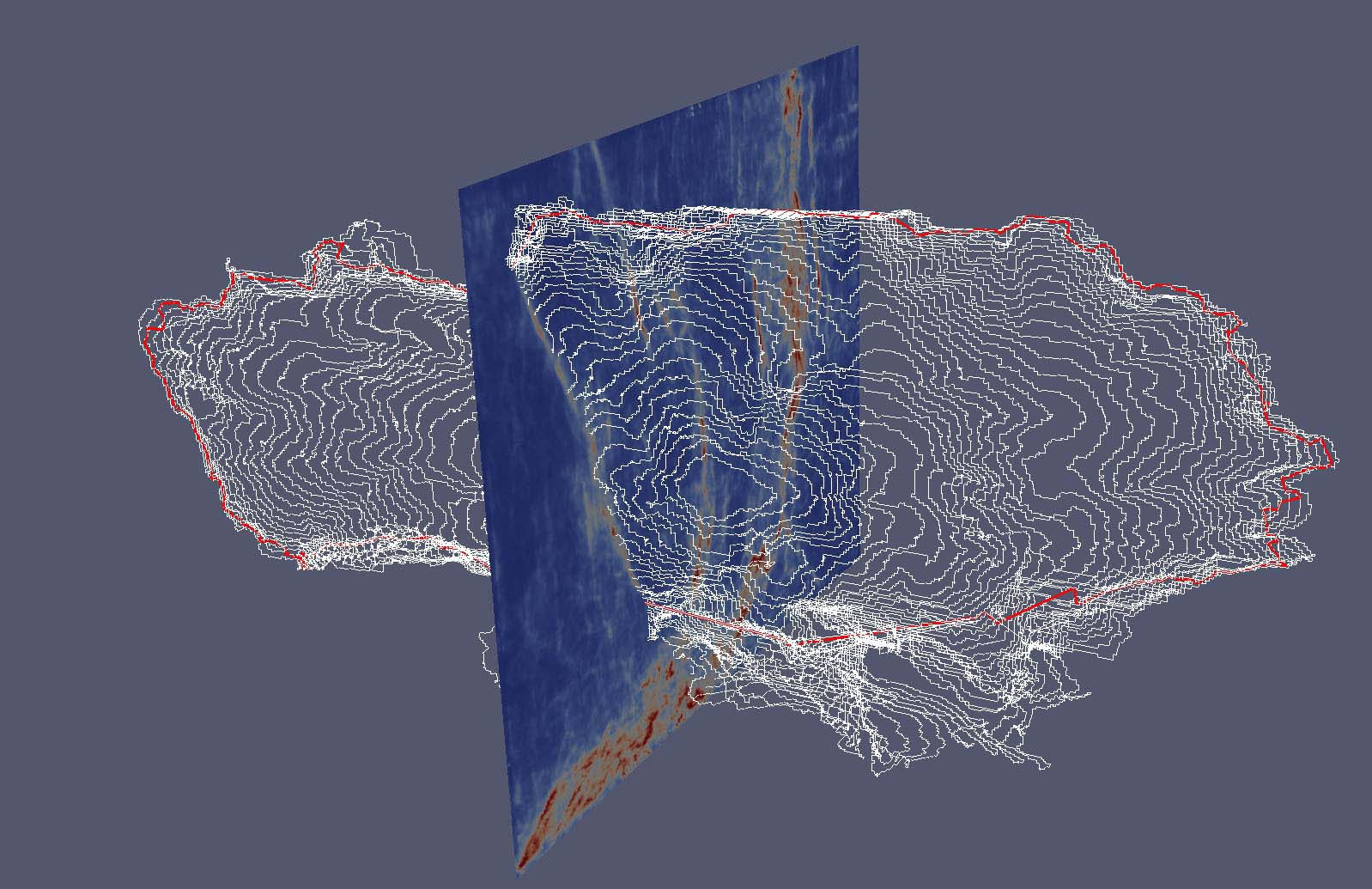}
  \caption{[Left]: Ridge curve (white) extraction by retracting the Fast
    Marching front at two instants. [Right]: An example cut (red) of
    ridge curves, forming the surface boundary. Notice that the cut
    matches with the end of high $1/\phi$ (bright areas).}
  \label{fig:curve_extraction}
\end{figure}

\subsection{Stopping Criteria and Surface Boundary Extraction}

\label{subsec:cut}

To determine when to stop the process of extracting ridge curves, and
thus obtain the outer boundary of the surface of interest, we make the
following observation. Parts of the curves generated from the previous
section move slowly, i.e., become close together with respect to
Euclidean distance at the boundary of the surface. This is because the
speed function $1/\phi$ becomes small outside the surface. Hence, for
the curves $c_i$ generated, we aim to detect the locations where the
distance between points on adjacent curves becomes small. To formulate
an algorithm robust to noise, we formulate this as a graph cut problem
\cite{boykov2001interactive}.

We define the graph $G$ as follows:
\begin{itemize}
\item vertices $V$ are 0-faces in all the 1-complexes $c_i$ formed from ridge
  extraction 
\item edges $E$ are $(v_1,v_2)$ where $v_1, v_2\in V$ are such that
  $v_1, v_2$ are connected by a 1-face in some $c_i$ or $v_1$ is a
  0-face in $c_i$ and $v_2$ is the closest (in terms of Euclidean
  distance) 0-face in $c_{i+1}$ to $v_1$
\item a cost $|v_j-v_k|$ is assigned to each edge
  $(v_j,v_k)$ where $v_j$ and $v_k$ belong to different $c_i$ (so that
  the min cut will be where adjacent curves are close)
\item for edges $(v_j,v_k)$ such that $v_j$ and $v_k$ belong to the
  same $c_i$, the cost is the minimum Euclidean distance between
  segment $(v_j,v_k)$ and segments on $c_{i+1}$
\item the source is the seed point $p$, and the sink is the last ridge
  curve $c_l$
\end{itemize}

We wish to obtain a cut of $G$ (separating $G$ into two disjoint sets)
with minimum total cost defined as the sum of all costs along the
cut. In this way, we obtain a cut of the ridge curves along locations
where the distance between adjacent ridge curves is small. The process
of obtaining ridge curves from the Fast Marching front is stopped when
the cost divided by the cut size is less than a pre-specified
threshold. This cut then forms the outer boundary of the surface. The
computational cost of the cut (compared to other parts of the
algorithm) is negligible as the graph size is typically less than
$0.5\%$ of the image.  Figure~\ref{fig:curve_extraction} shows an
example of a cut that is obtained. Figure~\ref{fig:sythetic_example}
shows a synthetic example.

\section{Surface Extraction}

\label{subsec:surface_extraction}

We now present our algorithm for surface extraction.  Given the
surface boundary curve determined from the previous section, we
provide an algorithm that determines a surface going through locations
of small $\phi$ and whose boundary is the given curve. Our algorithm
uses the cubical complex framework and has complexity $O(N\log N)$.

\comment{
Although there is another algorithm, \cite{grady2010minimal}, for
this task, it is computationally expensive as we show in
Section~\ref{sec:expts}.
}

\subsection{Valley Extraction Algorithm and Rationale}

We show now that the surface of interest lies in a valley of
$U : \Omega\to\R^+$, the weighted minimal path length.

\begin{prop} \label{prop:surface_valley}
  Suppose $S \subset \R^3$ is a smooth surface and $p\in S$.  Let
  $\phi : \Omega \subset \R^3 \to \R^+$ be a function with low values
  on $S$ and higher values outside (locally). Then $S$ is a valley of
  $U$, where $U$ is the solution of the eikonal equation with
  $U(p)=0$.
\end{prop}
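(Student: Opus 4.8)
The plan is to verify, at an arbitrary $x\in S$, the two conditions of the valley part of Definition~\ref{defn:ridge}, viewing $U$ as a function on the $3$-dimensional domain $\Omega$ (so that $M=\Omega$ has dimension $n-1=3$ and the relevant codimension is $k=1$). Writing $N$ for the unit normal to $S$ at $x$ and $\lambda_1\le\lambda_2\le\lambda_3$, $e_1,e_2,e_3$ for the eigenpairs of the spatial Hessian $HU(x)$, a $2$-dimensional valley requires exactly that (i) $\nabla U(x)\cdot e_3=0$ and (ii) $\lambda_3>0$, and I will show that the top eigendirection is $e_3=N$. Geometrically these two facts say that $U$ is critical and strictly convex in the single off-surface direction $N$, while varying only tangentially along $S$; this is precisely the statement that $S$ sits in a valley of $U$.

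To compute, I would reuse the local model of Proposition~\ref{prop:ridge_surface}: choose a neighborhood $V_x$ in which $S$ is approximately flat and $\phi$ is piecewise constant, $\phi=K_2$ on $S$ and $\phi=K_1>K_2$ off $S$. Place $x$ at the origin with $S=\{z=0\}$, $N=(0,0,1)$, and $p=(d,0,0)\in S$. For condition (i), since $\phi$ is strictly smaller on $S$, the minimal $\phi$-weighted path joining the two surface points $x$ and $p$ never leaves $S$ (any excursion into the $K_1$ region only increases both the Euclidean length and the weight per unit length), so its weighted length is $K_2 d$ and its initial tangent lies in $T_xS$. As $\nabla U$ is parallel to the eikonal characteristic, i.e. to this path tangent, we get $\nabla U(x)\in T_xS$, hence $\nabla U(x)\cdot N=0$.

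For condition (ii) I would evaluate $U$ along the normal line $y=x+sN$. The minimal path from $y$ travels straight through the high-$\phi$ region to a refraction point $q=(a,0,0)\in S$ and then along $S$ to $p$, so $U(y)=K_1\sqrt{a^2+s^2}+K_2(d-a)$; minimizing over $a$ gives Snell's law $a/\sqrt{a^2+s^2}=K_2/K_1$, and substituting back yields the clean identity $U(x+sN)=U(x)+|s|\sqrt{K_1^2-K_2^2}$. Thus $U$ has a strict local minimum at $s=0$ in the normal direction, giving a positive curvature there; moreover this increase is sharp compared with the gentle tangential behavior of $U|_S=K_2\,d_S(\cdot,p)$, so by the reflection symmetry of the model across $S$ the normal $N$ is a principal axis of $HU(x)$ realizing the largest eigenvalue. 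Hence $e_3=N$, $\lambda_3>0$, and combined with paragraph two, $\nabla U(x)\cdot e_3=0$, so both conditions of Definition~\ref{defn:ridge} hold.

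The main obstacle is that the refraction formula produces a kink at $s=0$ rather than a smooth minimum, so $U$ is not twice differentiable in the normal direction and the Hessian statement needs care. I would dispatch this in two ways: first, by invoking the generic continuous-function form of the valley criterion noted just after Definition~\ref{defn:ridge} (a strict local minimum restricted to the normal subspace), which the identity $U(x+sN)-U(x)=|s|\sqrt{K_1^2-K_2^2}$ supplies directly; and second, by appealing to the fact from Section~\ref{subsec:fast_marching} that $U$ is a viscosity solution, hence a limit of smooth functions, for which the symmetric one-sided increase smooths to a genuine positive second derivative $\lambda_3>0$ with vanishing normal gradient component. A secondary technical point, justifying that the off-surface minimal path refracts once at $S$ and then runs along it, follows from the piecewise-constant approximation of $\phi$ together with the optimality (Snell) condition, exactly the straight-line minimal-path reasoning already used in Proposition~\ref{prop:ridge_surface}.
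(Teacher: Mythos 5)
Your proposal is correct and follows essentially the same route as the paper's own proof: the same locally-flat, two-constant model ($\phi = K_2$ on $S$, $K_1 > K_2$ off it), and the same key computation in which the minimal path from a normal offset $x + sN$ runs straight through the high-$\phi$ region, refracts once at $S$, and is minimized over the refraction point, yielding a strict local minimum of $U$ along $\pm N$ and hence a valley point.

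If anything, your version is tidier than the paper's: your Snell minimizer $a/\sqrt{a^2+s^2} = K_2/K_1$, giving the clean identity $U(x+sN) = U(x) + |s|\sqrt{K_1^2 - K_2^2}$, corrects a small algebraic slip in the paper (which states the minimizer as $\ell = \varepsilon/\sqrt{1-r^2}$ rather than $r\varepsilon/\sqrt{1-r^2}$, so it evaluates the cost at a non-optimal point — an upper bound on $U(y)$, which strictly speaking does not imply $U(y) > U(x)$, whereas your exact evaluation does), and your explicit treatment of the gradient-tangency condition and of the kink at $s=0$ via the continuous valley criterion and the viscosity-limit smoothing makes precise what the paper leaves implicit; the only ingredient the paper has that you omit is its closing bootstrap step (re-running the argument with $x$ in place of $p$) to propagate the conclusion to points of $S$ far from the seed, since the piecewise-constant local model tacitly requires $p$ to lie in the modeled neighborhood.
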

\begin{proof}
  We show that for $x\in S$, $U$ decreases away from $x$ in
  the direction $\pm N$, the normals to the surface at $x$. For a
  small enough neighborhood $V_x$ around $x$, we may assume that $S$
  is flat and that $\phi$ is approximated by
  \[
    \phi(x) = 
    \begin{cases}
      K_1 & x\notin S \cap V_x \\
      K_2 & x \in S \cap V_x
    \end{cases},
  \]
  where $K_1 >> K_2 > 0$. We also assume (for now) that $x$ close
  enough to $p$ so that $p$ lies in $V_x$. In this case, we see that
  \[
    U(x) \approx U(p) + K_2 |x-p| = L K_2,
  \]
  as the minimal path from $p$ to $x$ is approximately a straight line
  path on the surface, as the surface is nearly flat in $V_x$.
  Let $y = x \pm \varepsilon N$ for $\varepsilon > 0$ sufficiently
  small. We now consider the minimal path from $y$ to $p$. Note
  outside the surface, the path must be nearly a straight line as
  $\phi$ is constant. Similarly, on the surface the minimal path must be a
  straight line. We see that the minimal path is a straight line
  between $y$ and some point $z$ on the line joining $x$ to $p$ and then
  the straight line between $z$ and $p$ (see
  Figure~\ref{fig:surface_valley}). Therefore, 
  \[
    U(y) = \min_{\ell } K_2(L-\ell) + K_1\sqrt{ \ell^2 +\varepsilon^2 }
  \]
  where $\ell$ is the length of the segment between $x$ and $z$. The
  minimizer is $\ell = \varepsilon / \sqrt{1 - r^2}$, where $r =
  K_2/K_1 < 1$. This yields that 
  \begin{align*}
    U(y) &= L K_2 - \frac{ \varepsilon }{ \sqrt{1-r^2} } K_2 +
           \varepsilon \frac{ \sqrt{2-r^2} }{ \sqrt{1-r^2} } K_1 
           \\
         &= LK_2 + \frac{ \varepsilon }{ \sqrt{1-r^2} } [ K_1\sqrt{2-r^2} -
           K_2 ] > L K_2,
  \end{align*}
  where the last inequality follows from the fact that $\sqrt{2-r^2} >
  1$ and $K_1 > K_2$. Therefore, $U(y) > U(x)$ and so we see a local
  minimum in the direction $N$, which implies $x$ lies in a 2-d valley of
  $U$. We may now apply the same argument using $x$ to play the role
  of $p$, and show that all points in a neighborhood of $x$ on the
  surface are on a valley. We may continue in this way to show all
  points on the surface are on the valley.
\end{proof}

\begin{figure}
  \centering
  {\scriptsize
    \includegraphics[totalheight=1.2in]{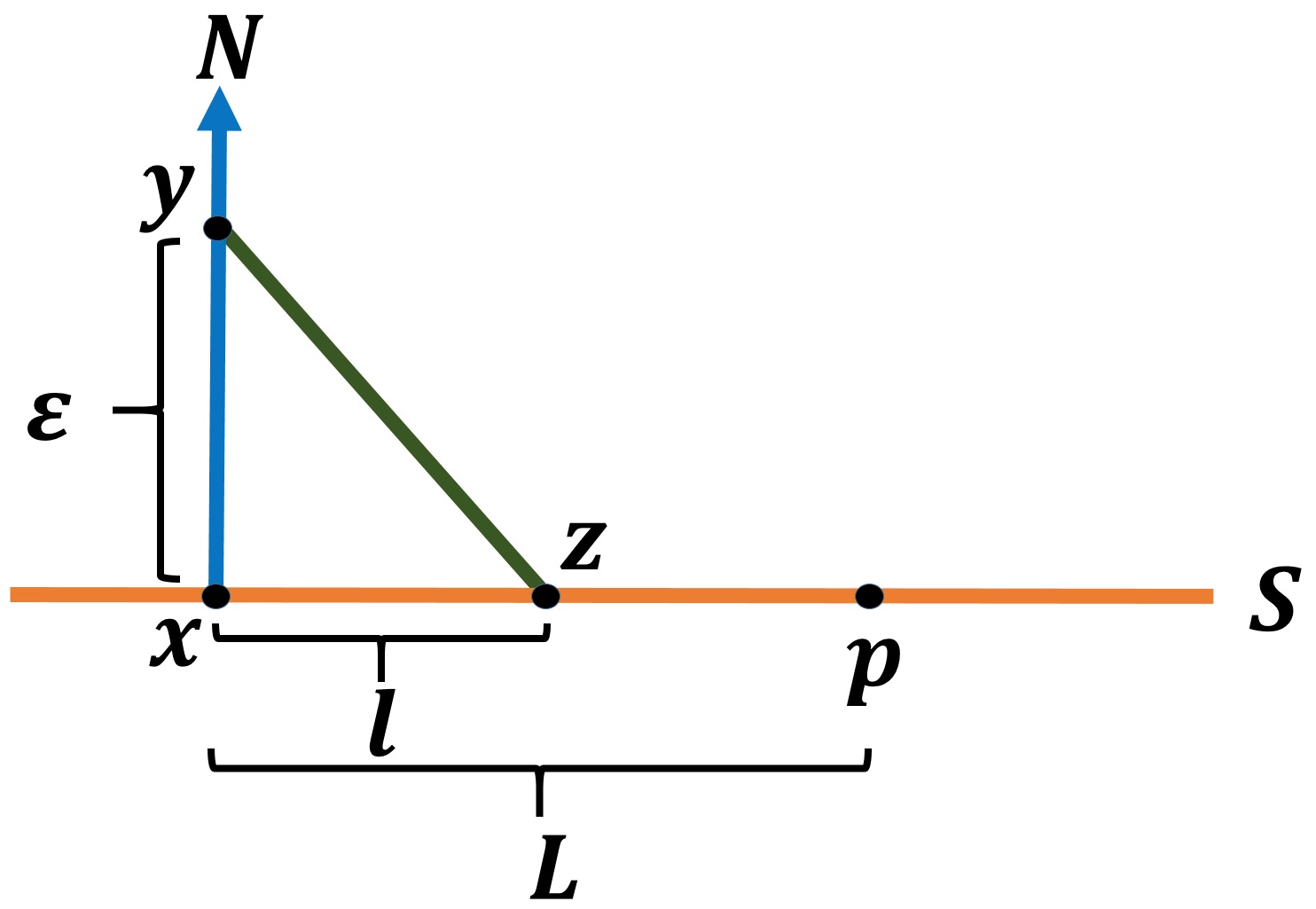}
  }
  \caption{Quantities defined in the proof of
    Proposition~\ref{prop:surface_valley}. }
  \label{fig:surface_valley}
\end{figure}

\comment{
We can use the fact that the surface lies in a valley to extract the
surface by extracting the valley. This can be accomplished by
performing a deformation retraction of the volume as follows. Let
denote $D\in\R^+$ the index parameterizing the retraction. One starts
with $H_{D_max} = \Omega$, the image domain and $D_{max}$ is the max
of $U$. At each $D$, one removes a subset of the level set,
$R_D \subset \{ x\in \Omega \,:\, U(x) = D \}$, that do not intersect
$\partial S$ and that does not change homotopy-type. Note that the
points of the level set $\{ U= D \}$ not removed will lie in a valley
of $U$. To see this, suppose that points on the surface within an
infinitesimally small thickening of $\partial S$ are not removed in the
retraction. Consider a point on the boundary of the thickening and
suppose it lies on the level set $\{ U = D' \}$

We can use the above fact to design an algorithm for
extracting the surface. Consider gradient flows
$\dot\gamma_x(t) = -\nabla U(\gamma(t))$ where $x\in \Omega$. We may
locate possible points $z$ on the surface as points such that the
gradient flows starting at different points with the same value of $U$
converges to $z$. Note that valley points $z$ of $U$ satisfy this
property, which can be seen from the proof of the previous
proposition, i.e., the points $y$ along the normal ($\pm N$) to the
surface are such points where the gradient flows (lines) converge to
$z$. Such points can be located to generate the 2-dim surface by
removing points on level sets of $U$ in order that retain homotopy
equivalence, and preserving points that belong to an infinitesimally
small thickening $S_{\delta}$ of $\partial S$ on the surface (assumed
given). Consider a point on the boundary of the thickening (not
$\partial S$) with largest $U$. Such a point cannot be removed 
}

{\bf Algorithm}: We can use the above fact to design an algorithm for
extracting the surface. We may perform a deformation retraction of
$V_0 = \{ U \leq T(0) \}$ where $T(0)$ is chosen to enclose the entire
surface, and $T(t)$ is a decreasing function of $t$. At each time, the
points of the level set $L_t = \{ U = T(t) \}$ that retain the
homotopy equivalence to $V_t$ are removed from $V_t$. We further
impose that the boundary of the surface must not be removed from
$V_t$. This way, all points that are on the surface are retained. One
can show this with an inductive argument. Assume for a given time $t$,
the union of all retained sets is a 2-dim set $S_{t-}$ ($t-$ is just
before $t$) that is on the surface, and so
$V_{t-} = S_{t-} \cup \{ U\leq T(t) \}$. Note that the latter set in
the union is a volume. A point $x\in \partial S_{t^{-}}$ with
$U(x)=T(t)$ cannot be removed. Since $x$ is on the surface, which by
the proposition is a valley point, the normal to the surface at $x$ is
tangent to $L_t$, and $U$ is strictly increasing along the
normal. Therefore, removing point $x$ disconnects $V_{t-}$, not
preserving homotopy equivalence. Therefore,
$V_t = S_{t} \cup \{ U < T(t) \}$ where $S_t$ contains all points on
$\partial S_{t^{-}}$.

\comment{
This results in fronts that have large distance
$U$ from the seed point being removed first. By the constraint, only
the parts of the fronts that do not touch the boundary can be
removed. As the removal progresses, faces are removed on either side
of the surface. This creates a ``wrapping'' effect around the surface
of interest, which have small values of $U$. Near the end of the
algorithm, points on the surface cannot be removed without creating a
hole, so no faces are free, and thus the algorithm stops.}

This procedure can be accomplished with an analogous algorithm in the
discrete case. We retract the cubical complex of the image with the
constraint that the boundary curve $1$-faces cannot be removed. We
accomplish this retraction by an ordered removal of free faces based
on weighted path length $U$. The algorithm is described in
Algorithm~\ref{alg:surface_extraction}.
\begin{algorithm}
  \begin{algorithmic}[1]
    \Procedure{ValleyExtract}{$C_I$, $U$, $\partial S$}
    \State \Comment{ {\it $C_I =$ cubical 3-complex of image, $U=$ FM
        distance} }
    \State \Comment{ {\it $\partial S = $ boundary of surface (1-complex) } }
%    \State Sort faces of $C_I$ based on $U$ in decreasing order
    \State Create heap of 2-faces ordered by $U$ (max at top)
    \Repeat
%    \State Let $g$ be the 2-face with largest weight
    \State Remove 2-face $g$ from heap
    \If{$(g,f)$ is a free pair in $C_I$ for some $f$}
    \State Remove $f$ and $g$ from $C_I$
    \ElsIf{$(f,g)$ is a free pair in $C_I$ for some $f$ and $g\cap \partial S
      = \emptyset$}
    \State Remove $f$ and $g$ from $C_I$
    \EndIf
    \Until {heap is empty}
    \State \textbf{return} $C_I$ \Comment{{\it 2-cubical complex of Valley}}
    \EndProcedure
  \end{algorithmic}
  \caption{Surface Extraction from Boundary of Surface}
  \label{alg:surface_extraction}
\end{algorithm}
Figure~\ref{fig:valley_extraction} shows the evolution from
Algorithm~\ref{alg:surface_extraction} to extract the surface from the
data used in
Figure~\ref{fig:curve_extraction}. Figure~\ref{fig:sythetic_example}
shows a synthetic example of the evolution of this algorithm.

\def\fHeightExtractSurfaceItr{0.8in}
\begin{figure}
  \centering
  {\scriptsize
    \includegraphics[width=\fHeightExtractSurfaceItr]{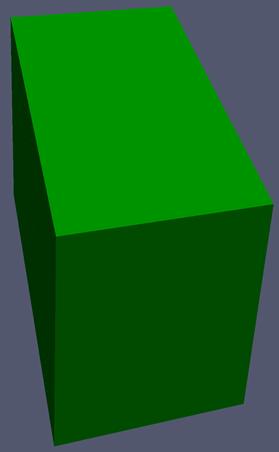}
    \includegraphics[width=\fHeightExtractSurfaceItr]{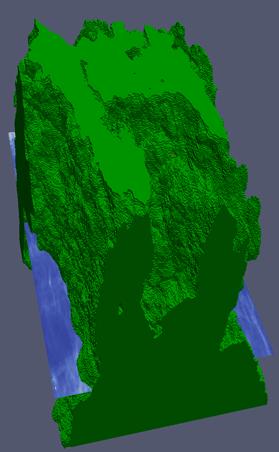}
    \includegraphics[width=\fHeightExtractSurfaceItr]{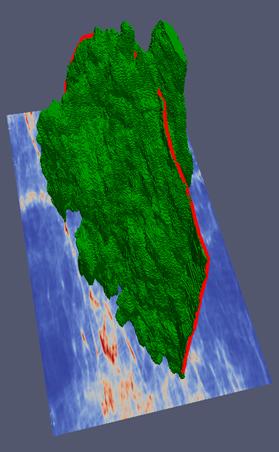}
    \includegraphics[width=\fHeightExtractSurfaceItr]{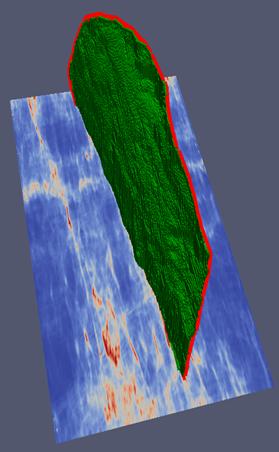}
  }
  \caption{Illustration of valley extraction by
    Algorithm~\ref{alg:surface_extraction}, which retracts the volume
    while preserving 1-faces on the surface boundary (red). This gives
    the surface of interest.}
  \label{fig:valley_extraction}
\end{figure}

\comment{
\def\fHeightSl{1.8in}
\begin{figure}
  \centering
  \includegraphics[totalheight=\fHeightSl]{./figures/fault_intersection/f02_slice}
  \includegraphics[totalheight=\fHeightSl]{./figures/fault_intersection/f02_surfcut}
  \includegraphics[totalheight=\fHeightSl]{./figures/fault_intersection/f02_gt} \\ \vspace{1mm}
  \includegraphics[totalheight=\fHeightSl]{./figures/fault_intersection/f01_slice}
  \includegraphics[totalheight=\fHeightSl]{./figures/fault_intersection/f01_surfcut}
  \includegraphics[totalheight=\fHeightSl]{./figures/fault_intersection/f01_gt} \\ \vspace{1mm}
  \includegraphics[totalheight=\fHeightSl]{./figures/fault_intersection/f04_slice}
  \includegraphics[totalheight=\fHeightSl]{./figures/fault_intersection/f04_surfcut}
  \includegraphics[totalheight=\fHeightSl]{./figures/fault_intersection/f04_gt}
  \caption{Validation on slices (first column) that intersect the
    surface computed with SurfCut (green, second column) passes
    through locations of high likelihood of the true surface (red
    regions) and ground truth (white, third column).}
  \label{fig:surface-from-bc}
\end{figure}
}

\subsection{Valley: Surface of Minimal Paths}

We now relate the valley that is extracted by our algorithm to minimal
paths. We show that the valley, and thus the surface extracted, is a
surface formed from a collection of minimal paths to $p$. First, we
show that the gradient path starting from a point in the valley stays
in the valley.

\comment{ The case of 1-dimensional valleys is similar. }

\begin{prop} \label{prop:valley_grad}
  Suppose $x \in M$ is a valley point of $h : M \to \R$, then the path
  $\gamma$ determined by the gradient descent of $h$ with initial
  condition $x$ lies on the valley of $h$ containing $x$.
\end{prop}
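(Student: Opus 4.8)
The plan is to show that at every valley point the intrinsic gradient $\nabla h$ is tangent to the valley $V$ containing $x$, and then to conclude that $V$ is invariant under the gradient descent flow by uniqueness of integral curves of a vector field tangent to a submanifold. Write $V$ for the connected component of the set of valley points that contains $x$. By Definition~\ref{defn:ridge}, near each of its points $V$ is the $(n-1-k)$-dimensional structure cut out by the $k$ conditions $\nabla h \cdot e_m = 0$, $m = n-k,\ldots,n-1$ (the transverse eigendirections), together with the nondegeneracy $\lambda_{n-k}>0$.

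First I would establish tangency. Fix a valley point $y\in V$. Since the Hessian $Hh(y)$ is symmetric, its eigenvectors $e_1(y),\ldots,e_{n-1}(y)$ form an orthonormal basis of $T_yM$. I claim the transverse eigenvectors $e_{n-k}(y),\ldots,e_{n-1}(y)$ span the normal space $N_yV$ of the valley inside $M$, so that the remaining eigenvectors $e_1(y),\ldots,e_{n-k-1}(y)$ span $T_yV$. Granting this, the valley condition $\nabla h(y)\cdot e_m(y)=0$ for $m=n-k,\ldots,n-1$ says exactly that $\nabla h(y)$ is orthogonal to $N_yV$, hence $\nabla h(y)\in T_yV$; thus the descent velocity $-\nabla h(y)$ is tangent to $V$ at every $y\in V$.

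The main obstacle is the claim that $N_yV=\mathrm{span}\{e_{n-k}(y),\ldots,e_{n-1}(y)\}$, i.e.\ that $V$ is a smooth submanifold whose normal directions are precisely the high-curvature eigendirections. I would obtain this from the implicit function theorem applied to the map $y\mapsto(\nabla h(y)\cdot e_{n-k}(y),\ldots,\nabla h(y)\cdot e_{n-1}(y))$ defining $V$: its differential in the transverse directions is governed by the restriction of $Hh(y)$ to $\mathrm{span}\{e_{n-k},\ldots,e_{n-1}\}$, which is nonsingular because $0<\lambda_{n-k}\leq\cdots\leq\lambda_{n-1}$. Hence $V$ is an $(n-1-k)$-manifold near $y$ whose normal space is the span of the positive-curvature eigenvectors and whose tangent space is the orthogonal complement spanned by $e_1(y),\ldots,e_{n-k-1}(y)$. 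Equivalently, one can parametrize $V$ locally by the low-curvature coordinates $u$ as the transverse minimizer $\psi(u)$; the first-order condition makes the transverse component of $\nabla h$ vanish along $V$, so $\frac{d}{du}h(\psi(u))=\nabla h(\psi(u))\cdot\partial_u\psi$ exhibits $\nabla h$ as tangent to $V$, bypassing the variation of the eigenframe.

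Finally I would conclude by invariance. Since $-\nabla h$ is tangent to $V$ at each of its points, its restriction $-\nabla h|_V$ is a well-defined vector field on $V$, locally Lipschitz when $h\in C^2$. Let $\sigma$ be its integral curve in $V$ with $\sigma(0)=x$; by tangency $\sigma$ is also an integral curve of $-\nabla h$ in $M$ through $x$. Since $\gamma$ is the integral curve of $-\nabla h$ in $M$ with $\gamma(0)=x$, uniqueness of solutions of the gradient descent ODE forces $\gamma=\sigma$, so $\gamma(t)\in V$ for all $t$. This shows the gradient descent path starting at the valley point $x$ stays on the valley containing $x$.
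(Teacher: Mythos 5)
Your proposal takes a genuinely different route from the paper's. The paper works pointwise along the trajectory: it Taylor-expands $\nabla h(\gamma(\Delta t)) \approx \nabla h(x) - \Delta t\, Hh(x)\nabla h(x)$, dots this with the (approximately constant) transverse eigenvector $N_x$, uses $N_x^T Hh(x) = \lambda N_x^T$ together with the valley condition $N_x \cdot \nabla h(x) = 0$ to conclude that the transverse component of the gradient remains approximately zero one step later, invokes continuity for the Hessian positivity, and then iterates this infinitesimal step inductively along the path. You instead argue globally: the valley $V$ is a submanifold, $-\nabla h$ is tangent to it, and flow invariance plus uniqueness of integral curves forces $\gamma$ to stay in $V$. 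Your final step (invariance via ODE uniqueness) is clean and is structurally tighter than the paper's informal ``continuing this way'' induction; both proofs, however, hinge on the same crux, namely that the gradient has no transverse component relative to the valley.

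That crux is where your argument has a concrete gap, and it is worth naming because it is exactly the point the paper hides inside its ``$\approx$'' signs. The differential of your defining map $F_m(y) = \nabla h(y)\cdot e_m(y)$ is not governed by the restricted Hessian alone: one has $dF_m(y)\cdot v = v^T Hh(y)\, e_m(y) + \nabla h(y)\cdot (\partial_v e_m)(y)$, and the second, eigenframe-variation term is first order and does not vanish at a valley point, since $\nabla h(y) \neq 0$ there in general. Consequently $\ker dF$, i.e.\ $T_y V$, is not exactly the span of the low-curvature eigenvectors, the normal space is not exactly the span of the transverse eigenvectors, and $\nabla h$ is not exactly tangent to $V$: with Eberly's definition, height valleys are not exactly flow-invariant. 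Your fallback parametrization by the transverse minimizer $\psi(u)$ does not repair this, because the slice-minimizer set differs from the Eberly valley set by the same eigenframe term, and the identity $\frac{d}{du} h(\psi(u)) = \nabla h(\psi(u))\cdot \partial_u \psi$ computes the derivative of $h$ along $V$ but does not show $\nabla h \in T_y V$. To be fair, the paper's proof drops the very same term: when it dots with $N_{\gamma(\Delta t)} \approx N_x$ it discards $\nabla h \cdot (N_{\gamma(\Delta t)} - N_x)$, which is $O(\Delta t)$, the same order as the term it keeps. So your argument is at least as rigorous as the paper's, and it has the virtue of isolating explicitly which geometric fact (tangency of the gradient to the valley) must be either proved under stronger hypotheses or accepted as an approximation.
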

\begin{proof}
  For simplicity, we assume $M=\R^3$ and that the valley is
  two-dimensional. By definition of a 2D valley in $\R^3$, we have
  that $\nabla h(x)\cdot N_x = 0$ and $N_x^T H h(x) \cdot N_x > 0$ for
  some unit direction $N_x\in \R^3$ where $Hh$ denotes the Hessian.
  For every neighborhood $V_x$ of $x$ sufficiently small, there exists
  $y\in V_x$ such that $\nabla h(y)\cdot N_y = 0$ and
  $N_y^THh(y)\cdot N_y > 0$ for some $N_y$. If that were not the case,
  then $x$ would be a isolated critical point, which is not the
  case. By smoothness of $h$, $N$ is a smooth function. Let $S$ be the
  points the satisfy the conditions on the gradient and Hessian in
  $V_x$.

  We consider the path $\gamma$ defined by the gradient descent of $h$
  starting from $x$. Then by definition of $\gamma$ and Taylor
  expansion of $h$,
  \[
    \nabla h[ \gamma(\Delta t) ] \approx \nabla h[ x - \Delta t \nabla h(x)
    ] \approx \nabla h(x) - \Delta t H h(x) \cdot \nabla h(x).
  \]
  Taking the dot product of the above with
  $N_{ \gamma(\Delta t) } \approx N_x$, by a Taylor expansion, we have
  \[
    \nabla h[ \gamma(\Delta t) ] \cdot N_{ \gamma(\Delta t) }  \approx
    - \Delta t N_x^T Hh(x) \cdot \nabla h(x).
  \]
  Note that $N_x^THh(x) = \lambda N_x^T$ with $\lambda >0$ since $N_x$
  is an eigenvector of $Hh(x)$ by definition of valley. Since
  $N_x^T\nabla h(x) = 0$ by the definition of valley, we have that
  $\nabla h[ \gamma(\Delta t) ] \cdot N_{ \gamma(\Delta t) } \approx
  0$. Also,
  $N^T_{ \gamma(\Delta t) } Hh[ \gamma(\Delta t) ] \cdot N_{
    \gamma(\Delta t) } >0$ as $\gamma(\Delta t)\in V_x$. Therefore,
  $\gamma(\Delta t)$ is also in the valley, and thus continuing this
  way, we can show that the path $\gamma$ formed from the gradient
  descent is also in the valley.
\end{proof}

\def\fHeightSS{0.75in}
\def\fHeightS{0.85in}
\begin{figure}
  \centering
  {\footnotesize
    \begin{tabular}{c@{\hspace{0.05in}}c@{\hspace{0.06in}}c@{\hspace{0.06in}}c}
      Ridges & Final Cut & Surface & Ground truth \\
      \includegraphics[clip,trim=90 150 90 130, totalheight=\fHeightSS]{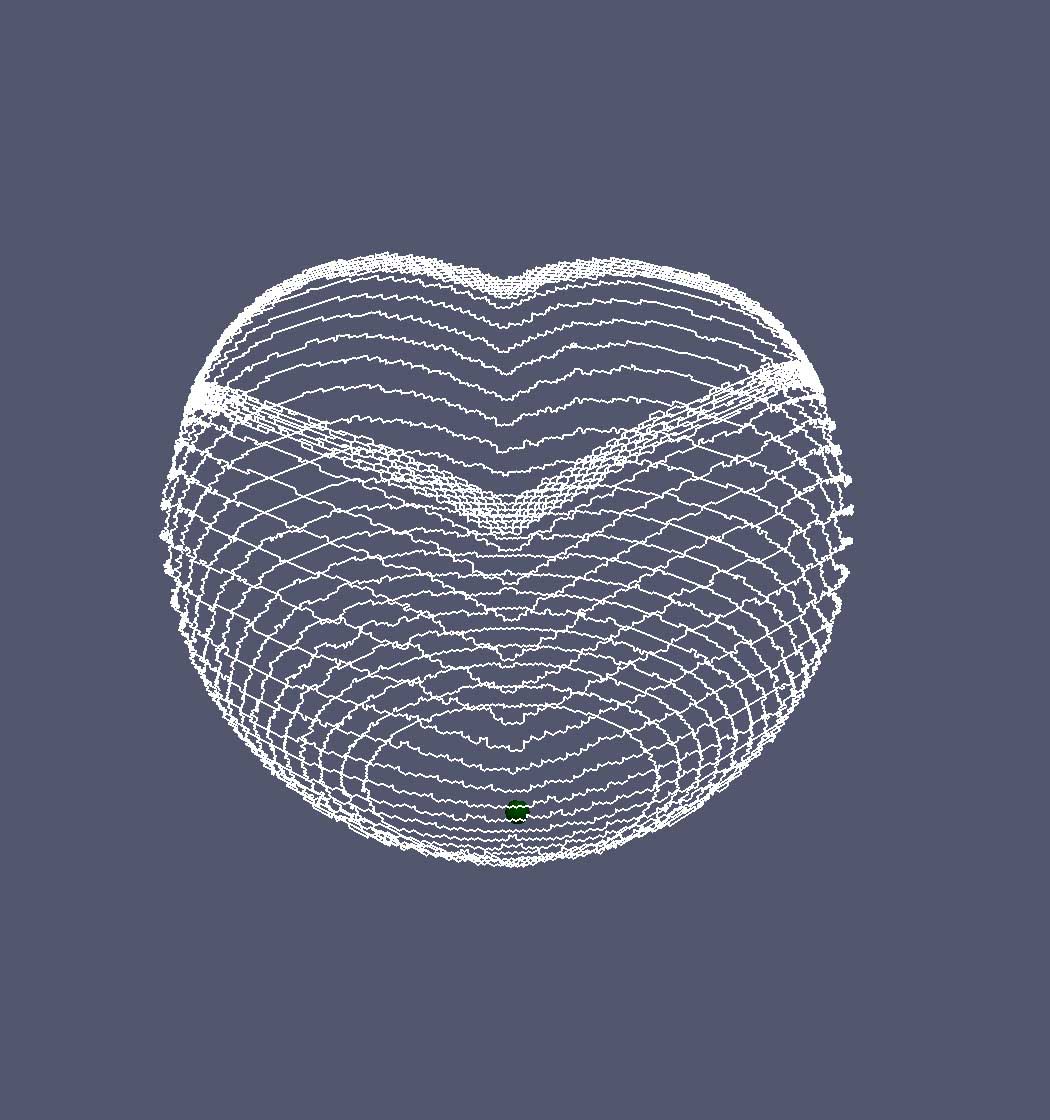} &
      \includegraphics[clip,trim=90 150 90 130, totalheight=\fHeightSS]{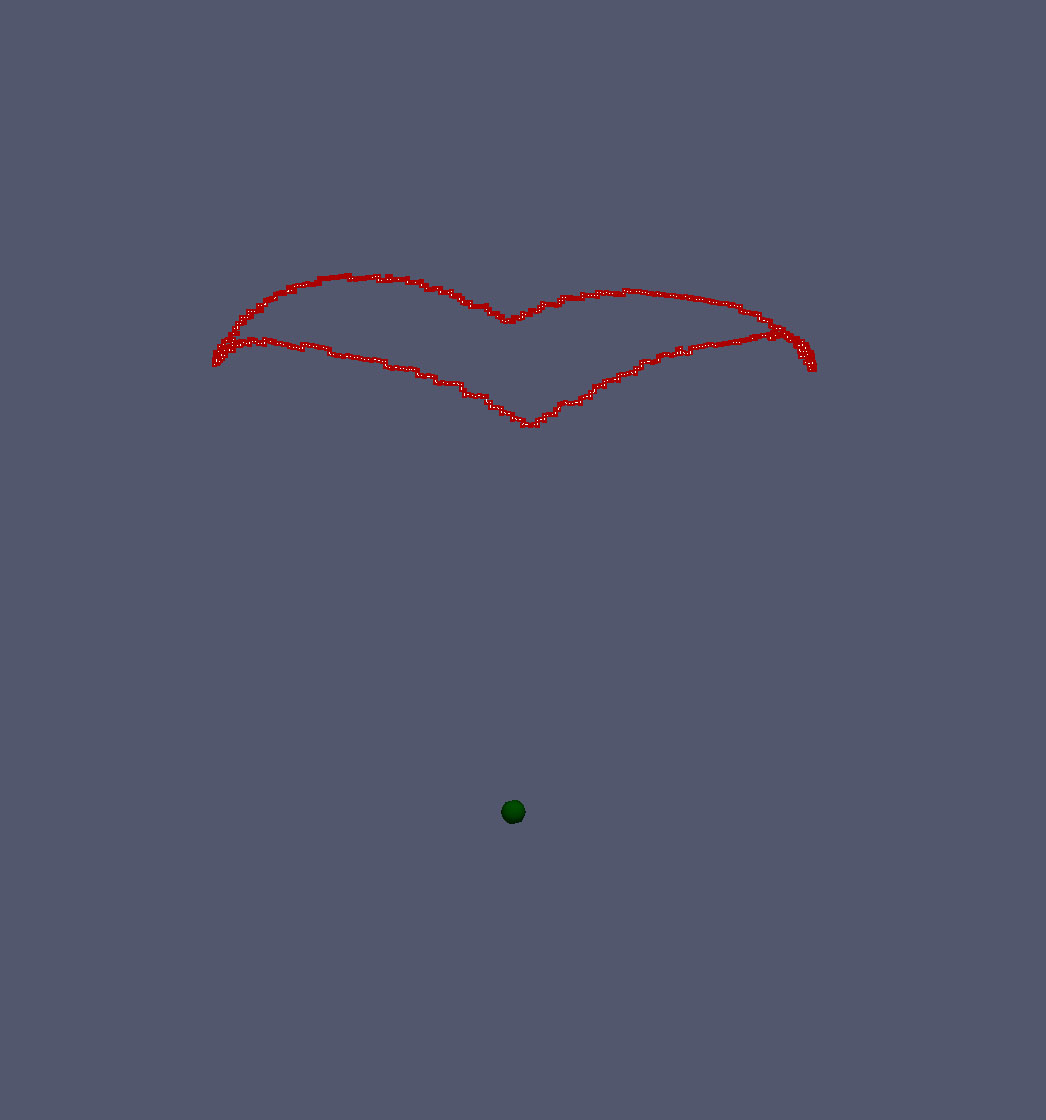} &
      \includegraphics[clip,trim=90 150 90 130, totalheight=\fHeightSS]{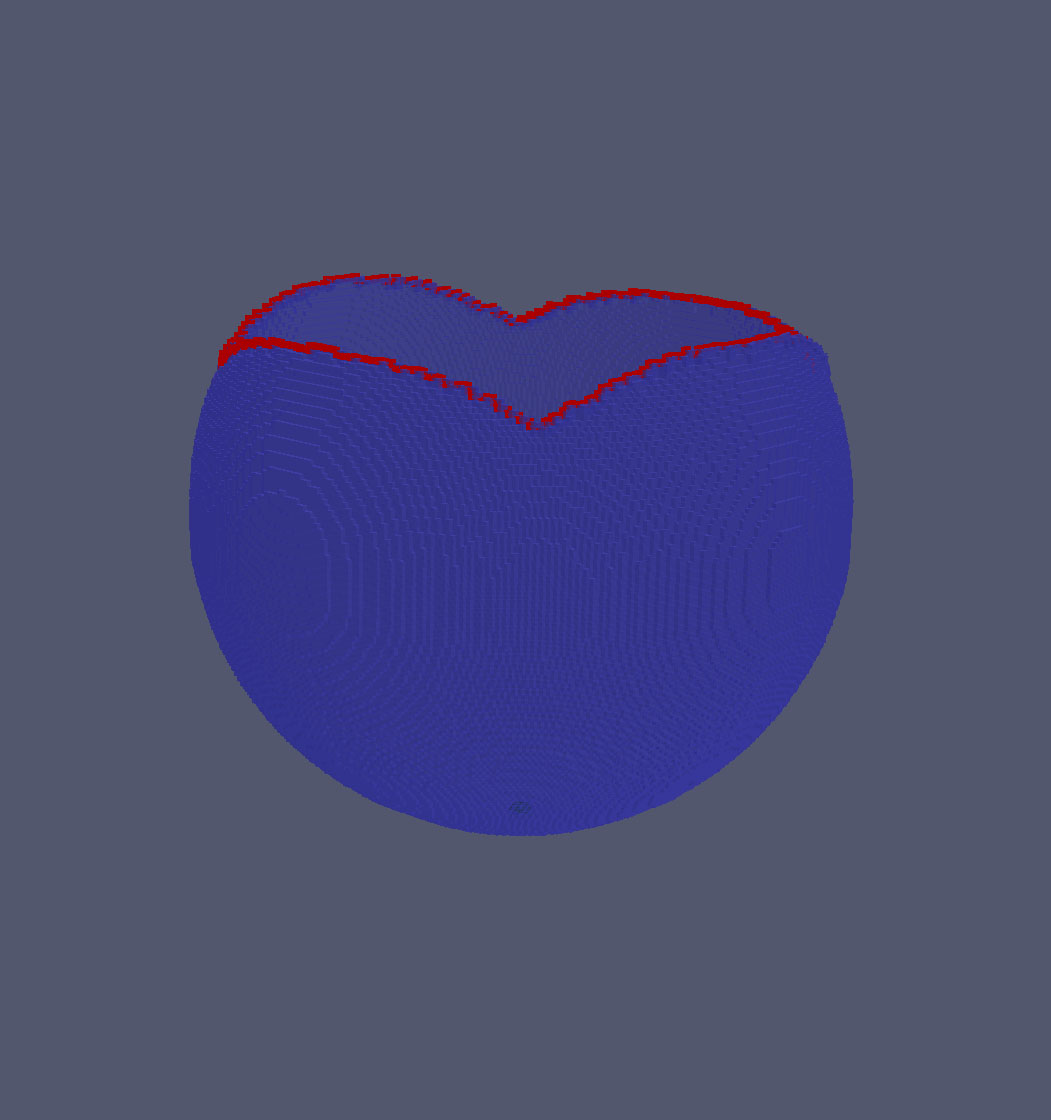} &
      \includegraphics[clip,trim=90 150 90 130, totalheight=\fHeightSS]{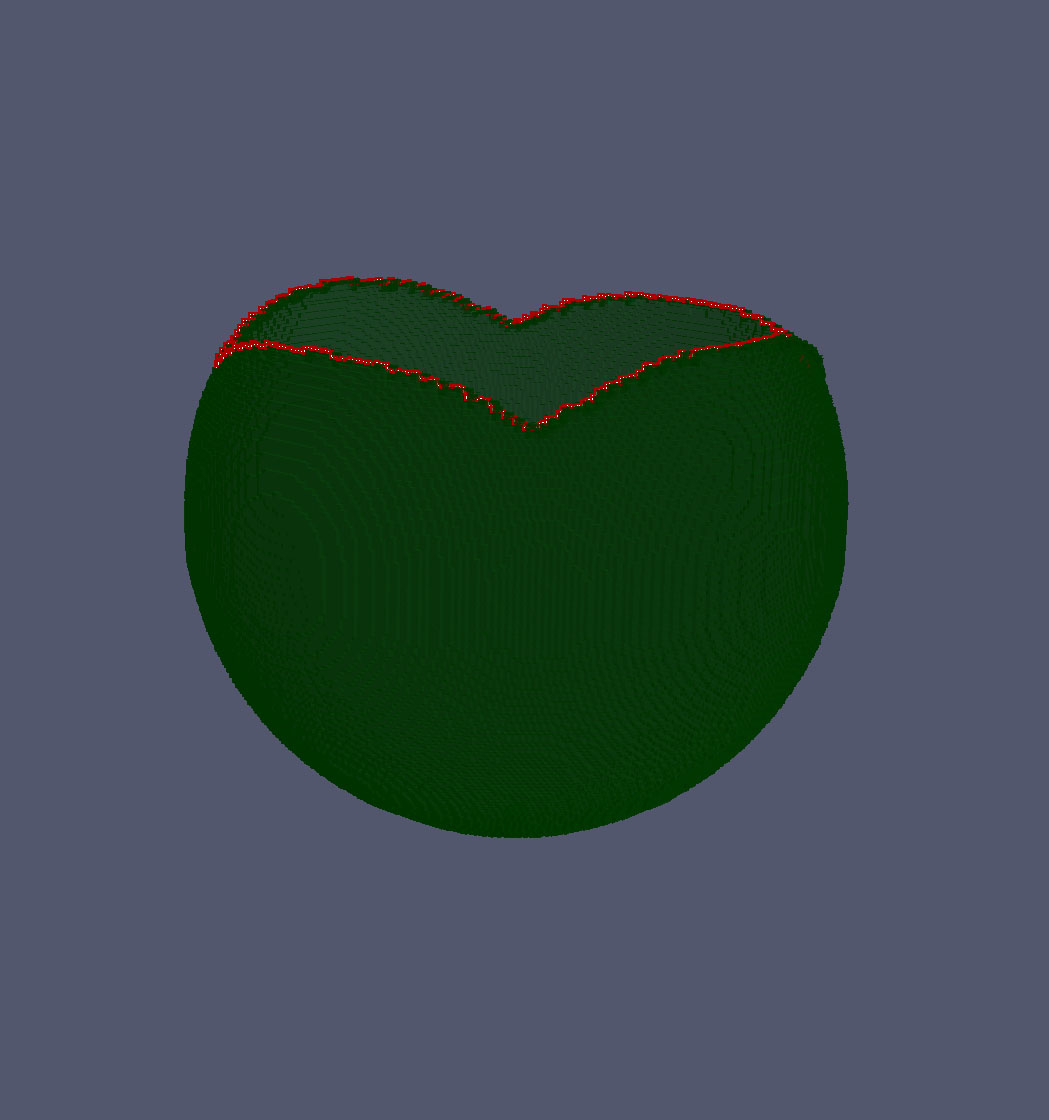} 
    \end{tabular}\\
    Removal of Faces in Image (Algorithm~\ref{alg:surface_extraction}) to Extract Surface  $\rightarrow$ \\
    \begin{tabular}{c@{\hspace{0.05in}}c@{\hspace{0.06in}}c@{\hspace{0.06in}}c}
    \includegraphics[clip,trim=90 90 90 90,totalheight=\fHeightS]{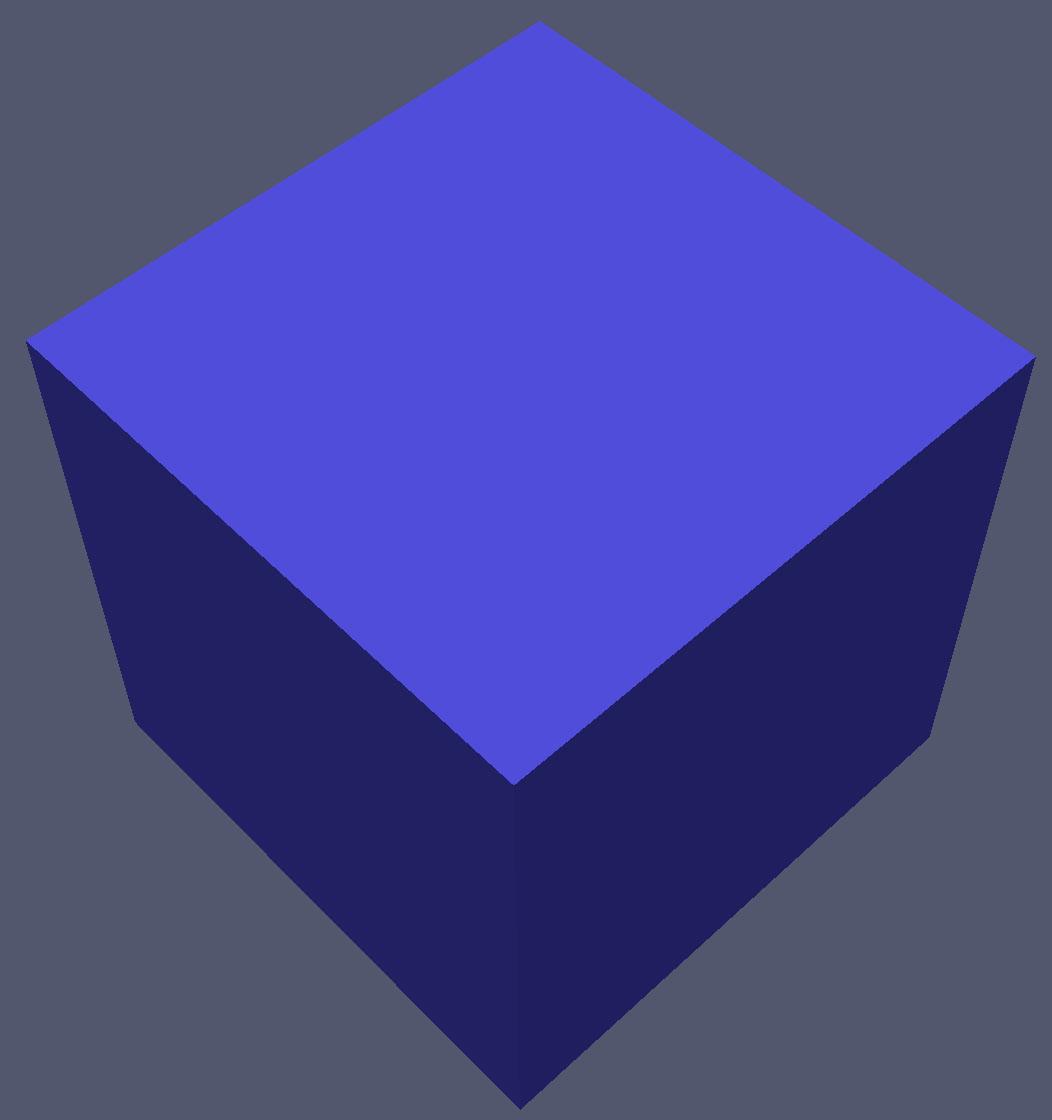} &
    \includegraphics[clip,trim=90 90 90 90,totalheight=\fHeightS]{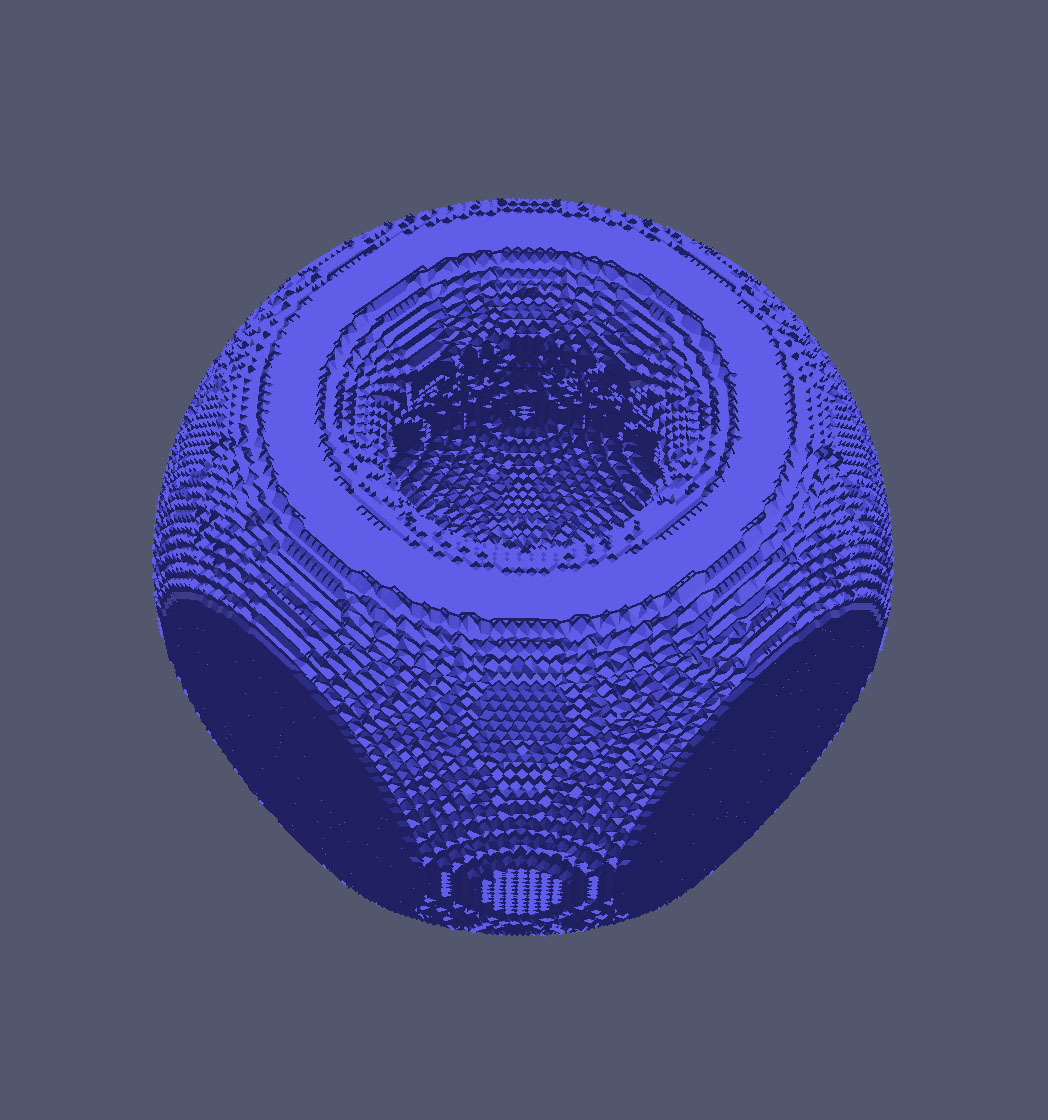} &
    \includegraphics[clip,trim=90 90 90 90,totalheight=\fHeightS]{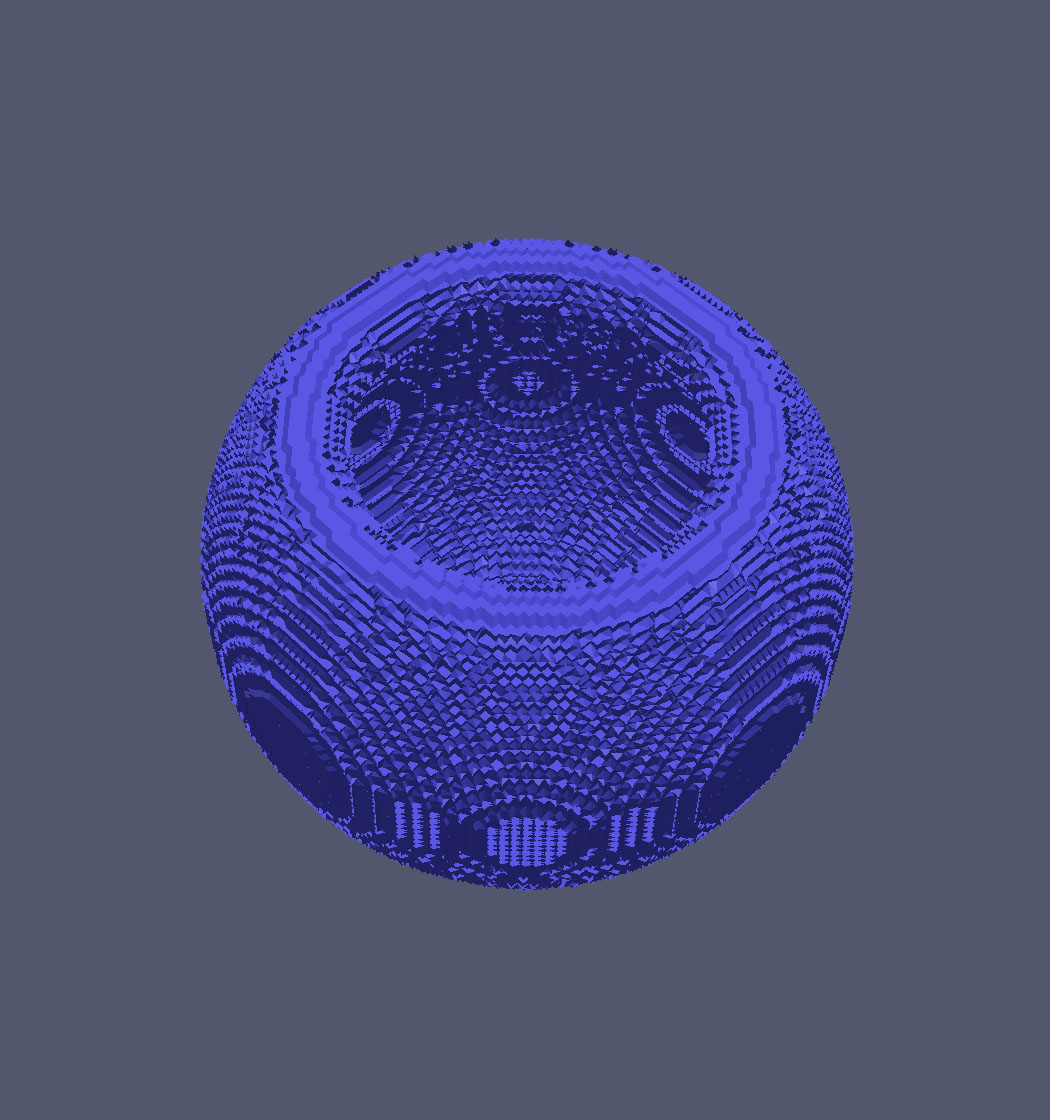} &
    \includegraphics[clip,trim=40 40 40 40,totalheight=\fHeightS]{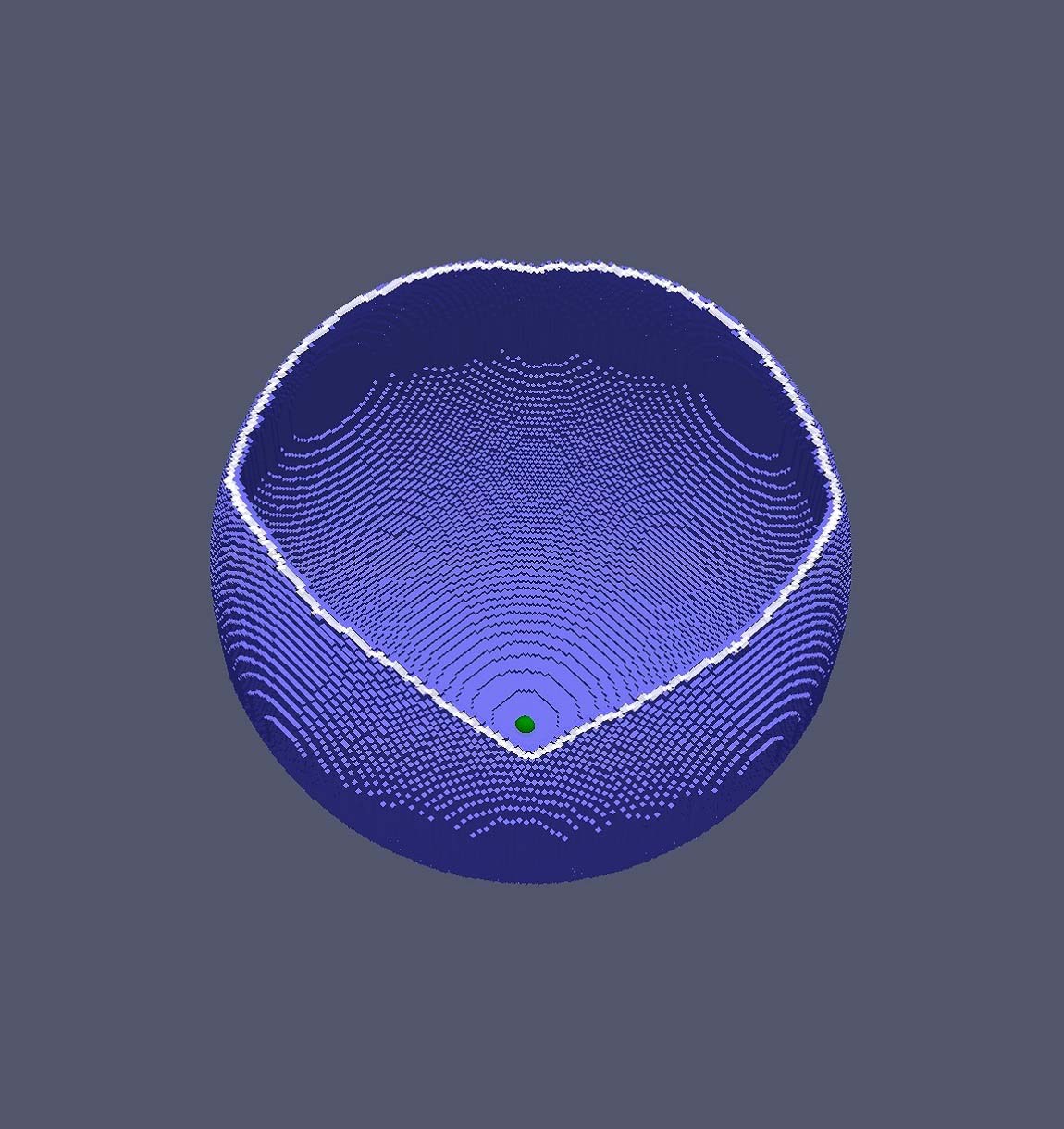}
    \end{tabular}\\
}
\caption{Synthetic example of extracting a sphere with top cut such
  that the boundary is four arcs. The image (not shown) is a noisy
  image of the cut sphere with holes. Ridge curves are
  extracted via Algorithm~1 (top left). The final cut of ridge curves
  (top, middle left), the final surface extracted via Algorithm 2
  (top, middle right), and the ground truth (top, right) are
  shown. Snapshots in the removal of faces in Alg.~2 are shown
  (bottom), resulting in the surface (right).}
\label{fig:sythetic_example}
\end{figure}

Using the last property, we can show the surface extracted by our
algorithm is a collection of minimal paths to $p$.
\begin{prop}
  Suppose $V$ is a valley of $U$, the solution of the eikonal
  equation, containing the seed point $p$ used to define $U$. Then $V$
  is a union of minimal paths to $p$.
\end{prop}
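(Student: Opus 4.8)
The plan is to show that $V$ is foliated by minimal paths by exploiting the characterization of minimal paths as the gradient-descent trajectories of $U$. Recall that for the eikonal solution $U$, the minimal path from any point $x$ to the seed $p$ is obtained by following the negative gradient of $U$ from $x$ until reaching $p$; this is the standard back-tracing property of Fast Marching already invoked in Section~\ref{subsec:fast_marching}. So it suffices to show that for every point $x\in V$, the full minimal path from $x$ to $p$ stays inside $V$, and that these paths sweep out all of $V$.

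First I would invoke Proposition~\ref{prop:valley_grad}: since $V$ is a valley of $U$, the gradient-descent path $\gamma$ of $U$ with initial condition any $x\in V$ remains in $V$. Because the minimal path from $x$ to $p$ is exactly this gradient-descent trajectory of $U$, the entire minimal path $\gamma_{x,p}$ lies in $V$. Thus $V$ contains the minimal path to $p$ from each of its points, giving the inclusion
\[
  \bigcup_{x\in V} \gamma_{x,p} \subseteq V.
\]
The reverse inclusion is immediate: every $x\in V$ lies on its own minimal path $\gamma_{x,p}$, so $V\subseteq \bigcup_{x\in V}\gamma_{x,p}$. Combining the two inclusions yields that $V$ equals the union of these minimal paths, which is the claim. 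I would also note that $p\in V$ is needed so that the trajectories terminate at a point of $V$ rather than exiting through the boundary; since $\gamma_{x,p}(\infty)=p$ and $p\in V$, the paths are genuinely minimal paths \emph{to} $p$ lying wholly in $V$.

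The main obstacle is justifying that the gradient-descent trajectory stays in $V$ all the way to $p$, not merely for a short time. Proposition~\ref{prop:valley_grad} is stated as a local, infinitesimal argument ($\gamma(\Delta t)$ remains a valley point), so the step that needs care is propagating this to the whole path: one must argue that the valley condition (the existence of a smooth normal field $N$ with $\nabla U\cdot N=0$ and $N^THU\,N>0$) persists along the entire trajectory by a continuation/connectedness argument, and that the trajectory does not leave the connected valley component $V$ containing $p$ before arriving. I would handle this by iterating the local step of Proposition~\ref{prop:valley_grad}: at each reached point the same argument applies, so the set of times for which $\gamma(t)\in V$ is both open and closed in $[0,\infty)$, hence all of it. A secondary subtlety is the degenerate case $\nabla U(x)=0$, but on a valley away from isolated critical points the gradient descent is well-defined, and the eikonal equation forces $|\nabla U|=\phi>0$ except at $p$, so the flow is regular up to $p$.
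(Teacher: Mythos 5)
Your proposal is correct and follows essentially the same route as the paper: both invoke Proposition~\ref{prop:valley_grad} to keep the gradient-descent path of $U$ inside the valley, identify that path with the minimal path to $p$, and conclude that $V$ is the union of these paths. Your additional care about the two set inclusions and the open--closed continuation argument makes explicit what the paper (and its ``continuing this way'' step in Proposition~\ref{prop:valley_grad}) leaves informal, but it is the same argument.
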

\begin{proof}
  Let $x\in V$ then the path $\gamma_x$ formed from the gradient
  descent of $U$ starting from $x$ stays in $V$ by
  Proposition~\ref{prop:valley_grad}. The path $\gamma_x$ is also a
  minimal path since gradient paths of $U$ are minimal paths. Note
  that $\gamma_x$ ends at $p$. Therefore, we see that $V$ is the union
  of $\gamma_x$ over all $x$.
\end{proof}

\section{Experiments}

\label{sec:expts}

Supplementary video are
available\footnote{\url{https://sites.google.com/site/surfacecut/pami}}. We
qualitatively and quantitatively assess our method by comparing
against competing algorithms.

\subsection{Datasets and Parameters}
We evaluate our method on three datasets of 3D images.

{\bf Synthetic Dataset}: We construct a synthetic dataset consisting
of 20 different surfaces with boundary at three different image
resolutions, $100\times 100\times 100$, $500\times 500\times 500$ and
$800\times 800\times 800$. Each of the surfaces have different 3D
boundary curves of different shape, and surfaces that have various
degrees of coarse and fine features. Example surfaces are shown in
Fig.~\ref{fig:synthetic_surfaces}. The images are formed by setting
pixels not within distance 1 to the surface to 1 and all other pixels
to 0. The surfaces meshes are downsampled for the lower resolution
images. Noise with level $\sigma=0.1$ is then added to the images.

\def\fHeightSynth{1.3in}
\begin{figure}
  \centering
  \includegraphics[width=\fHeightSynth]{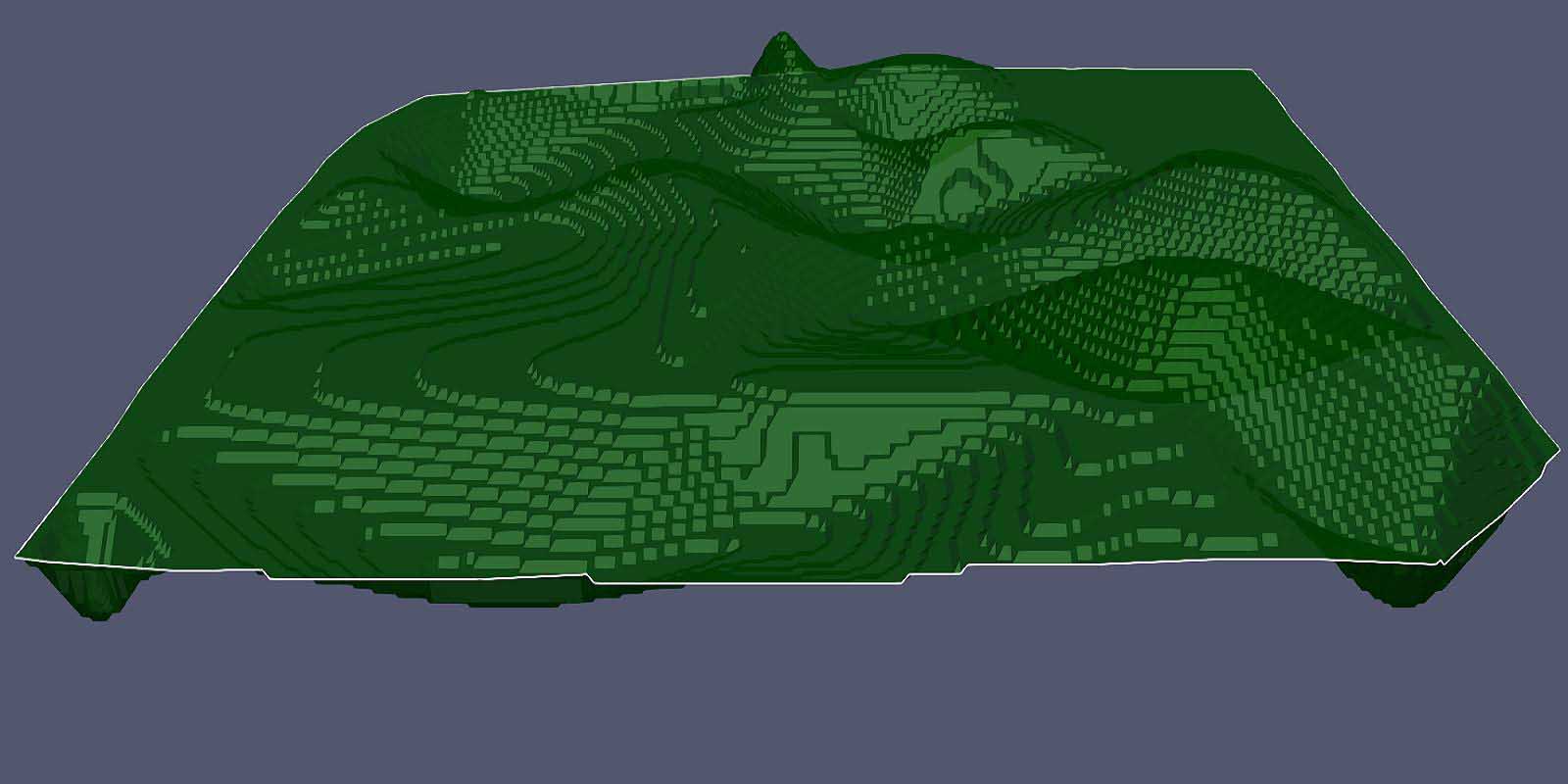}
  \includegraphics[clip, trim=0 31 0 18,
  width=\fHeightSynth]{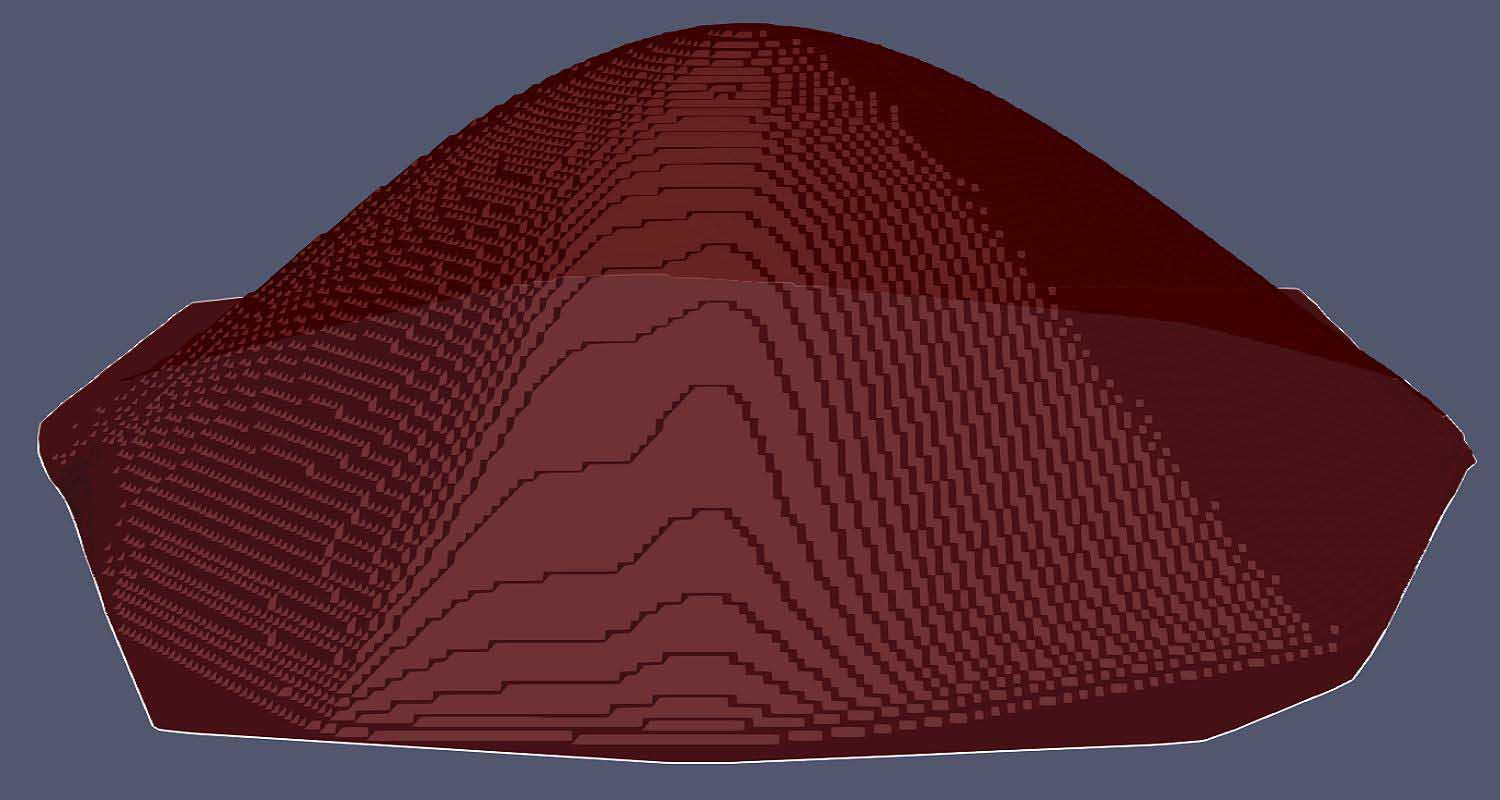}\\\vspace{1mm}%
  \includegraphics[width=\fHeightSynth]{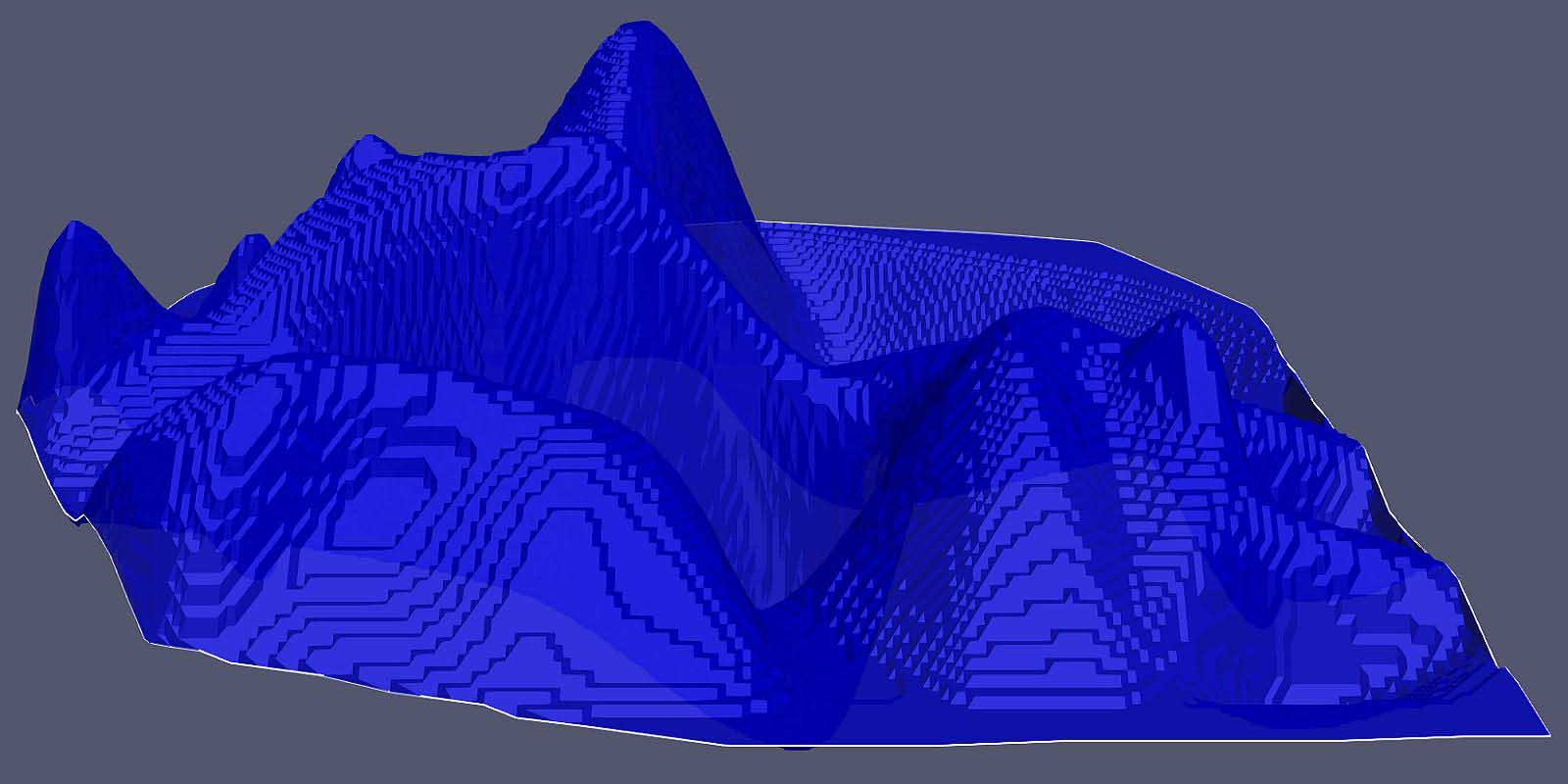}
  \includegraphics[width=\fHeightSynth]{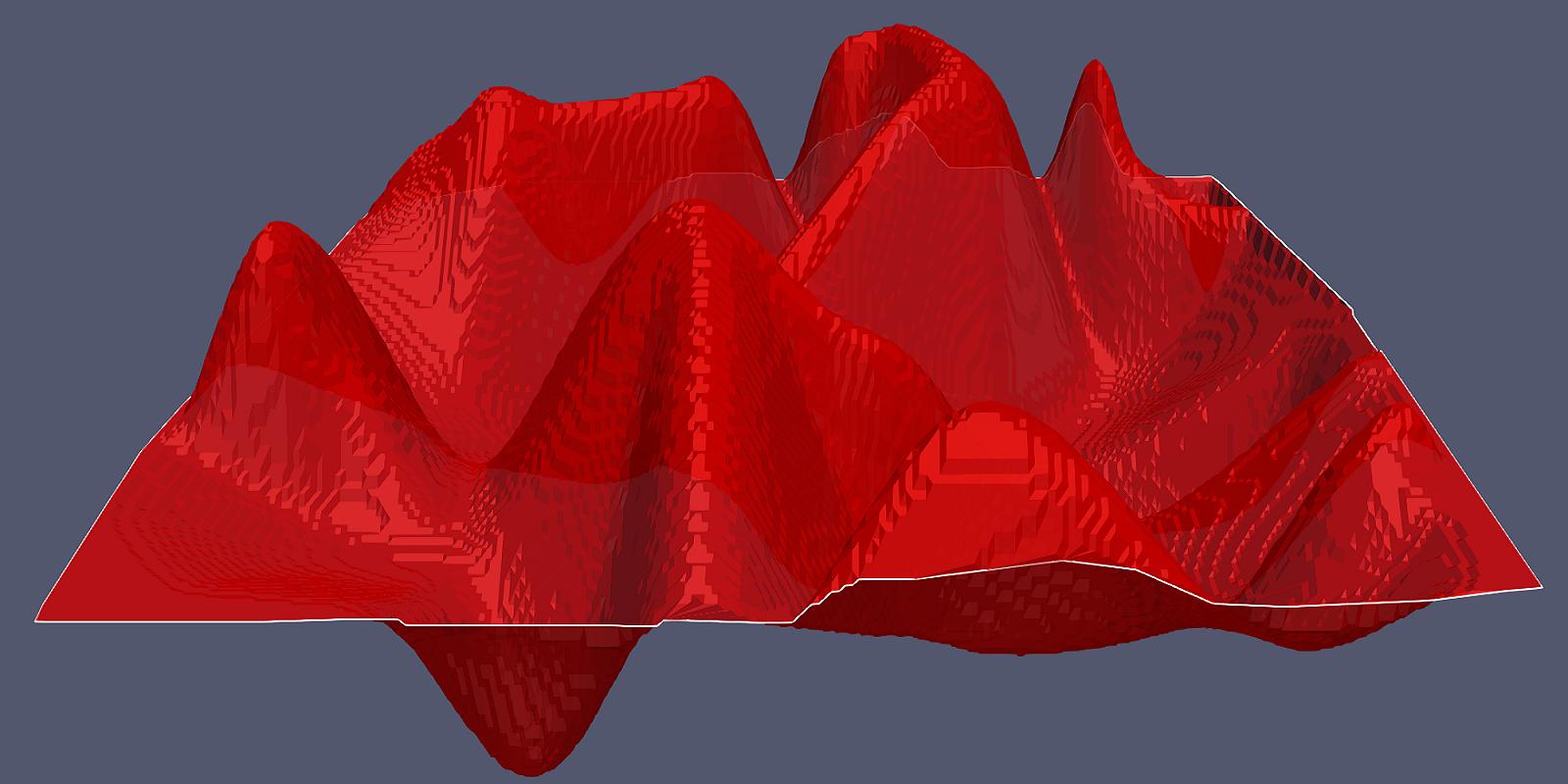}
  \caption{Example surfaces in our synthetic dataset. Each surface has
    a different boundary curve, and the surfaces are of different
    shape, exhibiting various degrees of randomness.}
  \label{fig:synthetic_surfaces}
\end{figure}

{\bf Seismic Dataset}: Seismic images are formed from measurements of
seismic pulses reflected back from the earth's sub-surface. They are
3D images, and are used to measure geological structures. We have a
dataset of three volumes with dimensions $463\times 951 \times
651$. The goal is to extract fault surfaces, which form free-boundary
surfaces within the volume. Faults may have significant curvature, and
the boundaries are non-planar. The images are cluttered and noisy, and
faults can be found by locating discontinuities, which is difficult
due to subtle edges. Each image consists of multiple faults. We have
obtained ground truth segmentations (human annotated) of two faults
within each image for each slice.

{\bf Lung CT Dataset}: We use a dataset of 10 3D computed tomography
(CT) of the lung of cancer patients from the Cancer Imaging Archive
(TCIA) \cite{hugo2016}. Each image has size $512\times 512 \times Z$,
where $Z$ varies between $300$ and $700$, depending on the
patient. Our goal is to segment lung fissures (e.g.,
\cite{lassen2013automatic,xiao2016pulmonary}), which are the
boundaries between sections of the lung. They are very thin, subtle
structures, and form free-boundary surfaces. Each of the lung fissures
in each image is human annotated, for every slice.

%https://wiki.cancerimagingarchive.net/display/Public/Collections

{\bf Parameters}: Our algorithm, given the local surface likelihood
$\phi$, requires only one parameter, the threshold on the cut cost. In
all experiments, we choose this to be $T=5$. This is not sensitive to
the data (see Supplementary).

\subsection{Evaluation Methodology}

We validate our results with quantification measures for both the
accuracy of the surface boundary and the surface using quantities
analogous to the precision, recall and F-measure. We represent the
surface and its boundary as voxels. Let $S_{r}$ denote the surface
returned by an algorithm and let $S_{gt}$ be the ground truth
surface. Denote by $\partial S_{r}$ and $\partial S_{gt}$ the
respective boundaries. We define
\begin{align*}
  N( r\rightarrow gt) &= |\{ v \in S_r \,:\, d_{S_{gt}}(v) <
                        \varepsilon \} | \\
  N( gt\rightarrow r)  &= |\{ v \in S_{gt} \,:\, d_{S_r}(v) <
                         \varepsilon \}| \\
  P_S &= N( r\rightarrow gt)   /  |S_r |, \\
  R_S &= N( gt\rightarrow r) / |S_{gt}|, \\
  F_S &= 2 P_S R_S / (P_S + R_S) \\
  \text{GT Cov.} &=   ( N( r\rightarrow gt) + N( gt\rightarrow r) ) / (  |S_r | +  |S_{gt}| )
\end{align*}
where $d_S(v)$ denotes the distance between $v$ and the closet point
to $S$ using Euclidean distance, $|\cdot|$ denotes the number of
elements of the set, and $\varepsilon > 0$. The precision measures how
close the returned surface matches to the ground truth surface.  The
recall defined above measures how close the ground truth matches to
the surface. The $F$-measure provides a single quantity summarizing
both precision and recall. GT-Cov.~is another metric summarizing both
the precision and recall. All quantities are between 0 and 1 (higher
is more accurate). The precision and recall are similar to accuracy and
completness for closed surfaces in evaluating stereo reconstruction
algorithms \cite{seitz2006comparison}.  We similarly define precision
$P_{\partial S}$, recall $R_{\partial S}$ and $F$-measure for
$\partial S_r$ and $\partial S_{gt}$ using the same formulas but with
the surfaces replaced with their boundaries. We set $\varepsilon = 3$
to account for inaccuracies in the human annotation.

\subsection{Evaluation}

\subsubsection{Synthetic Data: Surface Extraction Given Boundary}

We first evaluate three methods for surface extraction given a
3D-boundary curve of the surface, discrete-minimal surface computed
with linear programming (LP) \cite{grady2010minimal}, discrete-minimal
surface approximated with Minimum-Cost Network Flow (MCNF)
\cite{grady2010minimal, goldberg1997efficient,
  brunsch2015smoothed,ford2015flows}, and our surface extraction,
described in Section~\ref{subsec:surface_extraction}.  We use Gurobi's
state-of-the-art linear programming implementation, to implement LP.
We use the Lemon library \cite{dezsHo2011lemon} to implement MCNF.
There are no other methods that solve this problem. We choose $\phi$
to be the image. All methods are provided the ground truth 3D boundary
curves. We evaluate the methods in terms of computational time, and in
terms of surface accuracy. A summary of results are provided in
Table~\ref{tab:synth}. Average of results over all the images are
provided. Our method is computationally faster than all other methods
at all resolutions. LP is unable to perform in a reasonable time frame
for images sizes above $100^3$, and MCNF is unable to perform for
image sizes above $500^3$. At all resolutions, our method is
faster. Speeds are reported on a single Pentium 2.3 GHz processor. The
accuracy of our method is also the highest on all measures, but all
have similar accuracies. The advantage of our method is clearly speed,
and ability to deal with high resolution images.  Note that the
analysis was not extended to the real datasets as they have high
resolution, making it too computationally expensive to test, and
down-sampling the images destroys the structures to be extracted.

\comment{
\begin{table}
  \caption{Comparison of methods for surface extraction given the
    surface boundary on the synthetic dataset. Speed (in seconds),
    surface precision (P), recall (R), F-measure (F), and ground truth
    covering (GT-cov) are reported. Higher P, R, F, GT-Cov.~indicate
    better fidelity to the ground truth.}
  \label{tab:synth}
  \centering
  $100 \times 100 \times 100$ pixel images \\ \vspace{2mm}
  \begin{tabular}{ l | l | l | l | l | l }
    Method & Time (secs) & $F_S$ &  GT-Cov. &  $P_S$ &  $R_S$ \\
    \hline
   LP & 1167$\pm$126 & 0.93$\pm$0.01 & 0.94$\pm$0.01 & 0.91$\pm$0.02 & 0.96$\pm$0.01    \\
   MCNF & 12.75$\pm$2.07 & 0.92$\pm$0.01 & 0.90$\pm$0.01 & 0.93$\pm$0.01 & 0.92$\pm$0.02    \\
   Surfcut & 1.87 & 0.95$\pm$0.02 & 0.95$\pm$0.02 & 0.96$\pm$0.02 & 0.94$\pm$0.03    \\
    \hline
  \end{tabular}\\ \vspace{3mm}
  $500 \times 500 \times 500$ pixel images \\ \vspace{2mm}
  \begin{tabular}{ l | l | l | l | l | l }
    Method & Time (secs) & $F_S$ &  GT-Cov. &  $P_S$ &  $R_s$ \\
    \hline
    LP & $>$24hrs & NA & NA & NA & NA    \\
    MCNF & 35614$\pm$1991 & 0.94$\pm$0.01 & 0.92$\pm$0.01 & 0.94$\pm$0.01 & 0.93$\pm$0.01    \\
    Surfcut & 421.17$\pm$63.32 & 0.96$\pm$0.01 & 0.96$\pm$0.01 & 0.97$\pm$0.01 & 0.94$\pm$0.01    \\
    \hline
  \end{tabular}\\ \vspace{3mm}
  $800 \times 800 \times 800$ pixel images \\ \vspace{2mm}
  \begin{tabular}{ l | l | l | l | l | l }
    Method & Time (secs) & $F_S$ &  GT-Cov. &  $P_S$ &  $R_s$ \\
    \hline
    LP & $>$24hrs & NA & NA & NA & NA \\
    MCNF & $>$24hrs & NA & NA & NA & NA \\
    Surfcut & 2226.67$\pm$312.67 & 0.96$\pm$0.01 & 0.97$\pm$0.01 & 0.98$\pm$0.01 & 0.95$\pm$0.02    \\
    \hline
  \end{tabular}
\end{table}
}

\begin{table}
  \caption{Comparison of methods for surface extraction given the
    surface boundary on the synthetic dataset. Speed (in seconds),
    surface precision (P), recall (R), F-measure (F), and ground truth
    covering (GT-cov) are reported. Higher P, R, F, GT-Cov.~indicate
    better fidelity to the ground truth.}
  \label{tab:synth}
  \centering
  $100 \times 100 \times 100$ pixel images \\ \vspace{2mm}
  \begin{tabular}{ l | l | l | l | l | l }
    Method & Time  & $F$ &  GT-Cov. &  $P$ &  $R$ \\
    \hline
    LP & 1167 & 0.93$\pm$0.01 & 0.94$\pm$0.01 & 0.91$\pm$0.02 & 0.96$\pm$0.01    \\
    MCNF & 12.75 & 0.92$\pm$0.01 & 0.90$\pm$0.01 & 0.93$\pm$0.01 & 0.92$\pm$0.02    \\
    Surfcut & {\bf 1.87} & {\bf 0.95$\pm$0.02} & {\bf 0.95$\pm$0.02} &
                                                                       {\bf
                                                                       0.96$\pm$0.02}
                                           & {\bf 0.94$\pm$0.03}    \\
    \hline
  \end{tabular}\\ \vspace{3mm}
  $500 \times 500 \times 500$ pixel images \\ \vspace{2mm}
  \begin{tabular}{ l | l | l | l | l | l }
    Method & Time & $F$ &  GT-Cov. &  $P$ &  $R$ \\
    \hline
    LP & $>$24hr & NA & NA & NA & NA    \\
    MCNF & 35614 & 0.94$\pm$0.01 & 0.92$\pm$0.01 & 0.94$\pm$0.01 & 0.93$\pm$0.01    \\
    Surfcut & {\bf 421} &  {\bf 0.96$\pm$0.01} &  {\bf 0.96$\pm$0.01} &  {\bf 0.97$\pm$0.01} &  {\bf 0.94$\pm$0.01}    \\
    \hline
  \end{tabular}\\ \vspace{3mm}
  $800 \times 800 \times 800$ pixel images \\ \vspace{2mm}
  \begin{tabular}{ l | l | l | l | l | l }
    Method & Time & $F$ &  GT-Cov. &  $P$ &  $R$ \\
    \hline
    LP & $>$24hr & NA & NA & NA & NA \\
    MCNF & $>$24hr & NA & NA & NA & NA \\
    Surfcut &  {\bf 2227} &  {\bf 0.96$\pm$0.01} &  {\bf 0.97$\pm$0.01} &  {\bf 0.98$\pm$0.01} &  {\bf 0.95$\pm$0.02}    \\
    \hline
  \end{tabular}
\end{table}

\subsubsection{Seismic Data: Surface and Boundary Extraction}

We now compare against the competing method for free boundary surface
extraction.  To the best of our knowledge, there is no other general
algorithm that extracts both the boundary of the free-surface and the
surface given a seed point. Therefore, we compare our method in an
interactive setting and automated setting (with seed points
automatically initialized) to Crease Surfaces
\cite{schultz2010crease}. It computes the smoothed Hessian of $\phi$,
and computes a modified matrix based on the relative difference in the
first and second highest eigenvalues. It then forms the surface by
determining locations where the eigenvector aligns with the gradient,
and constructs connected surfaces. In an interactive setting, we
choose the surface returned by \cite{schultz2010crease} that is near
to the user provided seed point (and best fits ground truth) to
provide comparison to our method. In an automated setting, we use a
seed point extraction algorithm (described later) to initialize our
surface extraction.

\comment{
\cite{schultz2010crease} returns all surfaces by detecting connected
components of maxima of the local surface likelihood map, which is
computed from $\phi$. It constructs a likelihood map from of the
eignevalues of the smoothed Hessian of $\phi$ at a pixel, and setting
the likelihood proportional to the relative difference in the first
and second highest eignevalues.
}

We choose $\phi(x)$ to be the semblance measure in
\cite{hale2013methods}; this along with \cite{schultz2010crease} is
state-of-the-art for seismic data.

{\bf Robustness to Smoothing Degradations}: The semblance $\phi$
contains a smoothing parameter, which must be tuned to achieve a
desirable segmentation. Therefore, it is important that the surface
extraction algorithm be robust to changes in the parameter of the
likelihood. Thus, we evaluate our algorithm as we vary the smoothing
parameter.  The smoothing parameter is varied from
$\sigma = 0, 2, 3, \ldots, 14$.  We initialize our algorithm with a
user specified seed point. Quantitative results are shown in
Figure~\ref{fig:smoothing_expt}, where we plot the F-measure versus
the smoothing amount both in terms of surface and boundary
measures. Some visual results of the surfaces are shown in
Fig.~\ref{fig:smoothing_degrad_vis}.  Notice our method degrades only
gradually and maintains consistently high accuracy in both measures in
contrast to \cite{schultz2010crease}.

\def\fHeightFm{2.2cm}
\begin{figure}
  \centering
  \includegraphics[totalheight=\fHeightFm]{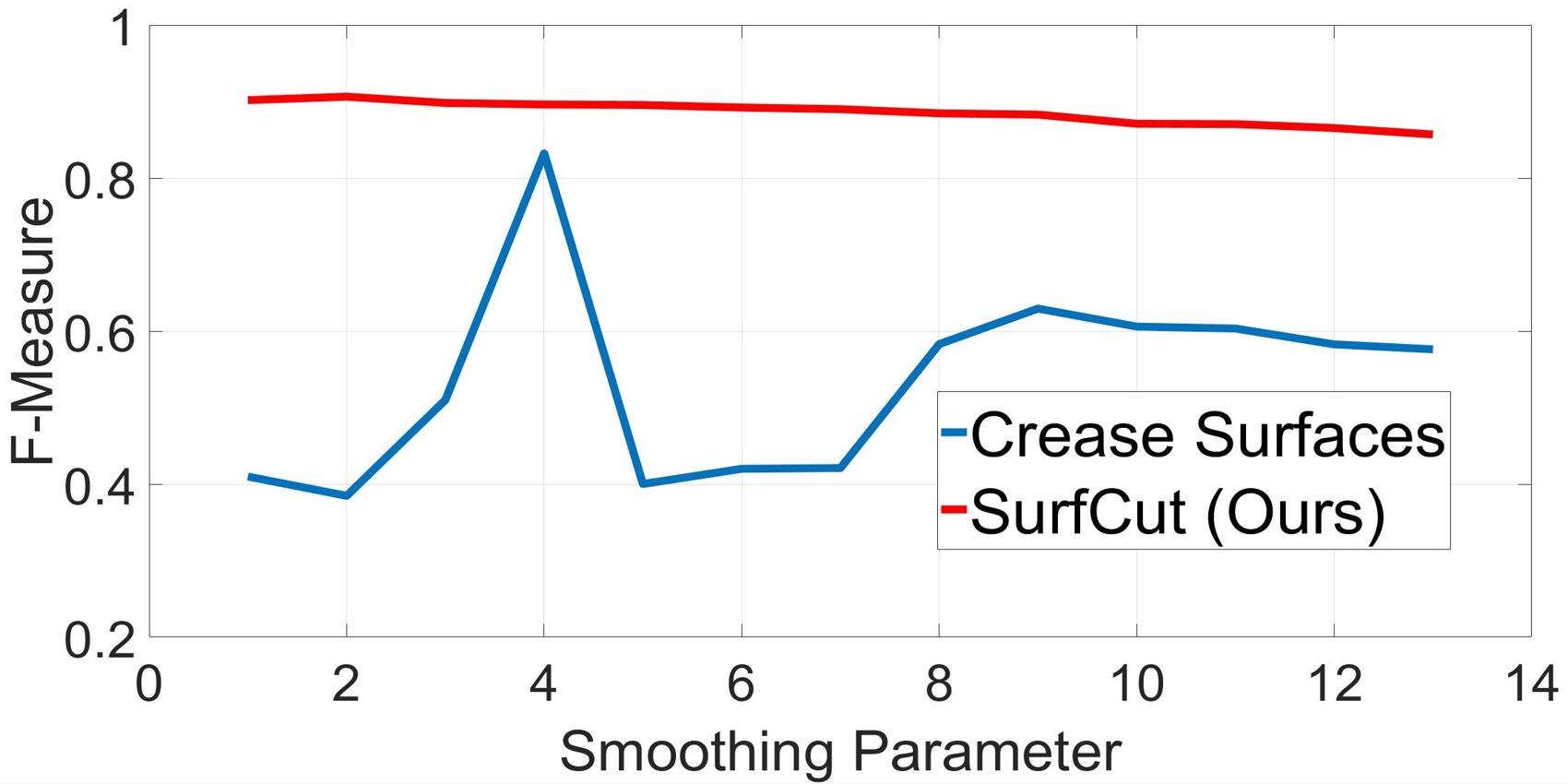}
  \includegraphics[totalheight=\fHeightFm]{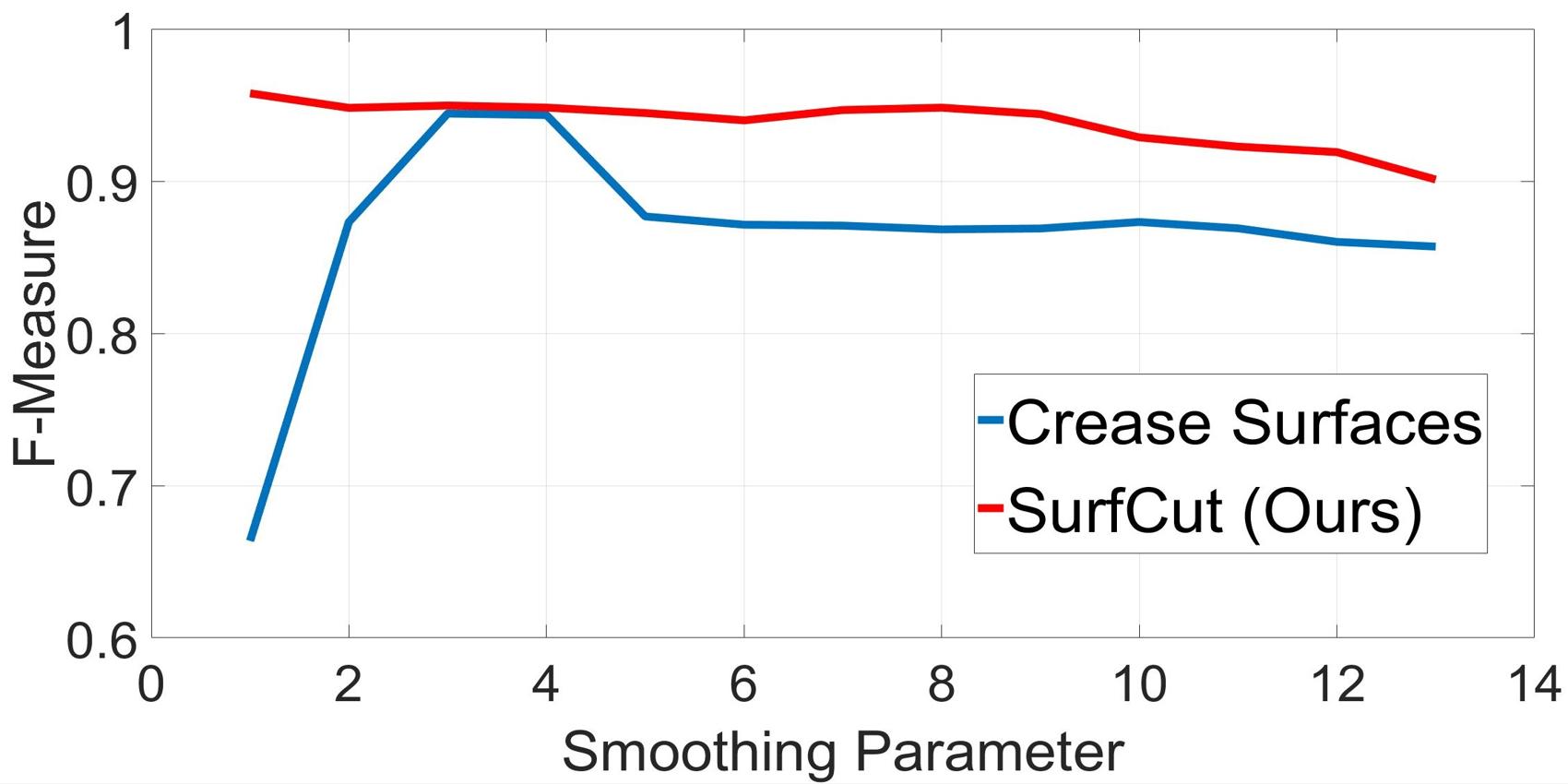}
  \caption{{\bf Quantitative Analysis of Smoothing Parameter}
    Boundary (left) and surface (right) F-measure versus smoothing degradations for our method and \cite{schultz2010crease}. }
  \label{fig:smoothing_expt}
\end{figure}

\def\fHeightFaultSmoothing{0.45in}
\begin{figure}
  \centering
  \footnotesize
  {Crease Surface (for increasing smoothing parameter $\rightarrow$ ) \\
    \includegraphics[totalheight=\fHeightFaultSmoothing]{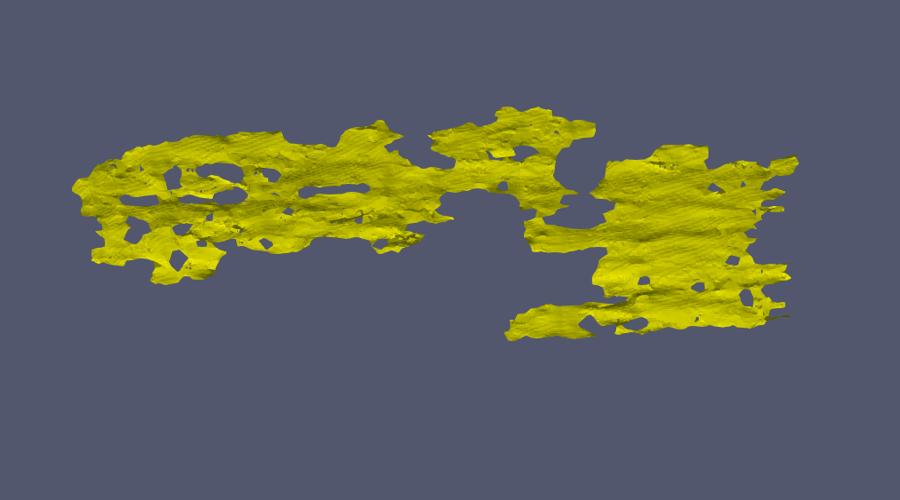} 
    \includegraphics[totalheight=\fHeightFaultSmoothing]{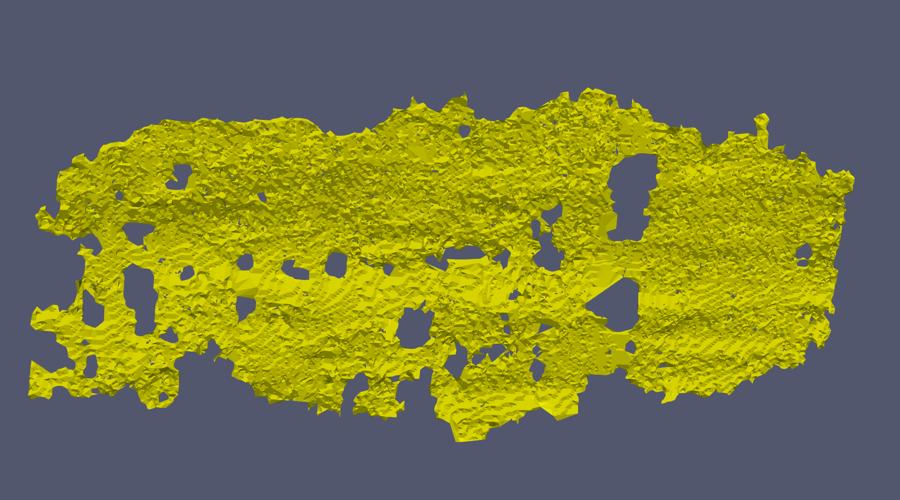} 
    \includegraphics[totalheight=\fHeightFaultSmoothing]{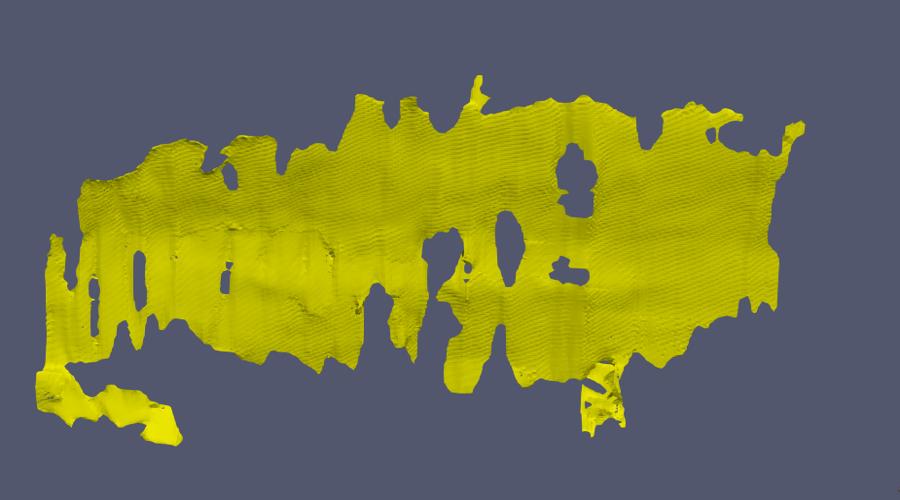} 
    \includegraphics[totalheight=\fHeightFaultSmoothing]{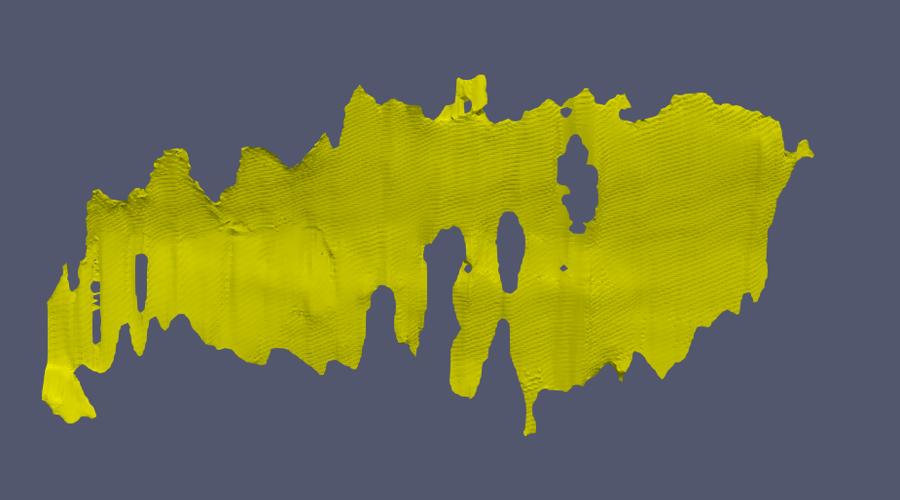} \\
    SurfCut \\
    \includegraphics[totalheight=\fHeightFaultSmoothing]{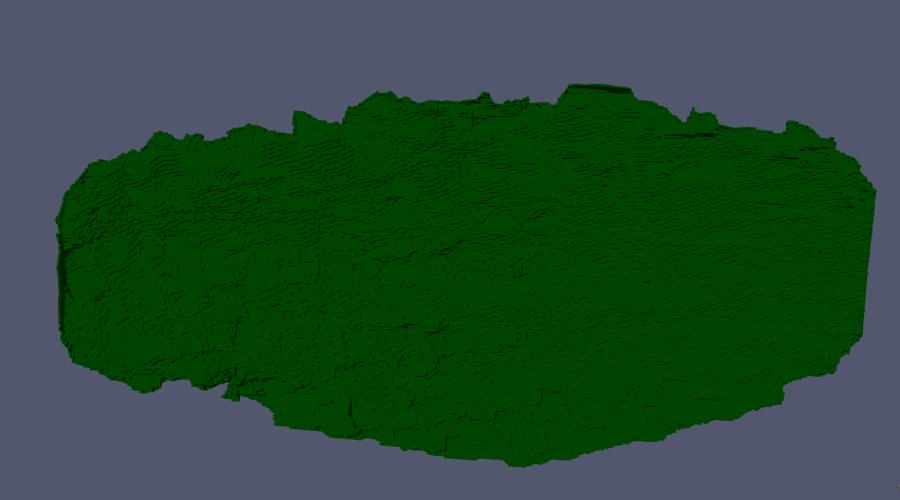} 
    \includegraphics[totalheight=\fHeightFaultSmoothing]{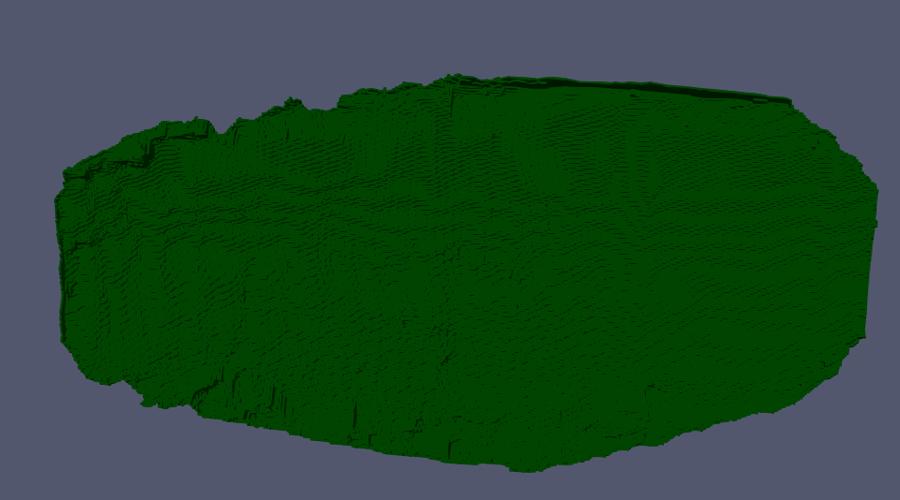} 
    \includegraphics[totalheight=\fHeightFaultSmoothing]{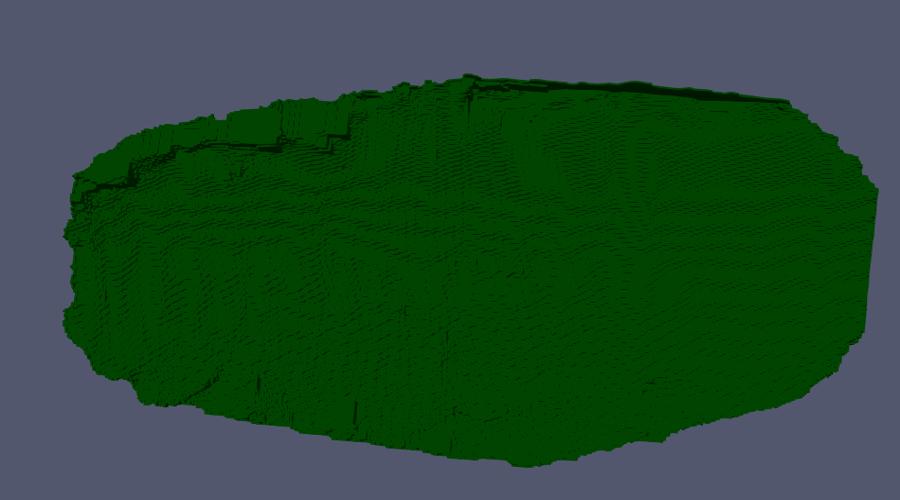} 
    \includegraphics[totalheight=\fHeightFaultSmoothing]{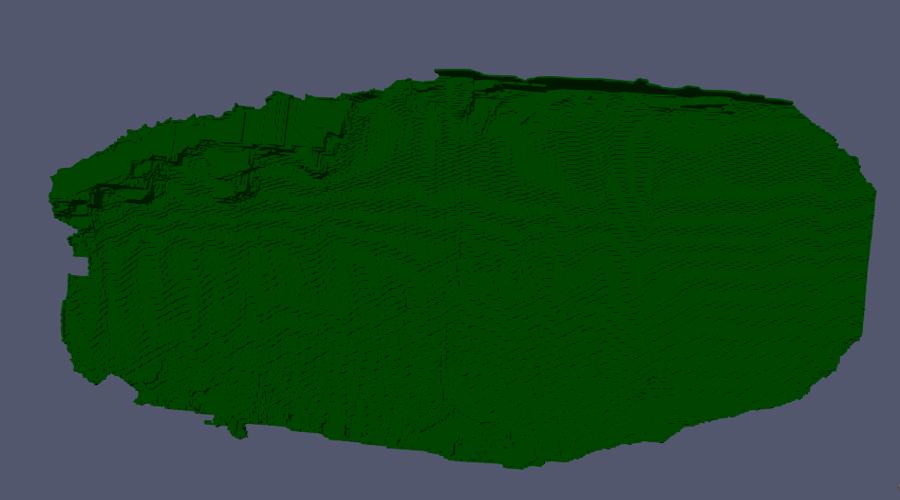} \\
    Ground Truth \\
    \includegraphics[totalheight=\fHeightFaultSmoothing]{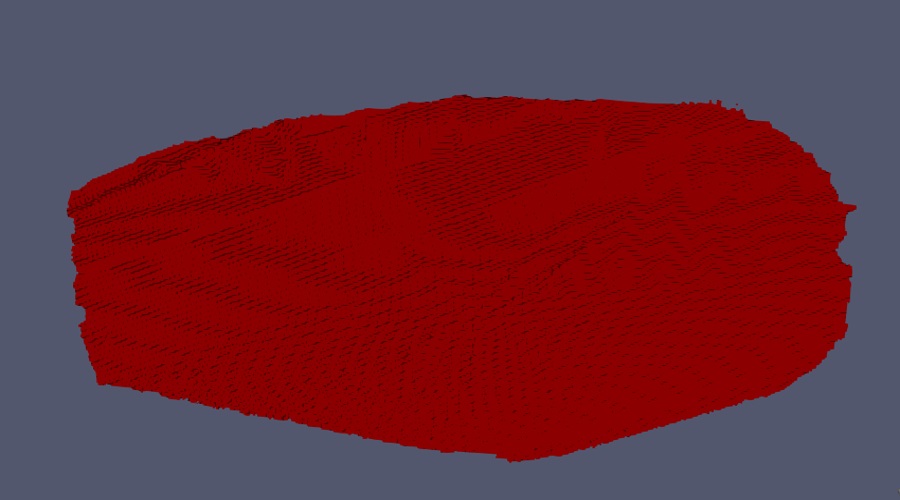} 
    \includegraphics[totalheight=\fHeightFaultSmoothing]{figures/fault_smooth/gt} 
    \includegraphics[totalheight=\fHeightFaultSmoothing]{figures/fault_smooth/gt} 
    \includegraphics[totalheight=\fHeightFaultSmoothing]{figures/fault_smooth/gt} \\
}
\caption{{\bf Qualitative Analysis of Smoothing Parameter}. Results
  displayed by varying the parameter in $\phi$ (larger towards the
  right). Surfaces extracted by Crease surfaces and our method are
  displayed with the ground truth. }
\label{fig:smoothing_degrad_vis}
\end{figure}

{\bf Robustness to Noise}: In applications, the image may be distorted
by noise (this is the case in seismic images where the SNR may be
low), and thus we evaluate our algorithm as we add noise to the image,
and we fix the smoothing parameter of the semblance $\phi(x)$ to the
one with highest F-measure in the previous experiment. We choose noise
levels as follows: $\sigma^2 = 0, 0.05, \ldots, 0.5$. Quantitative
results are shown in Figure~\ref{fig:noise_expt}, and some
visualizations of the surfaces are shown in
Fig.~\ref{fig:noise_expt_vis}. Results show that our method
consistently returns an accurate result in both measures, and degrades
only slightly.

\begin{figure}
  \centering
  \includegraphics[totalheight=\fHeightFm]{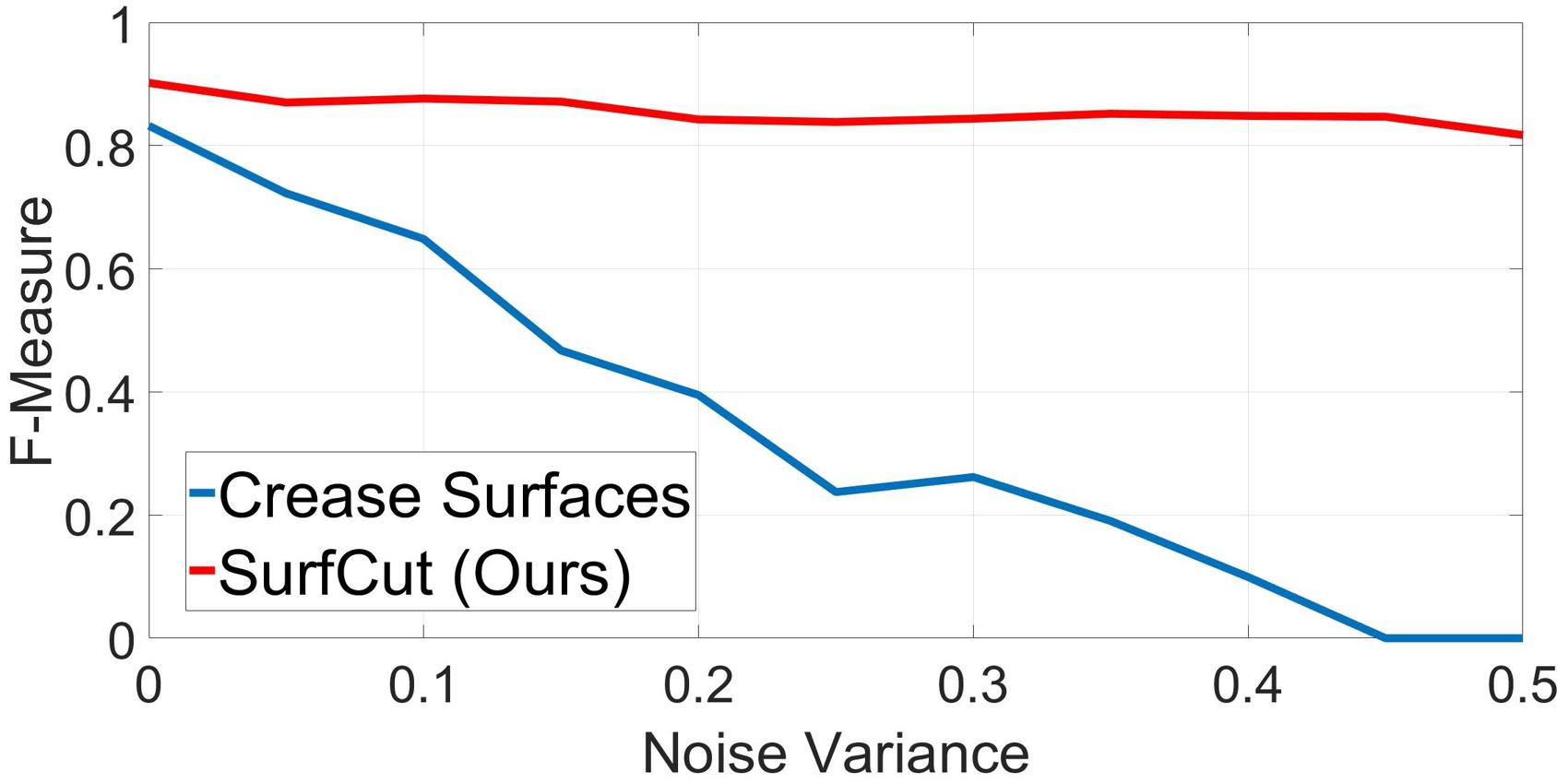}
  \includegraphics[totalheight=\fHeightFm]{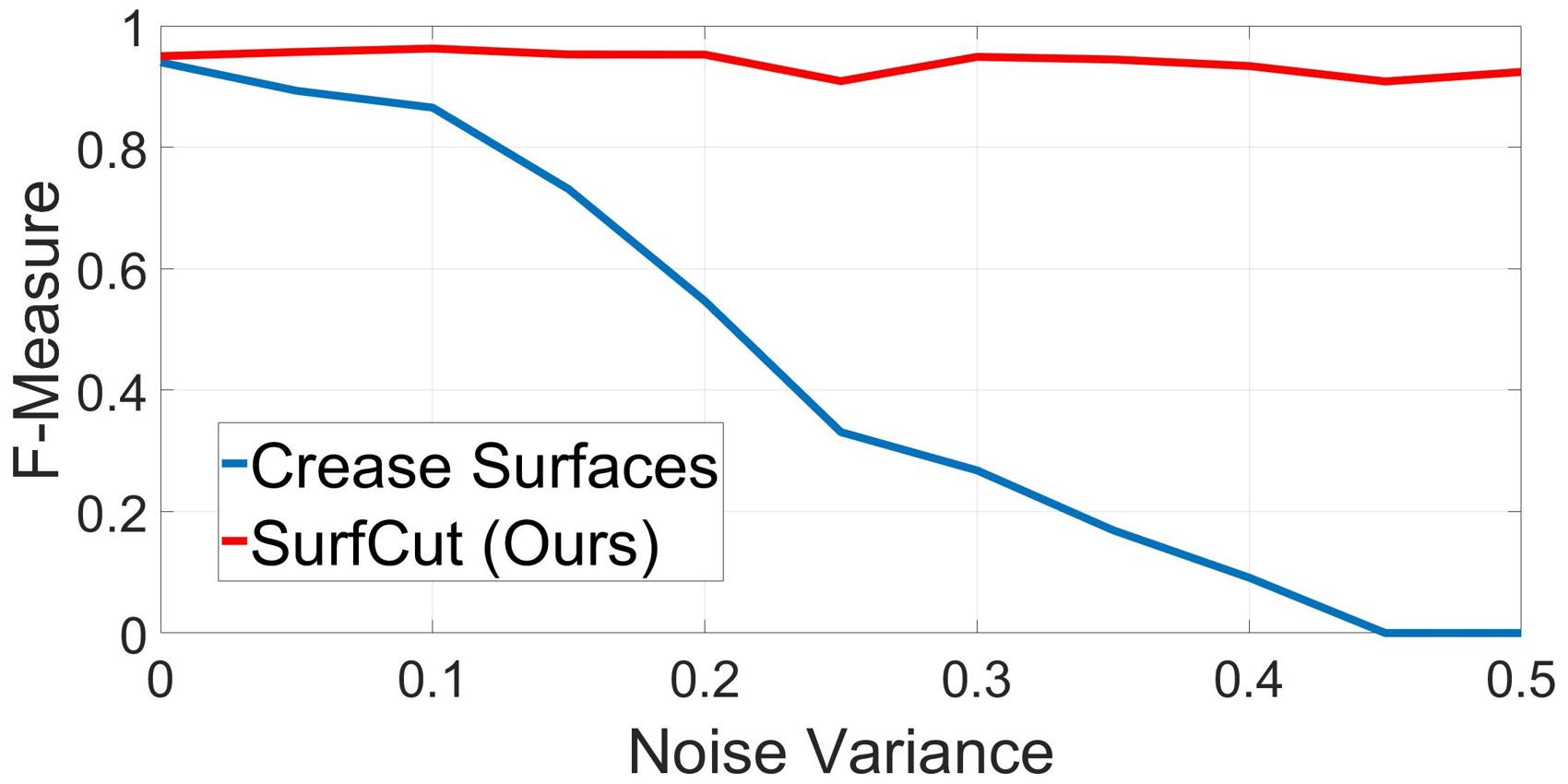}
  \caption{ {\bf Quantitative Analysis of Noise Degradations}
    Boundary (left) and surface (right)  F-measure versus the noise degradation plots for our method and \cite{schultz2010crease}.
  }
  \label{fig:noise_expt}
\end{figure}

\def\fHeightFaultNoise{0.45in}
\begin{figure}
  \centering
  \footnotesize
  {Crease Surface (for increasing noise $\rightarrow$) \\
    \includegraphics[totalheight=\fHeightFaultNoise]{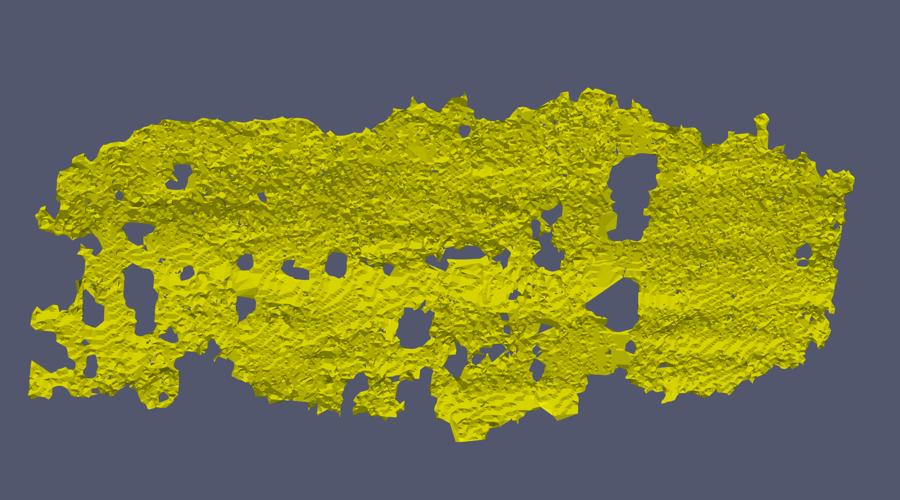} 
    \includegraphics[totalheight=\fHeightFaultNoise]{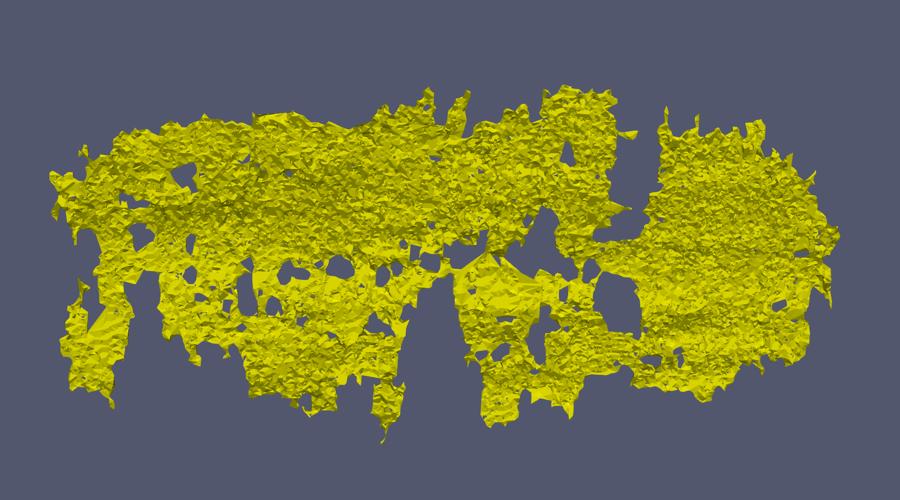} 
    \includegraphics[totalheight=\fHeightFaultNoise]{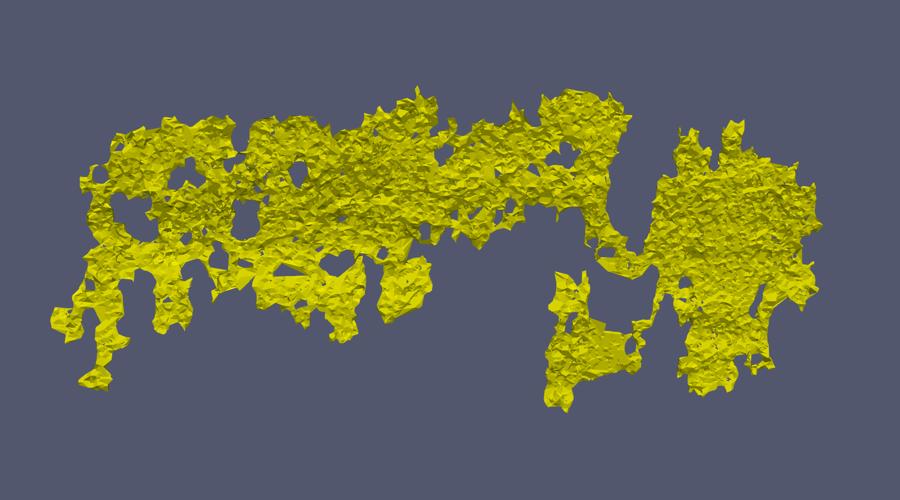} 
    \includegraphics[totalheight=\fHeightFaultNoise]{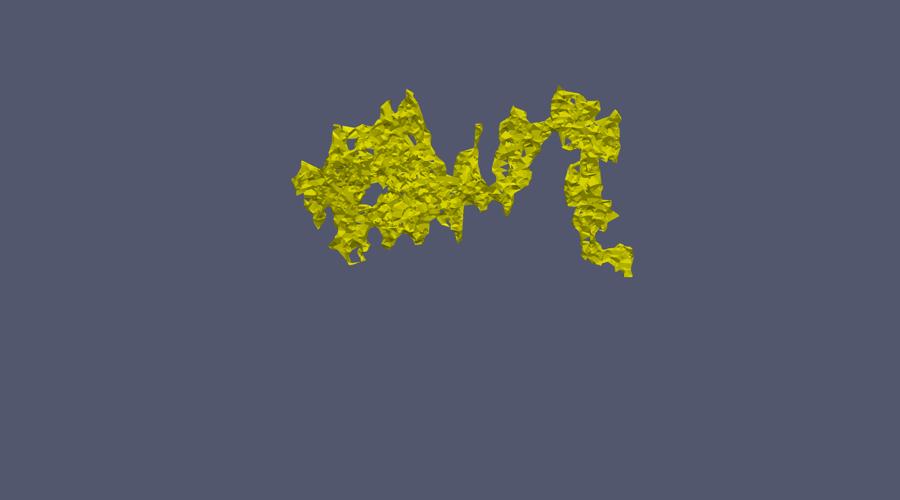} \\
    SurfCut \\
    \includegraphics[totalheight=\fHeightFaultNoise]{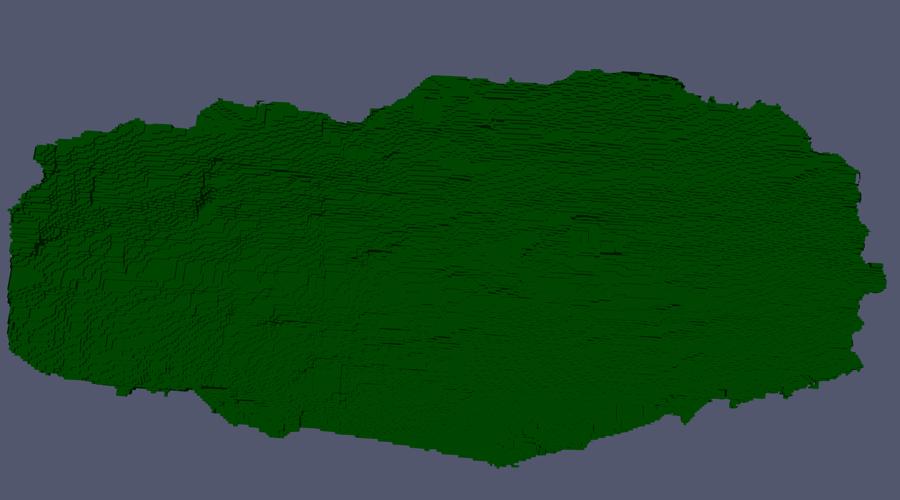}
    \includegraphics[totalheight=\fHeightFaultNoise]{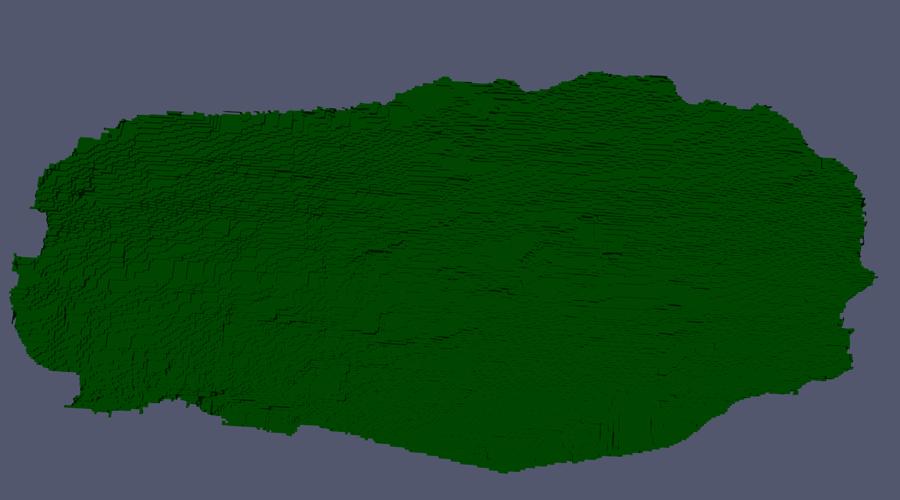}
    \includegraphics[totalheight=\fHeightFaultNoise]{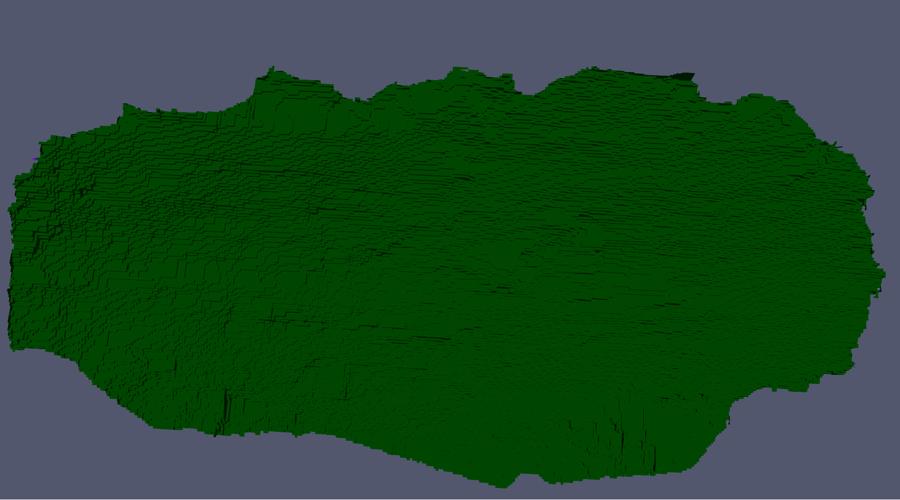}
    \includegraphics[totalheight=\fHeightFaultNoise]{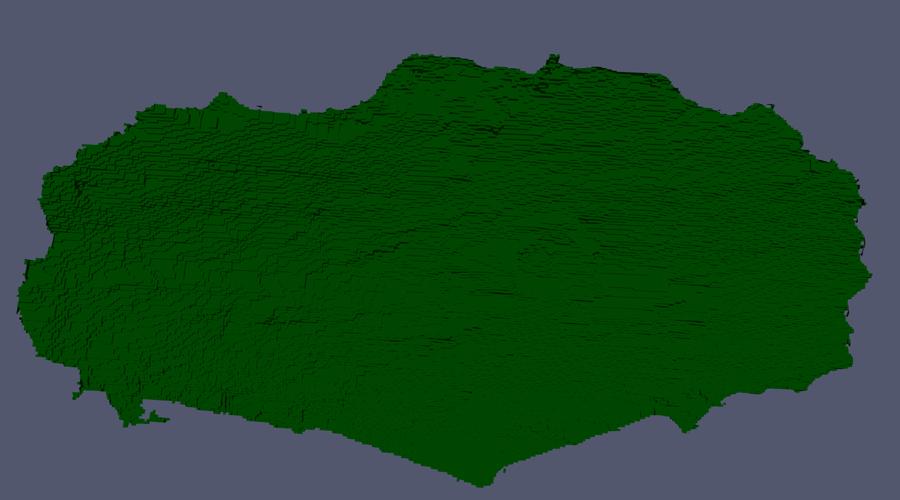} \\
    Ground Truth \\
    \includegraphics[totalheight=\fHeightFaultNoise]{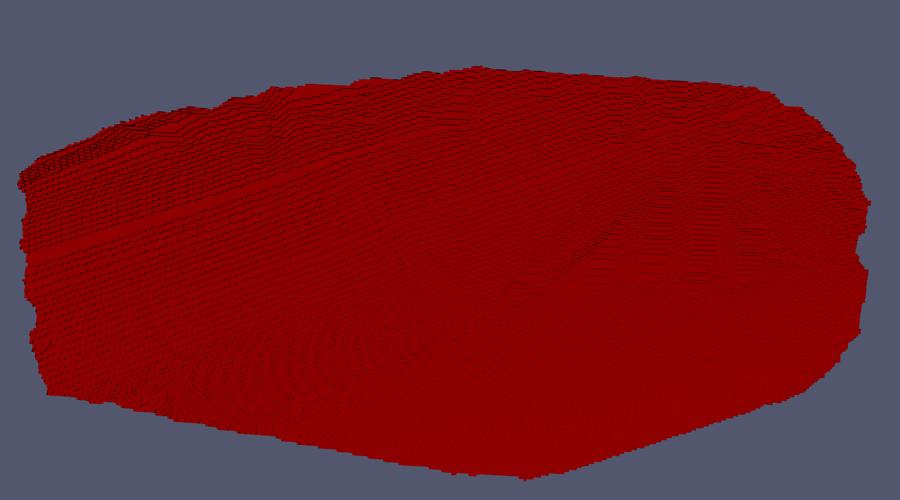} 
    \includegraphics[totalheight=\fHeightFaultNoise]{figures/fault_noise/gt} 
    \includegraphics[totalheight=\fHeightFaultNoise]{figures/fault_noise/gt} 
    \includegraphics[totalheight=\fHeightFaultNoise]{figures/fault_noise/gt} \\
  }
  \caption{{\bf Qualitative Analysis of Noise Degradations}. Results
    displayed by varying the additive noise to $\phi$ (larger towards
    the right). Surfaces extracted by Crease
    surfaces and our method are displayed with the ground truth.}
  \label{fig:noise_expt_vis}
\end{figure}

{\bf Slice-wise Validation}: We now show some visual validation of our
method by showing that the surface intersects with slices of the image
in locations where there is a fault, and thus the value of $\phi$ is
low. This is shown in Figure~\ref{fig:fault-intersection4}.

\def\fHeightFIaa{0.83in}
\def\fHeightFIbb{0.83in}
\def\fHeightFIcc{0.83in}
\begin{figure}
  \centering
  \includegraphics[width=\fHeightFIaa]{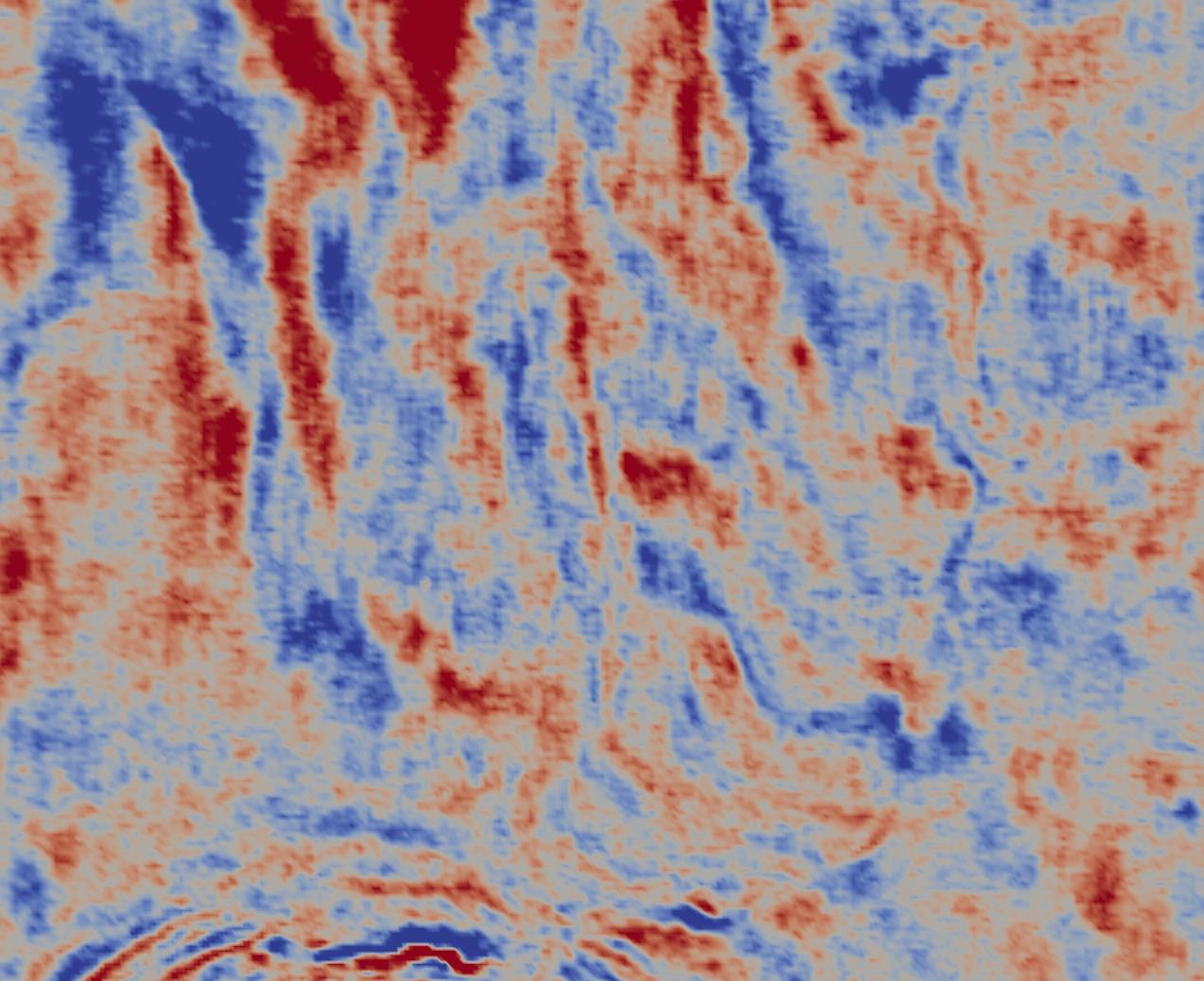}
  \includegraphics[width=\fHeightFIaa]{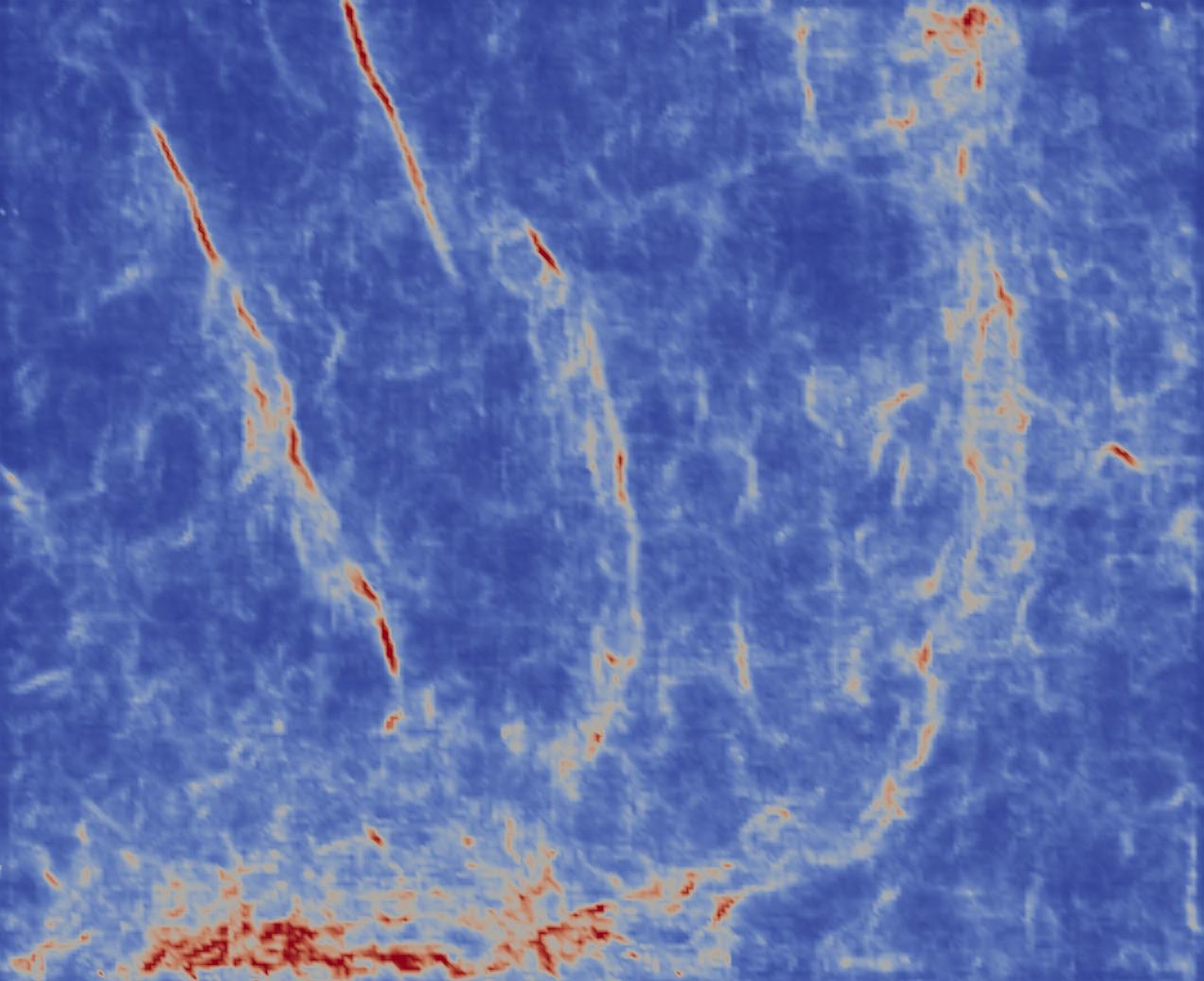}
  \includegraphics[width=\fHeightFIaa]{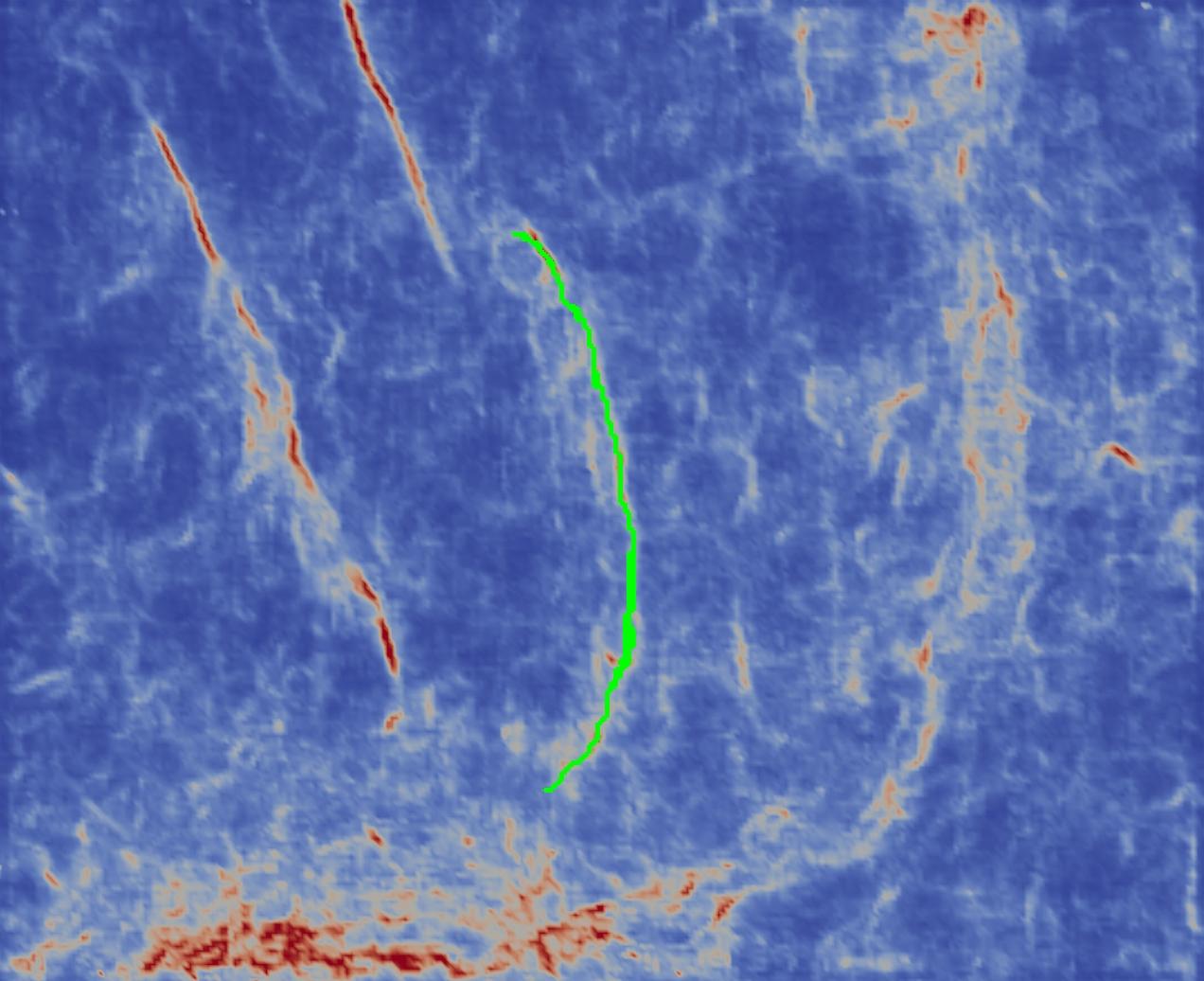}
  \includegraphics[width=\fHeightFIaa]{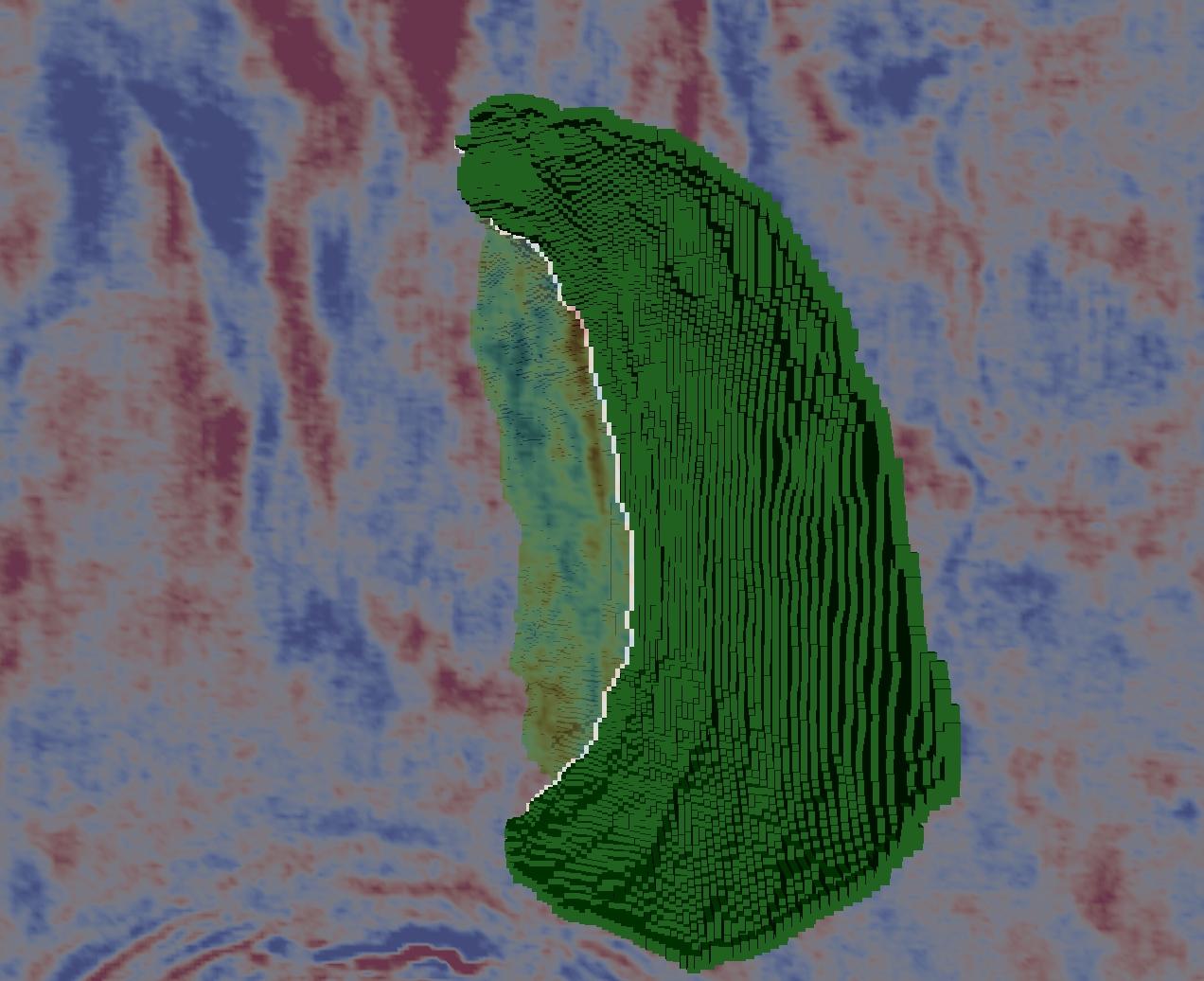}  \\ \vspace{1mm}
  \includegraphics[width=\fHeightFIbb]{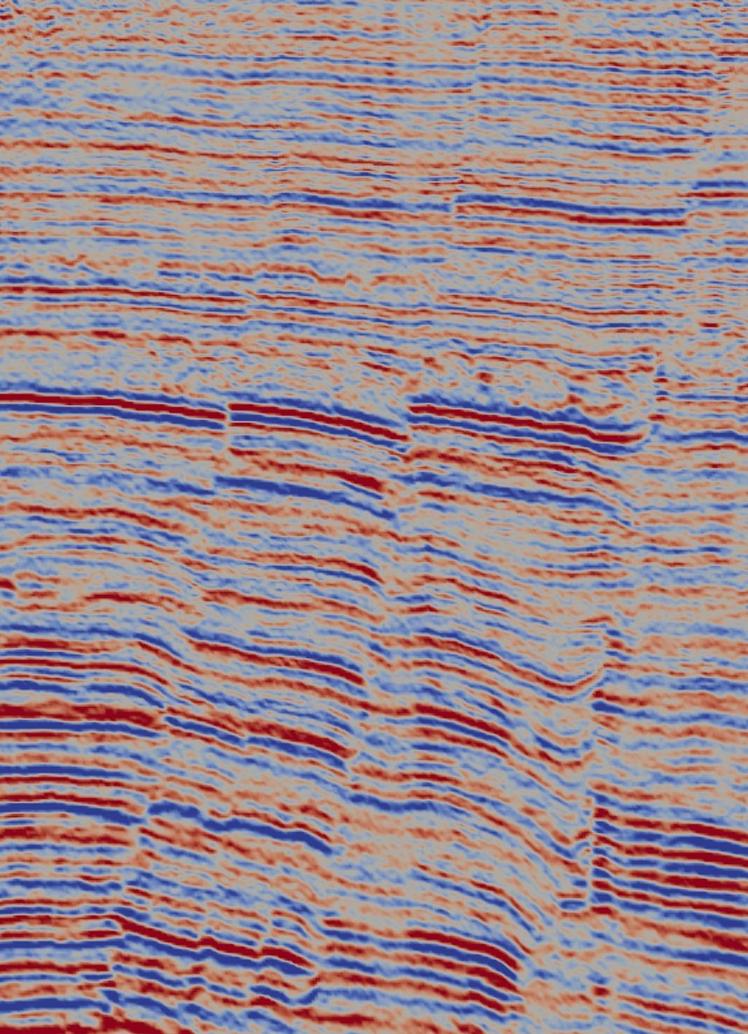}
  \includegraphics[width=\fHeightFIbb]{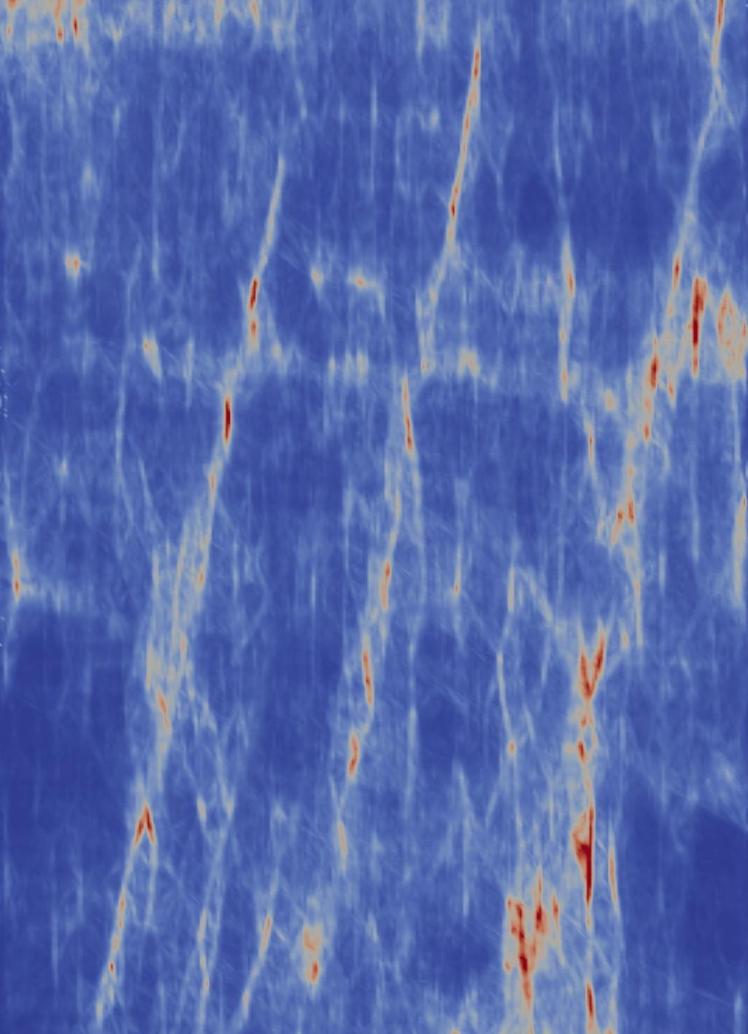}
  \includegraphics[width=\fHeightFIbb]{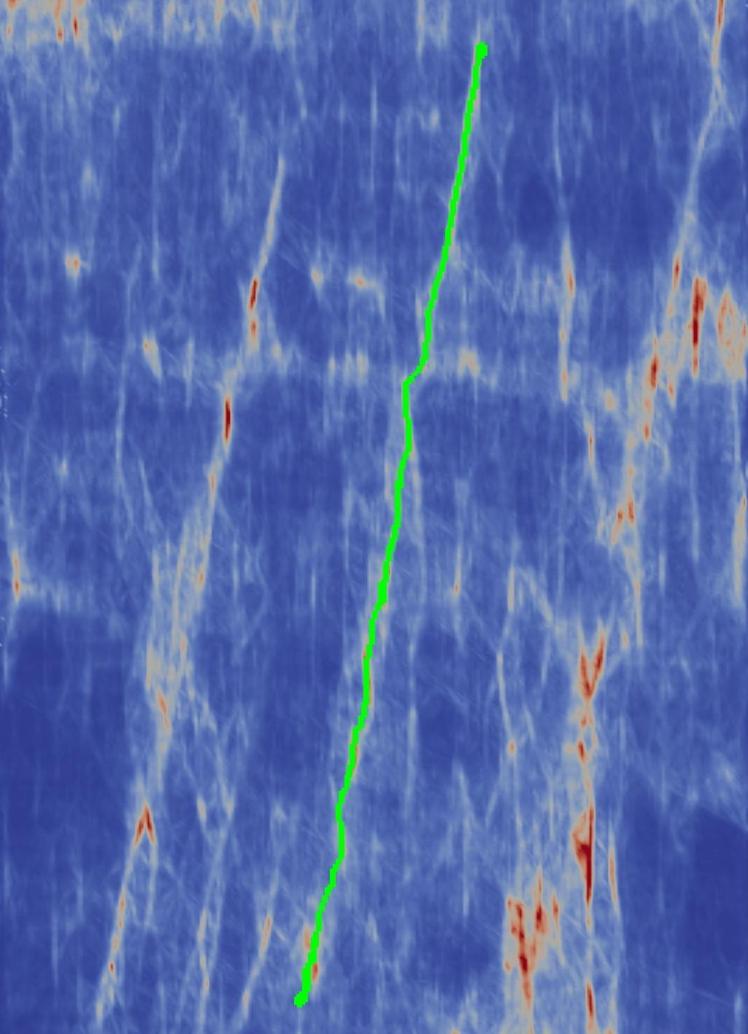}
  \includegraphics[width=\fHeightFIbb]{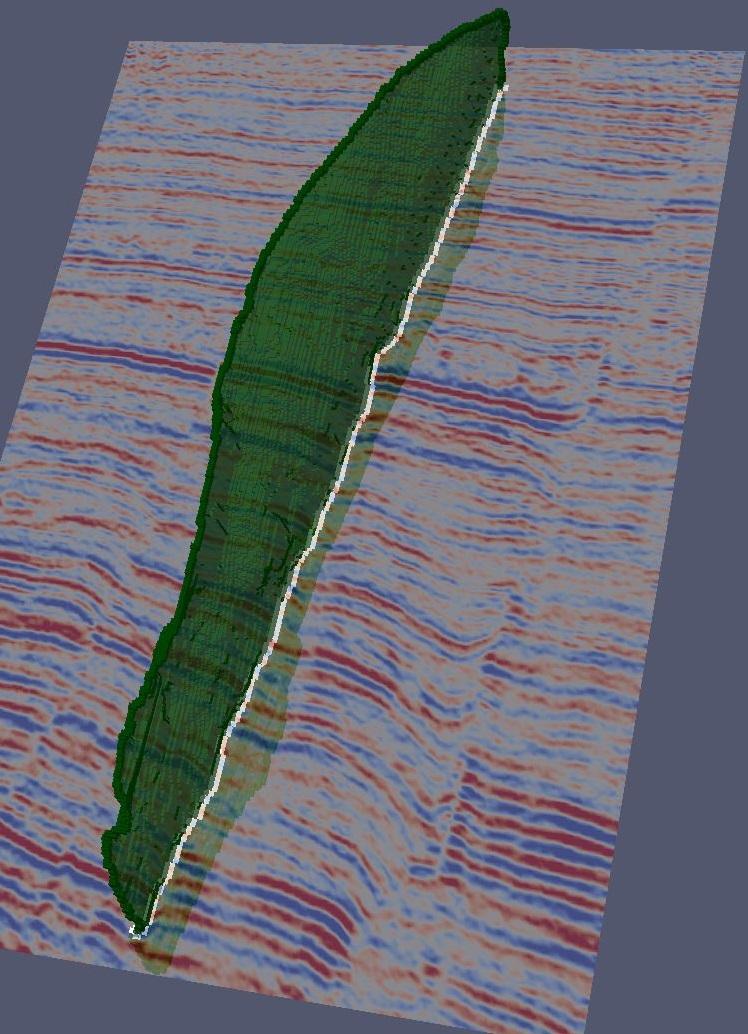}  \\ \vspace{1mm}
  \includegraphics[height=\fHeightFIcc,angle=-90,origin=c]{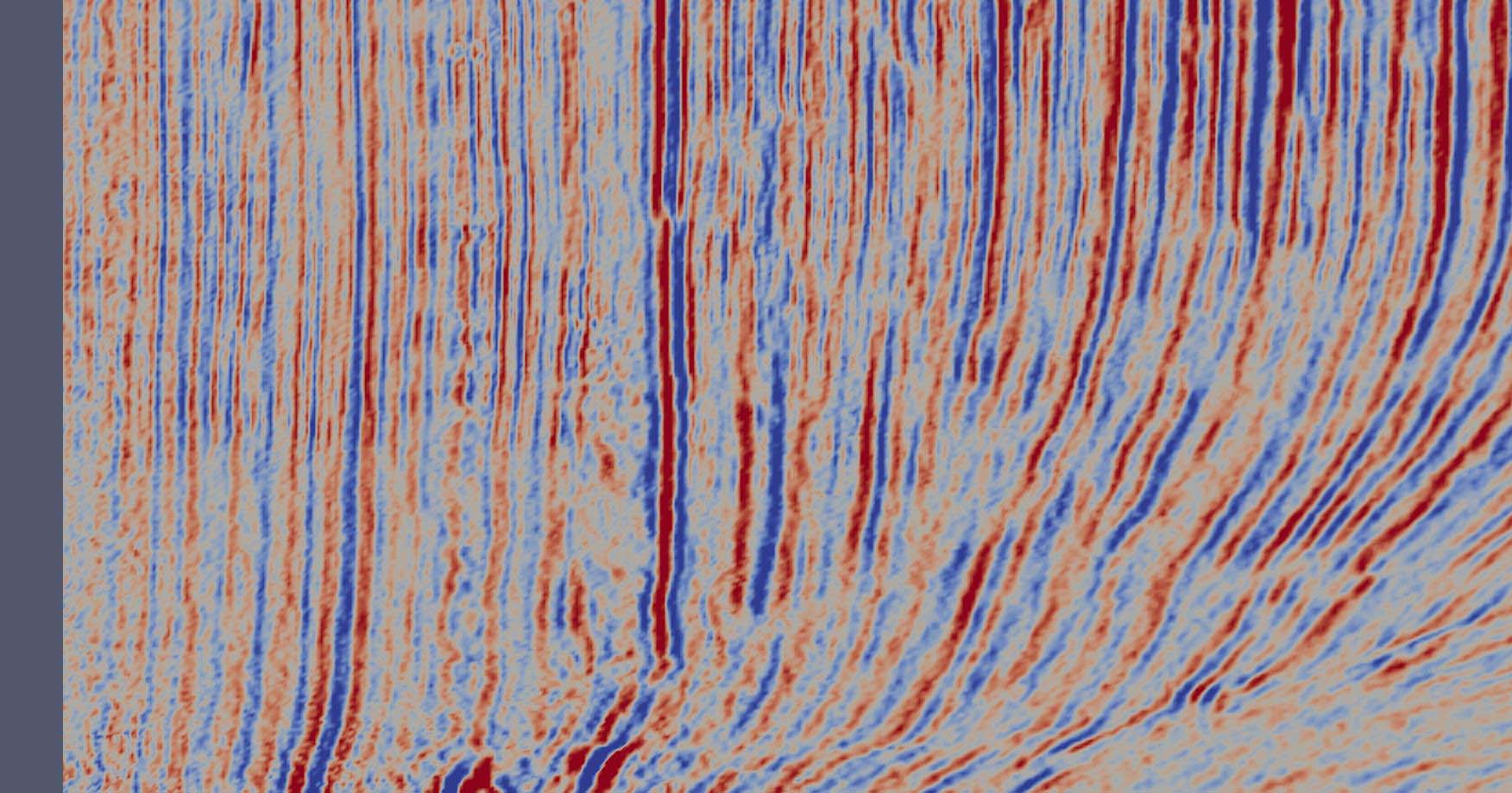}
  \includegraphics[height=\fHeightFIcc,angle=-90,origin=c]{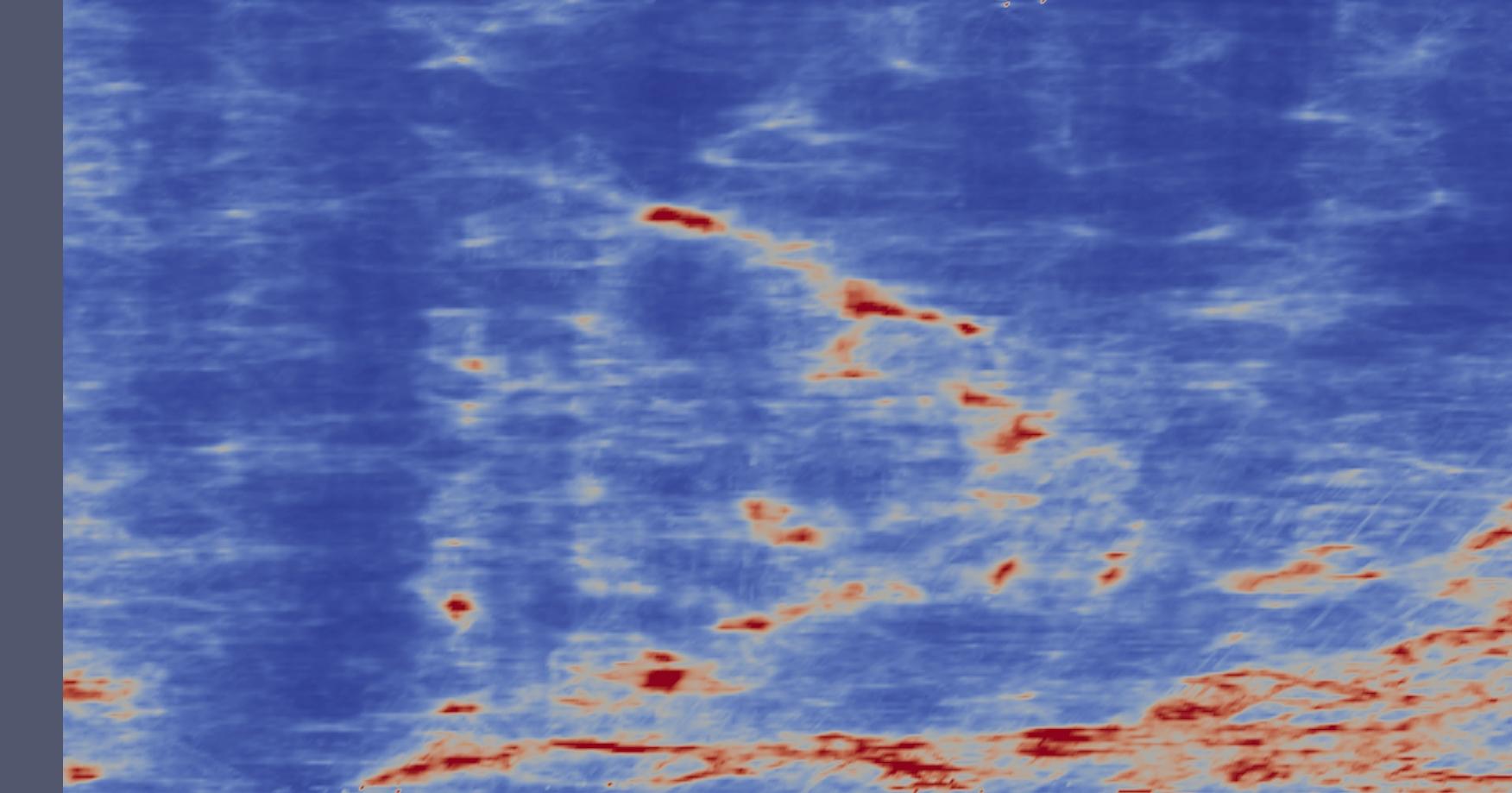}
  \includegraphics[height=\fHeightFIcc,angle=-90,origin=c]{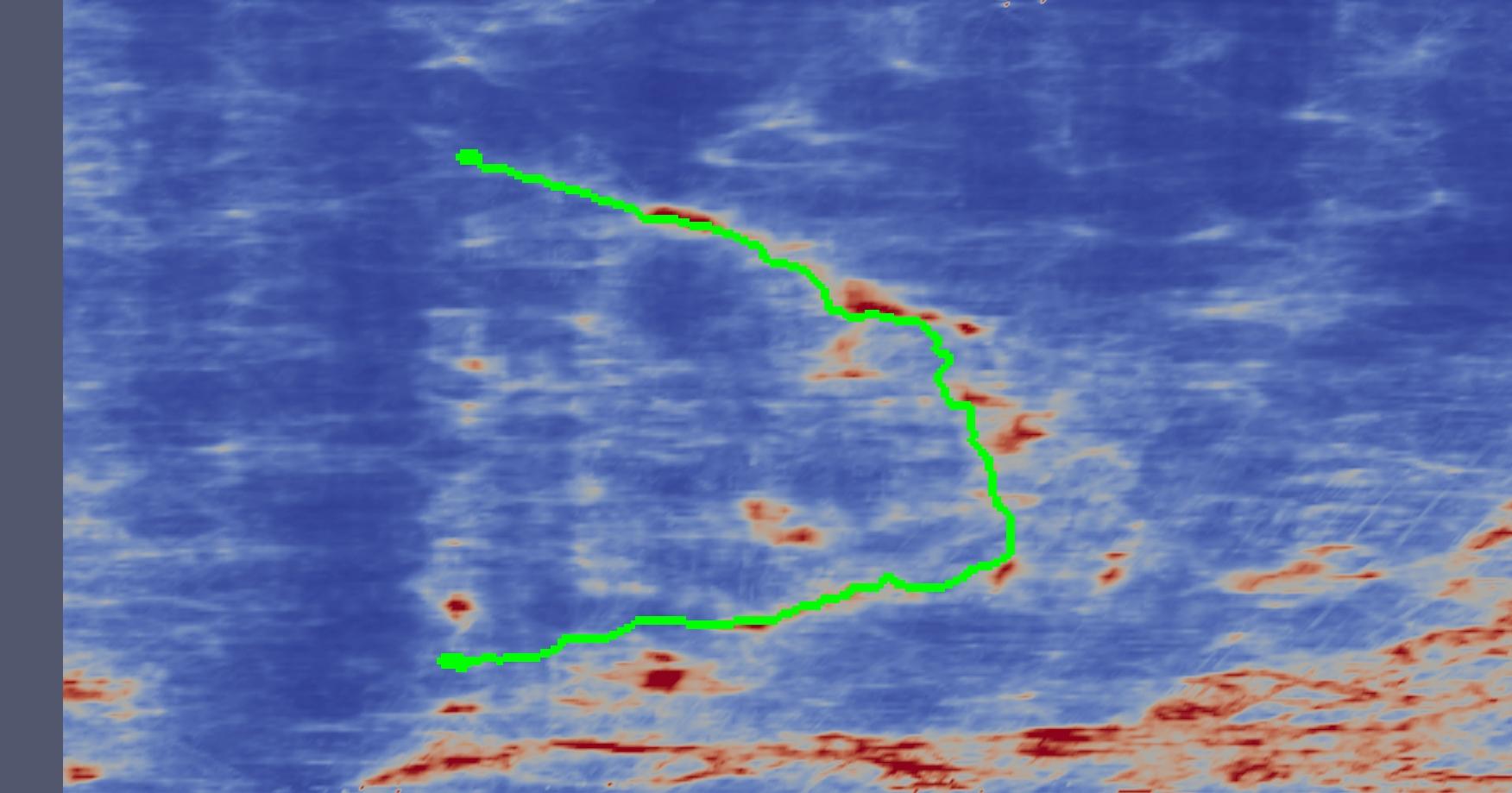}
  \includegraphics[height=\fHeightFIcc,angle=-90,origin=c]{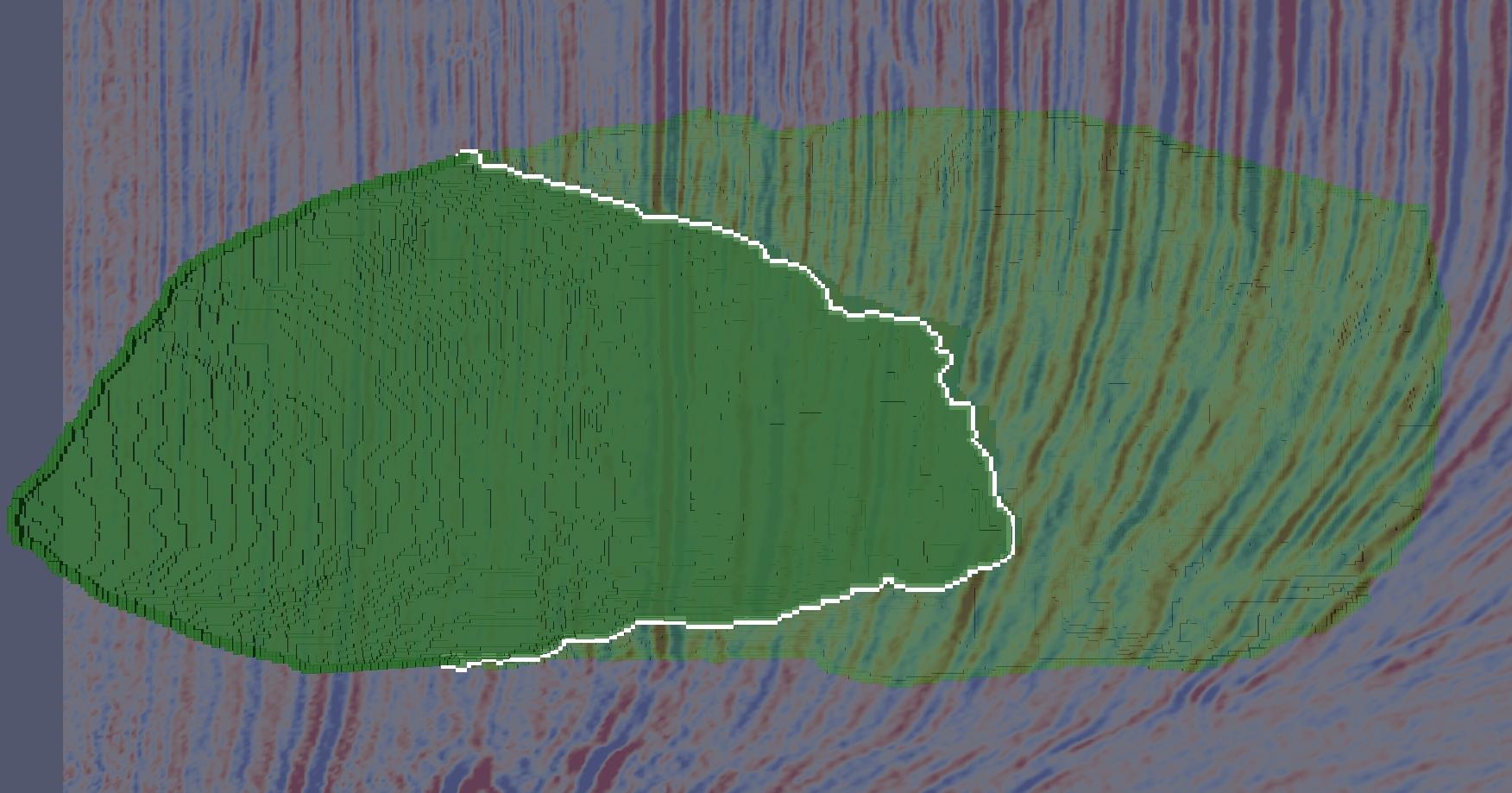}  \\ \vspace{1mm}
  \caption{{\bf Slice-wise Validation on Seismic Data}: [Left column]:
    Slices corresponding to x-y, x-z, and y-z planes, [Middle, left]:
    Local surface likelihood ($1/\phi$), [Middle right]: Intersection
    of SurfCut result (green) with slice, [Right column]: surface from
    SurfCut at certain viewpoint.}
  \label{fig:fault-intersection4}
\end{figure}

{\bf Robustness to Seed-Point Location}: We demonstrate that our
surface extraction method is robust to the choice of the seed point
location. To this end, we randomly sample 30 points (with high local
likelihood) from the ground truth surface. We use each of the points
as seed points to initialize our algorithm. We measure the boundary
and surface accuracy for each of the extracted surfaces. Results are
displayed in Figure~\ref{fig:seedpoint_robust}. They show our
algorithms consistently returns a boundary and surface of similar
accuracy regardless of the seed point location.

\begin{figure}
  \centering
  \includegraphics[totalheight=\fHeightFm]{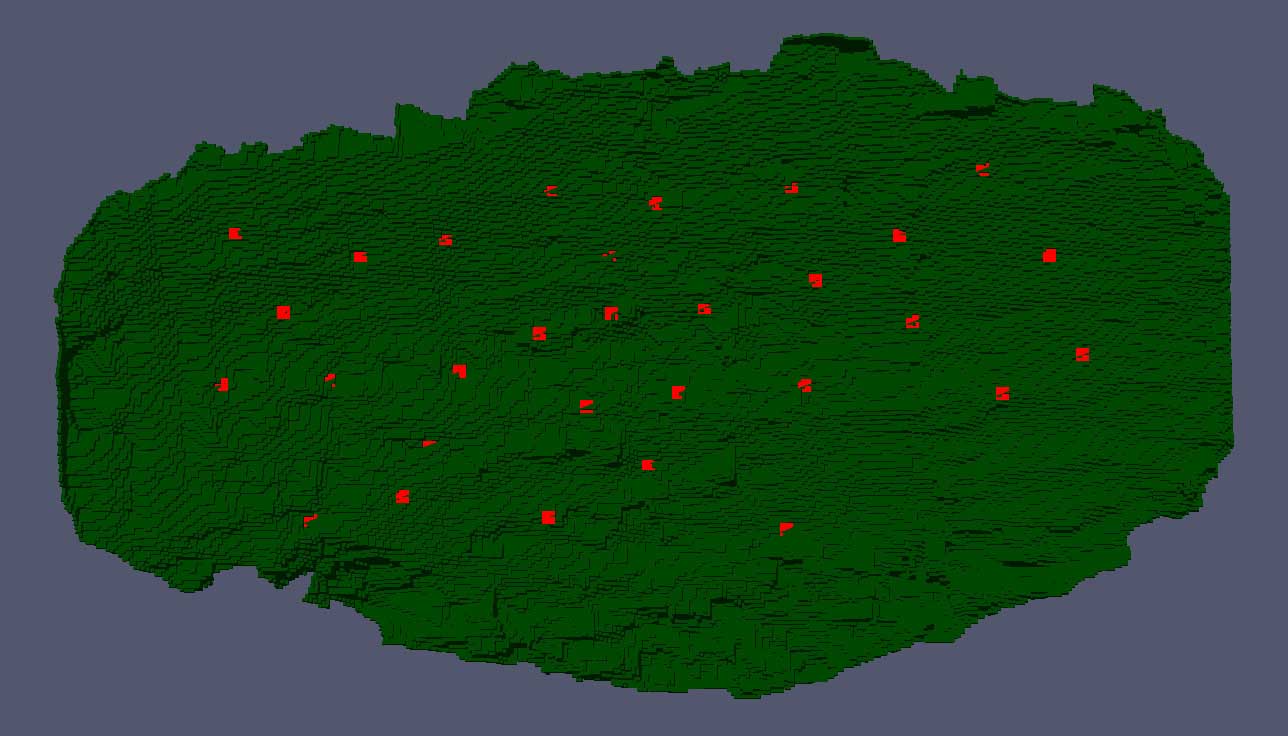}
  \includegraphics[totalheight=\fHeightFm]{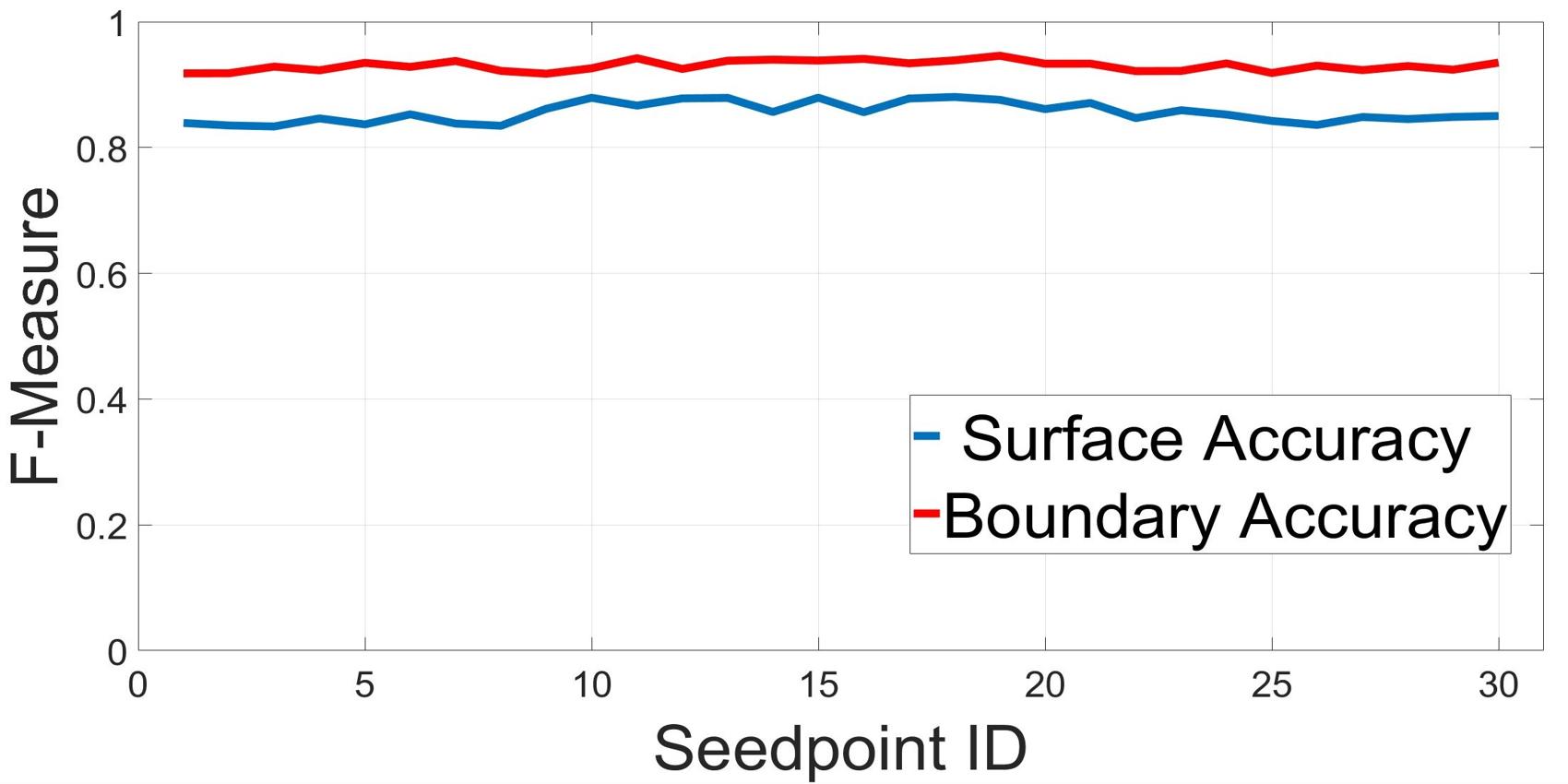}
  \caption{{\bf Robustness to Seed Point Choice}: [Left]: A
    visualization of the seed points chosen.  [Right]: Boundary
    F-measure versus various seed point indices. The same boundary and
    surface accuracy is maintained no matter the seed point location.}
  \label{fig:seedpoint_robust}
\end{figure}

{\bf Analysis of Automated Algorithm}: Even though our contribution is
in the surface and boundary extraction from a seed point, we show with
a seed point initialization, our method can be automated. We
initialize our algorithm with a simple automated detection of seeds
points. We extract seed points by finding extrema of the Hessian and
then running a piece-wise planar segmentation of these points using
RANSAC \cite{rusu20113d} successively; the point on each of the
segments located closest to other points on the segment are seed
points. This operates under the assumption that the surfaces are
roughly planar.  If not, there could possibly be redundant seed points
on the same surface, which would result in repetitions in surfaces in
our final output. This could easily be filtered out.  We run our
boundary curve extraction followed by surface extraction for each of
the seed points on the original datasets. We compare to
\cite{schultz2010crease}. There are 6 ground truth surfaces in this
dataset. Our algorithm correctly extracts 6 surfaces, while
\cite{schultz2010crease} extracts 4 surfaces (2 pairs of faults are
merged together each as a single connected component). Results on a
dataset are visualized in Figure~\ref{fig:fully_automated_results}
(each connected component in different color). Notice that Crease
Surfaces has holes, captures clutter, and connects separate faults.

\begin{figure}
  \centering
  \includegraphics[width=4cm,height=3cm]{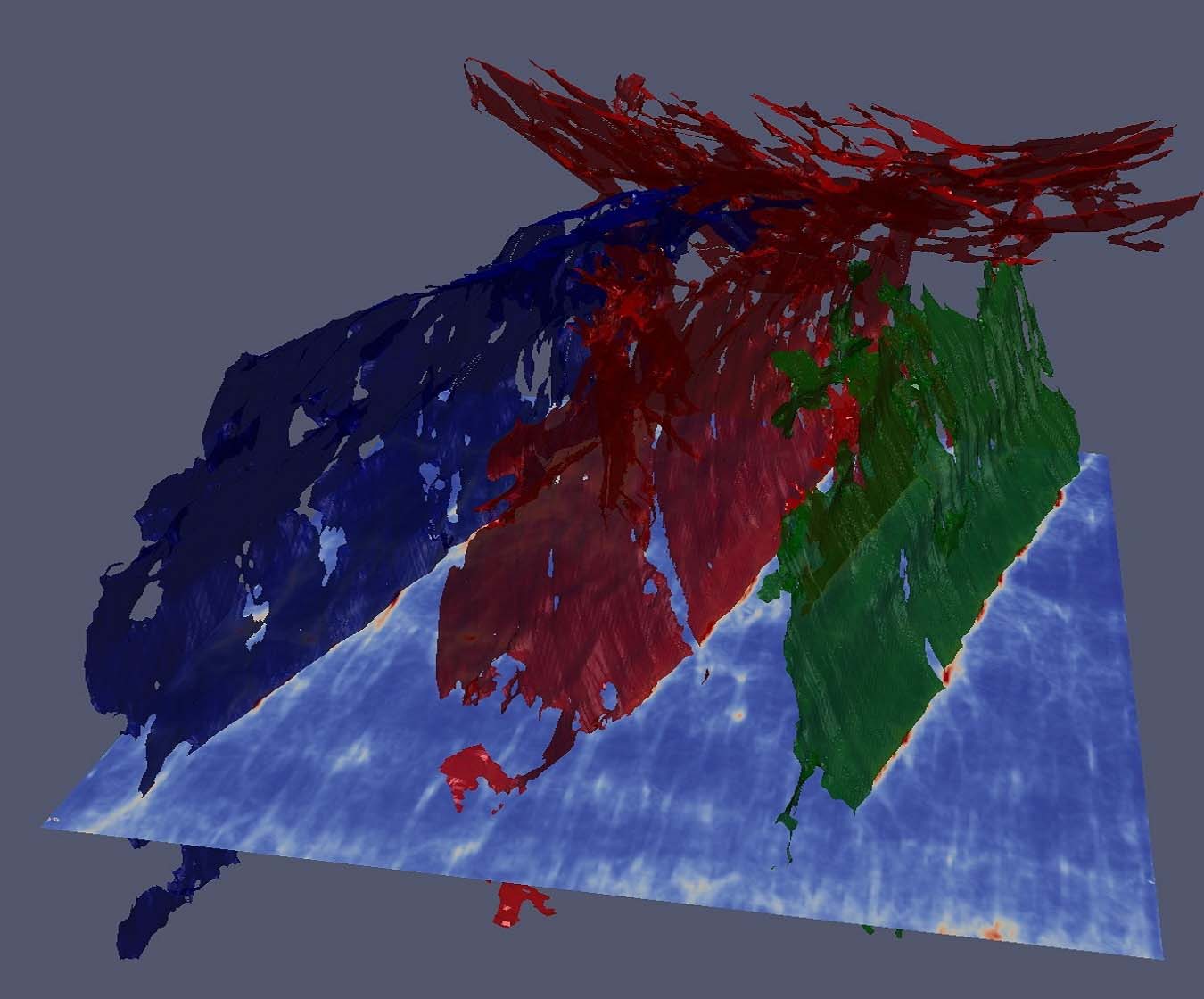}
  \includegraphics[width=4cm,height=3cm]{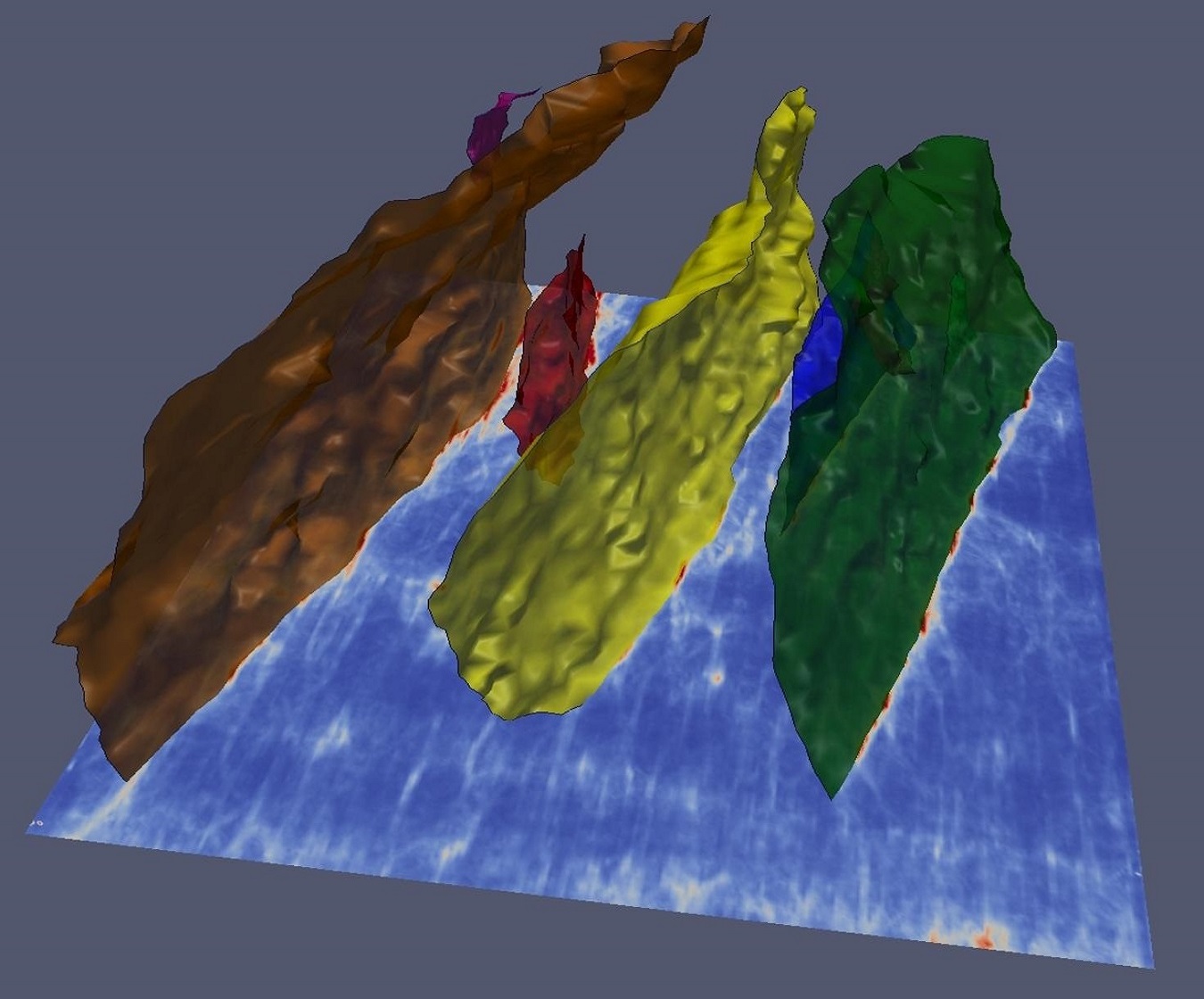} \vspace{0.05in} \\%
  \includegraphics[width=4cm,height=3cm]{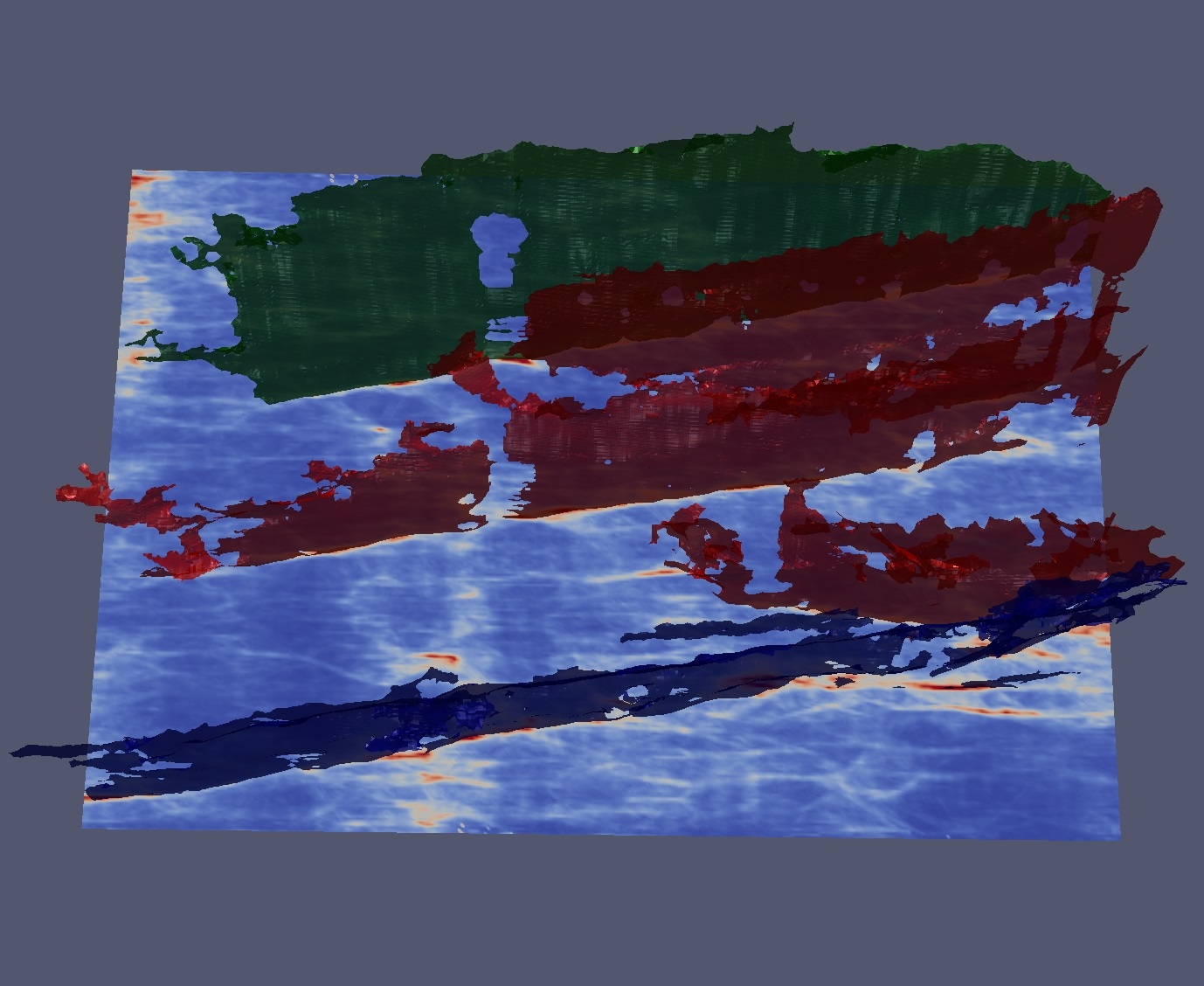}
  \includegraphics[width=4cm,height=3cm]{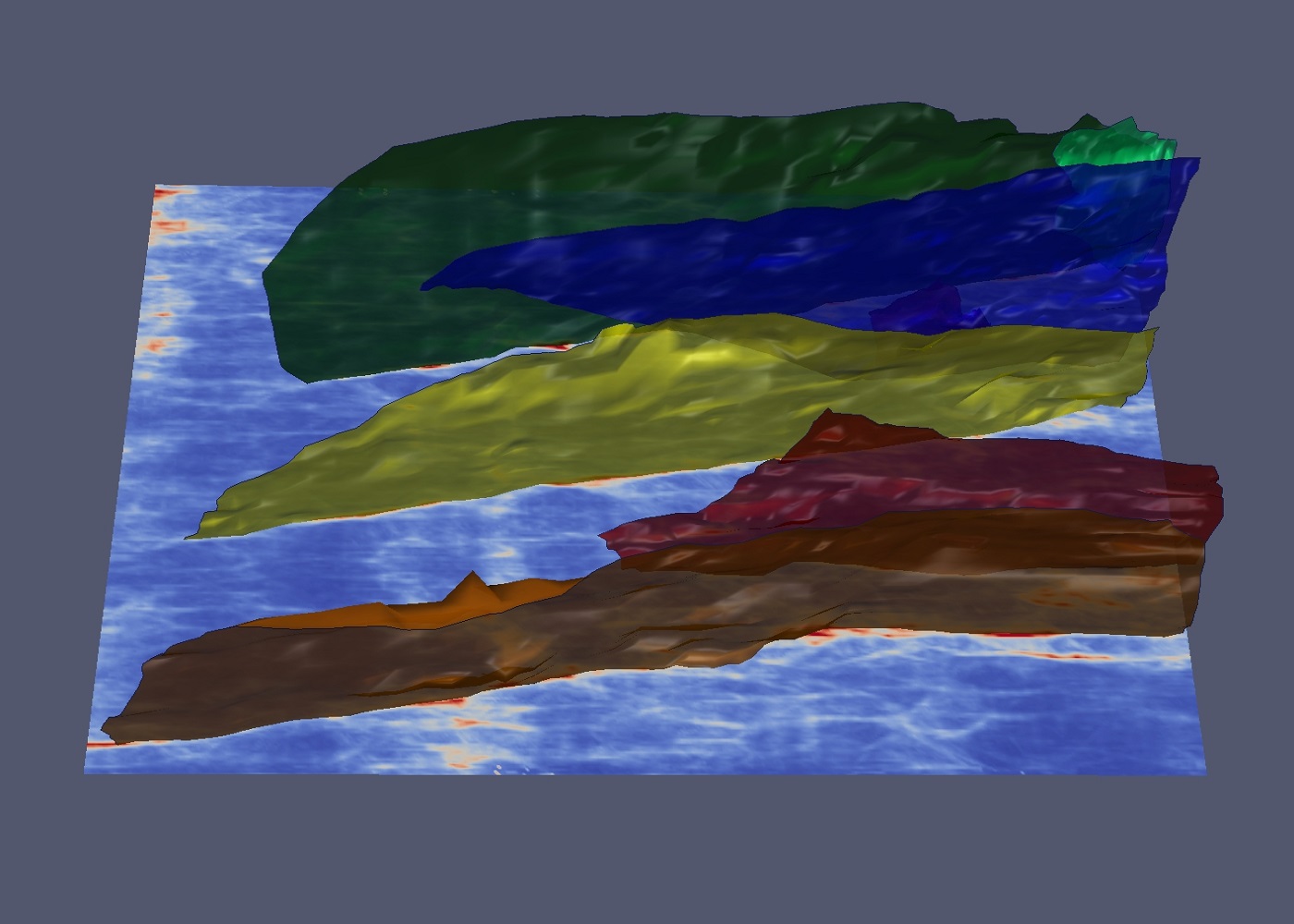}
  \caption{Example result in an automated setting. [Left]: Result by
    Crease Surfaces, which contains holes and incorrectly detects
    clutter (top, red) due to noise in the data. [Right]: Results of
    SurfCut, which extracts the correct number of surfaces and
    produces smooth simple surfaces. \comment{ ({\bf See Supplementary
        Material for video and further visual validation.}) } }
  \label{fig:fully_automated_results}
\end{figure}

{\bf Computational Cost}: We analyze run-times on a dataset of size
$463\times 951 \times 651$. The run-time of our algorithm depends on
the size of the surface. To extract one surface, our algorithm takes
on average $10$ minutes ($9$ minutes for the boundary extraction and
$1$ minute for the surface extraction). Automated seed point
extraction takes about 3 minutes. Therefore, the total cost of our
algorithm for extracting 6 faults is about 1 hour. We note that after
seed point extraction, the computation of surfaces can be
parallelized. In comparison, \cite{schultz2010crease} takes about
Pentium 2.3 GHz processor.
$2$ hours on the same dataset.  Speeds are reported on a single

\comment{
Even though the method
\cite{grady2010minimal} requires manual input of the boundary curve of
the surface, we state the time of \cite{grady2010minimal} for surface
extraction. Using Gurobi's state-of-the-art linear programming
implementation, the method takes over 10 hours for a \emph{single}
surface (and the time grows drastically with increasing image
sizes). Ours takes 1 minute given the boundary (both achieve similar
accuracies). Our solution may not achieve the minimal surface as in
\cite{grady2010minimal}, but it does achieve a surface with high
fidelity to the surface of interest. 
}

\subsubsection{Lung CT Data: Surface and Boundary Extraction}

We now compare to Crease Surfaces for the Lung CT dataset. We compare
the methods under the settings described in the previous section. For
medical data, we modify the matrix based on the Hessian in Crease
Surfaces to another matrix based on closeness to a plate-like
structure as common in lung fissure detection
\cite{wiemker2005unsupervised,van2008supervised,xiao2016pulmonary}. State-of-the-art
methods in fissure extraction use a method similar to Crease surfaces
to extract the surface. We choose $\phi$ to be the plate-ness measure in
our method. Quantitative results on the entire dataset are summarized
in Table~\ref{tab:lung_data}. Both in terms of surface and boundary
accuracy, our method is more accurate with respect to all
measures. Visual validation of our method on slice-wise views of the
surface and image is shown in Fig.~\ref{fig:lung-intersection}.  Some
visualizations of the surface results are shown in
Fig.~\ref{fig:lung_result}.  Various slices are shown to help
visualize features of the image. Crease surface generates surfaces
with incorrect holes and many times cannot capture the entire fissure,
hence low recall and precision on the boundary metrics. SurfCut does
not contain any holes and accurately captures very fine and thin
structures near the boundaries of the fissures.

\comment{
The eignvalues of
the Hessian are caluclated at each pixel, and $\phi$ is computed by
the difference of the first and second highest eigenvalues divided by
the sum.
}

\begin{table}
  \caption{{\bf Quantitative Evaluation on Lung Dataset}. Comparison
    of methods in terms of surface and boundary accuracy. Precision
    (P), recall (R), F-measure (F), and ground truth covering (GT-cov)
    are reported. Higher P, R, F, GT-Cov.~indicate better fidelity to the ground truth.}
  \label{tab:lung_data} 
  \begin{center}
    Surface accuracy \\
    \begin{tabular}{ l | l | l | l  | l }
      Method & $F$ &  GT-Cov. &  $P$ &  $R$ \\
      \hline
      Crease Surfaces & 0.76$\pm$0.08 & 0.70$\pm$0.10 & 0.67$\pm$0.11 & 0.91$\pm$0.06    \\
      Surfcut & {\bf 0.91$\pm$0.04} &  {\bf 0.87$\pm$0.06} &  {\bf 0.86$\pm$0.06} &  {\bf 0.95$\pm$0.02}   \\
      \hline
    \end{tabular}\\\vspace{0.1in}
    Boundary accuracy \\
    \begin{tabular}{ l | l | l | l  | l }
      Method & $F$ &  GT-Cov. &  $P$ &  $R$ \\
      \hline
      Crease Surfaces & 0.70$\pm$0.11 & 0.72$\pm$0.08 & 0.69$\pm$0.10 & 0.71$\pm$0.12   \\
      Surfcut &  {\bf 0.86$\pm$0.04} &  {\bf 0.86$\pm$0.06} &  {\bf 0.85$\pm$0.06} &  {\bf 0.87$\pm$0.05}   \\
      \hline
    \end{tabular}\\
  \end{center}
\end{table}

\comment{
\def\fHeightLI{1.2in}
\begin{figure}
  \centering
  \includegraphics[totalheight=\fHeightLI]{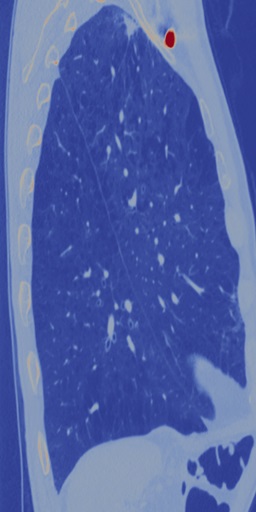}
  \includegraphics[totalheight=\fHeightLI]{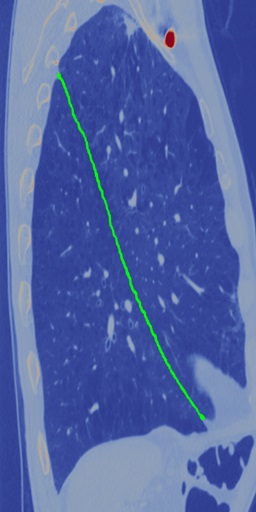}
  \includegraphics[totalheight=\fHeightLI]{./figures/lung_intersection2/3d_001} \\ \vspace{1mm}
  \includegraphics[totalheight=\fHeightLI]{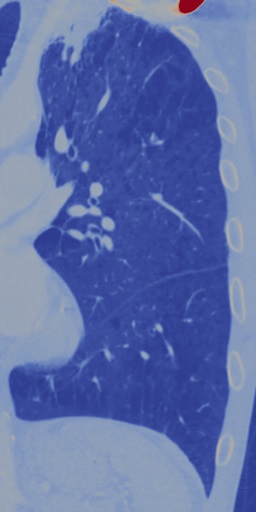}
  \includegraphics[totalheight=\fHeightLI]{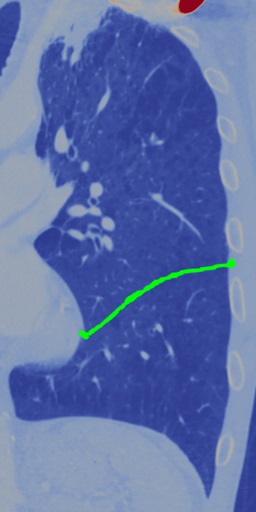}
  \includegraphics[totalheight=\fHeightLI]{./figures/lung_intersection2/3d_002} \\ \vspace{1mm}
  \includegraphics[totalheight=\fHeightLI]{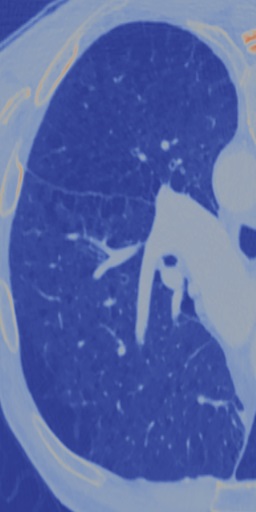}
  \includegraphics[totalheight=\fHeightLI]{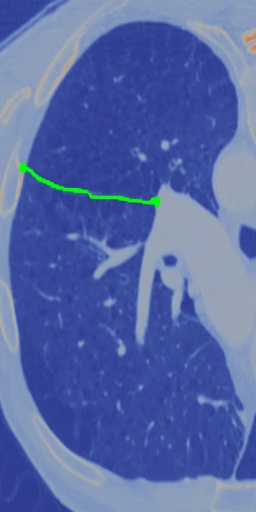}
  \includegraphics[totalheight=\fHeightLI]{./figures/lung_intersection2/3d_003}
  \caption{Validation on slices (first column) that intersect the
    surface computed with SurfCut (green, second column) passes
    through locations of high likelihood of the true surface (red regions).}
  \label{fig:lung-intersection}
\end{figure}
}

\def\fHeightLI{1.1in}
\begin{figure}
  \centering
  \includegraphics[totalheight=\fHeightLI]{./figures/lung_intersection2/x370_slice}
  \includegraphics[totalheight=\fHeightLI]{./figures/lung_intersection2/y255_slice}
  \includegraphics[totalheight=\fHeightLI]{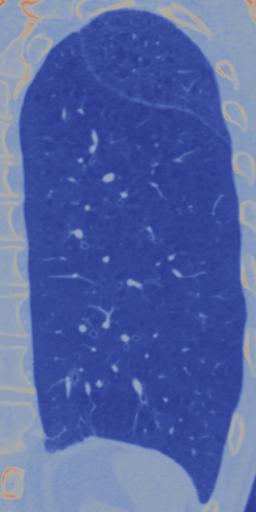}
  \includegraphics[totalheight=\fHeightLI]{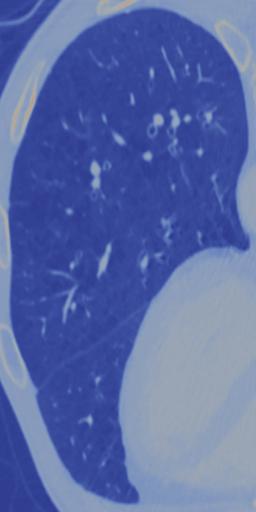}
  \includegraphics[totalheight=\fHeightLI]{./figures/lung_intersection2/z350_slice}
\\ \vspace{1mm}
  \includegraphics[totalheight=\fHeightLI]{./figures/lung_intersection2/x370_surfcut}
  \includegraphics[totalheight=\fHeightLI]{./figures/lung_intersection2/y255_surfcut}
  \includegraphics[totalheight=\fHeightLI]{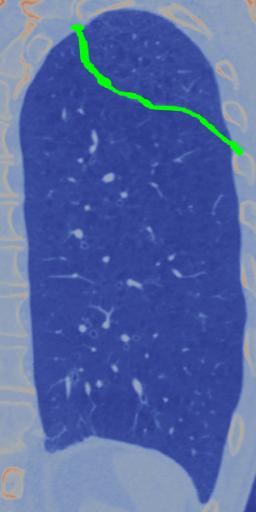}
  \includegraphics[totalheight=\fHeightLI]{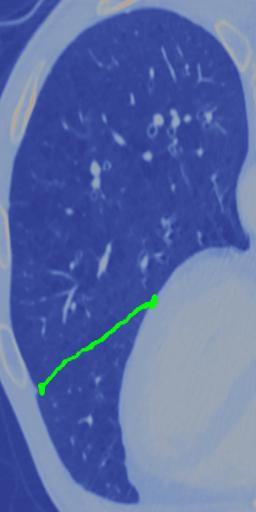}
  \includegraphics[totalheight=\fHeightLI]{./figures/lung_intersection2/z350_surfcut}
  \caption{ {\bf Slice-wise Validation in Lung CT Dataset}. [Top]:
    Various slices of an image of a patient, [Bottom]: Surface
    generated with SurfCut intersected with the slice above (green)
    superimposed on the slice. Notice the structure of interest is a
    subtle thin lines in the slices (top).}
  \label{fig:lung-intersection}
\end{figure}

\comment{
\def\fHeightLungResult{1.0in}
\begin{figure*}
  \centering
  {Ground Truth \\
    \includegraphics[totalheight=\fHeightLungResult]{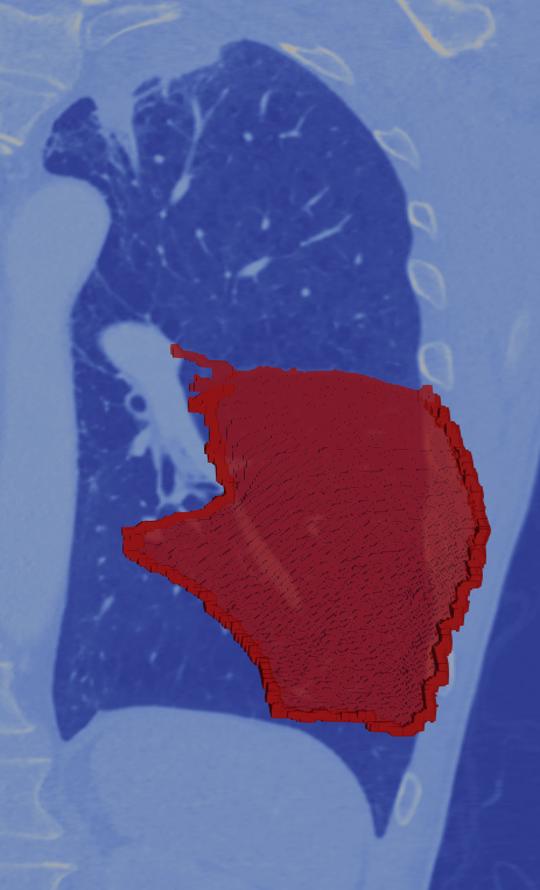} 
    \includegraphics[totalheight=\fHeightLungResult]{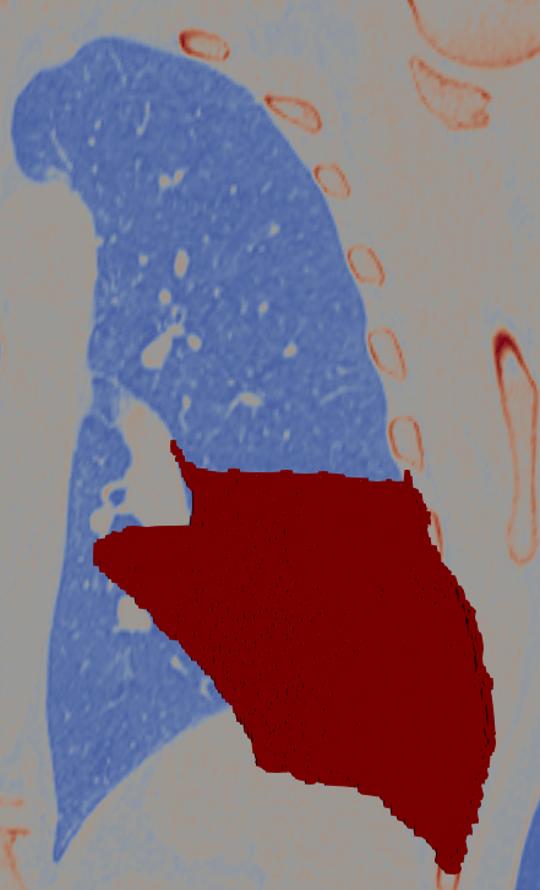} 
    \includegraphics[totalheight=\fHeightLungResult]{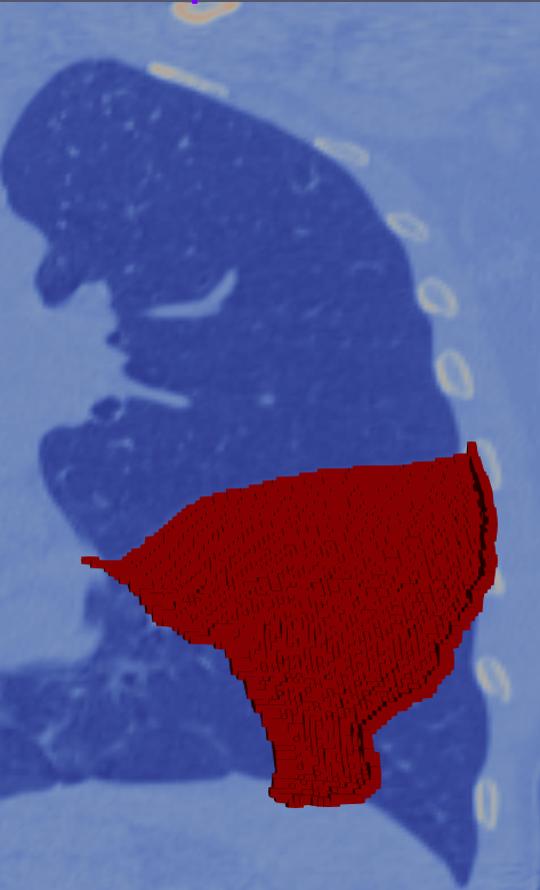} 
    \includegraphics[totalheight=\fHeightLungResult]{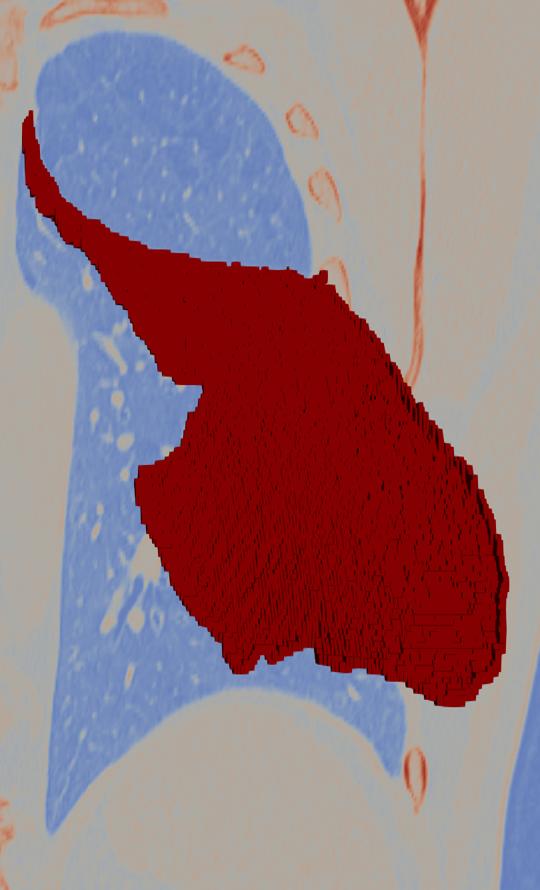} 
    \includegraphics[totalheight=\fHeightLungResult]{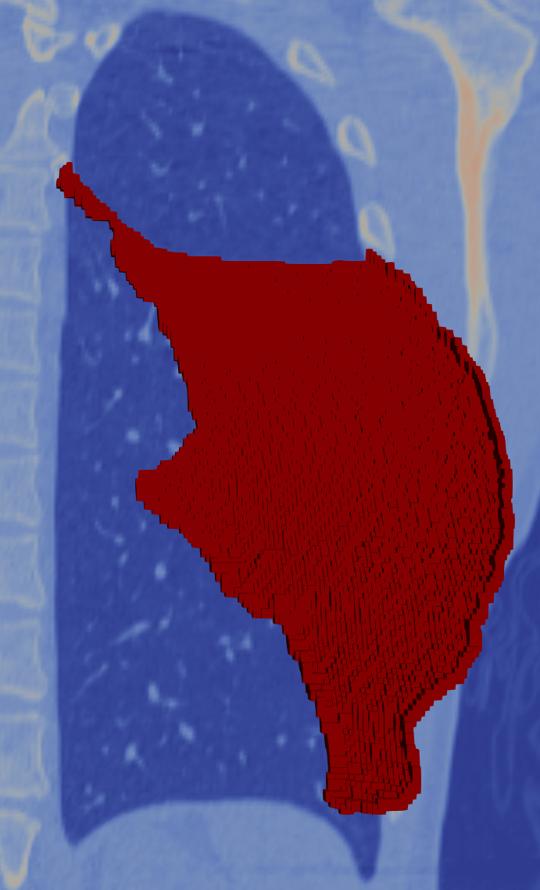} 
    \includegraphics[totalheight=\fHeightLungResult]{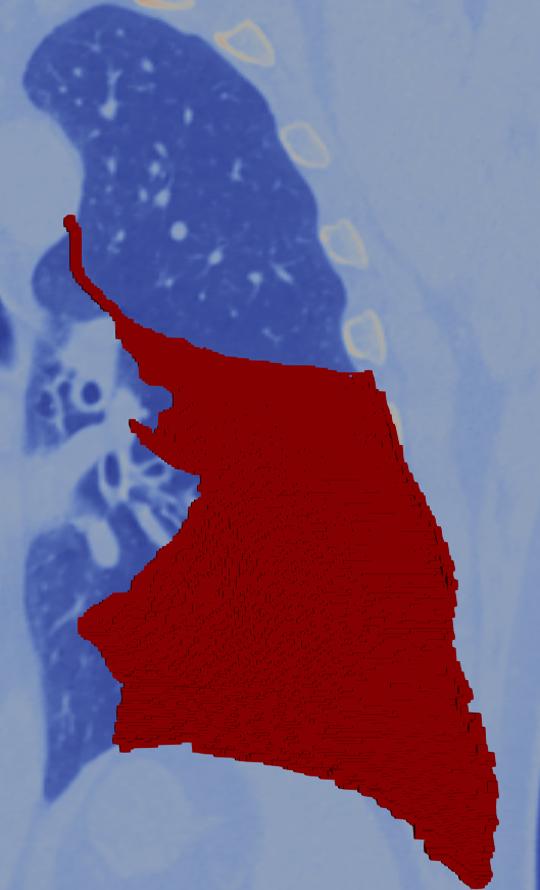} 
    \includegraphics[totalheight=\fHeightLungResult]{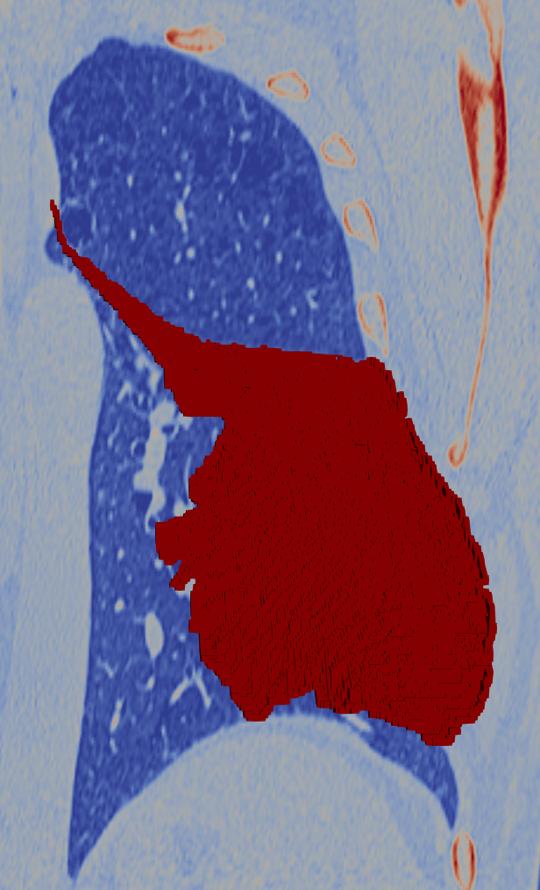} 
    \includegraphics[totalheight=\fHeightLungResult]{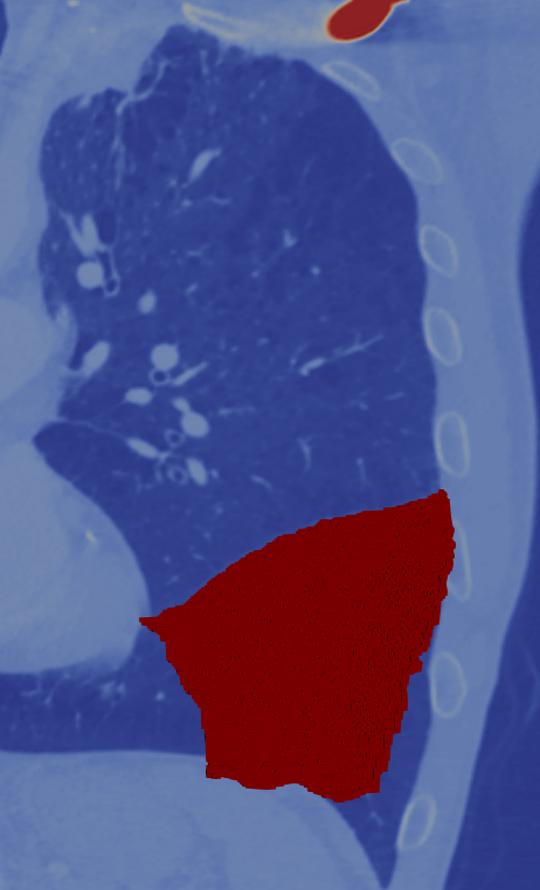} 
    \includegraphics[totalheight=\fHeightLungResult]{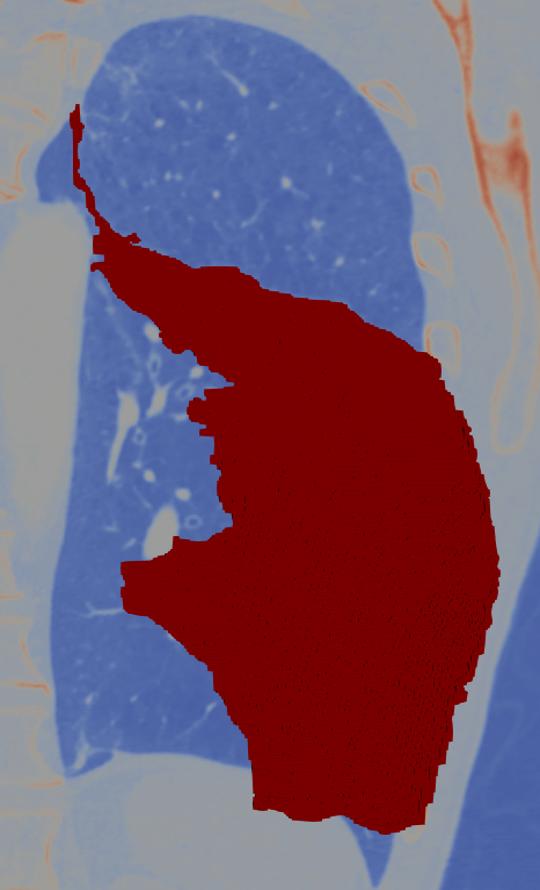} 
    \includegraphics[totalheight=\fHeightLungResult]{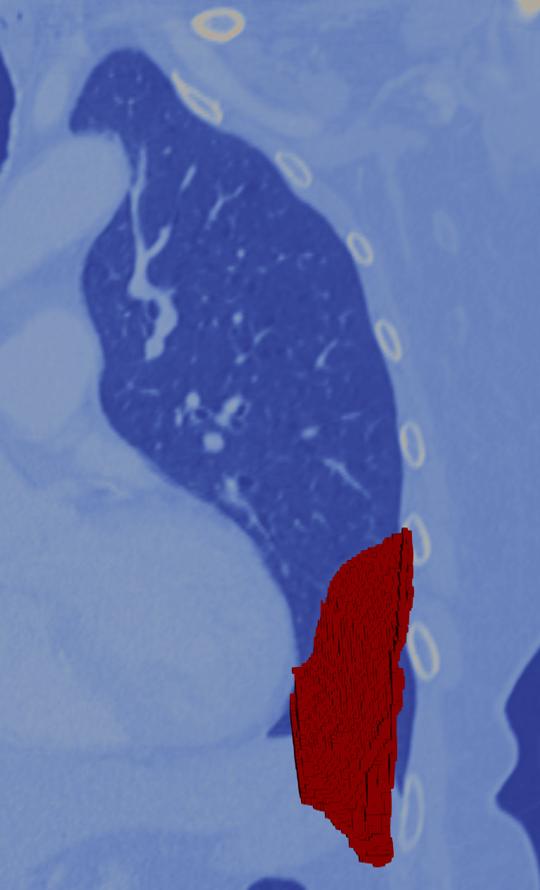} \\
    Crease Surface \\
    \includegraphics[totalheight=\fHeightLungResult]{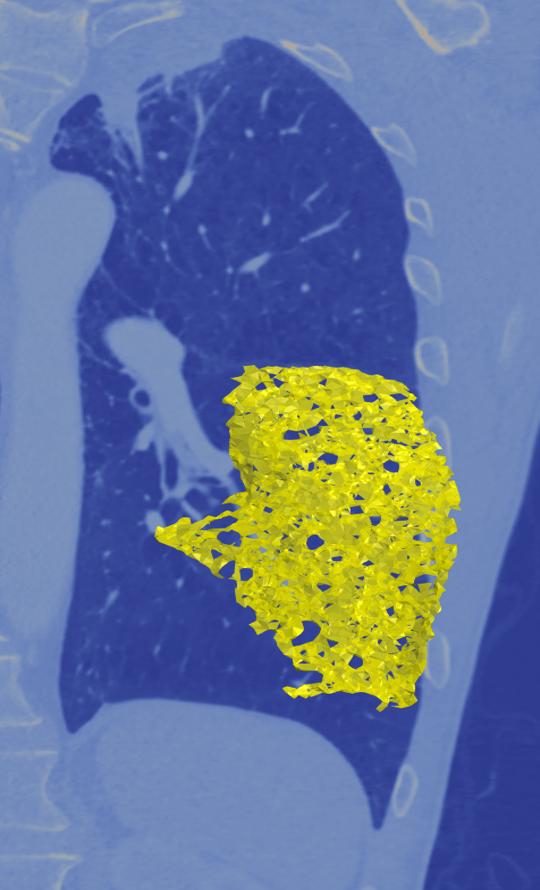} 
    \includegraphics[totalheight=\fHeightLungResult]{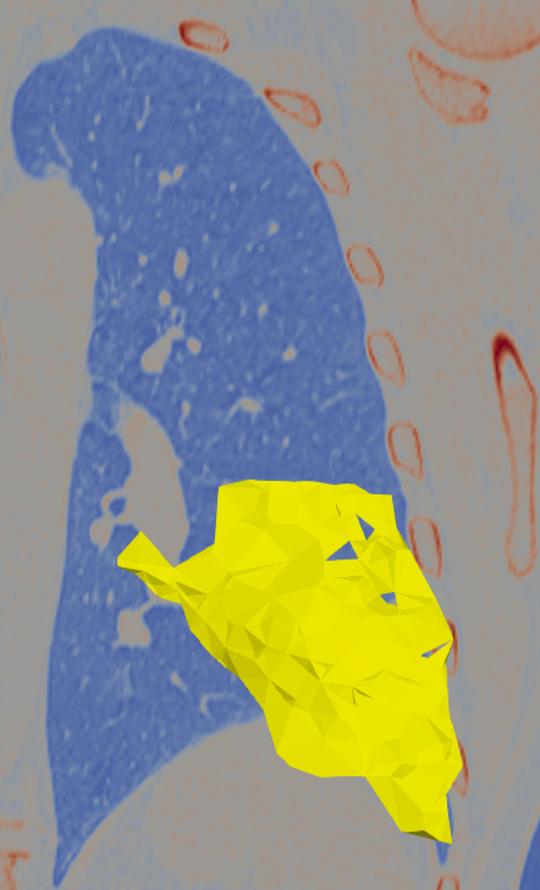} 
    \includegraphics[totalheight=\fHeightLungResult]{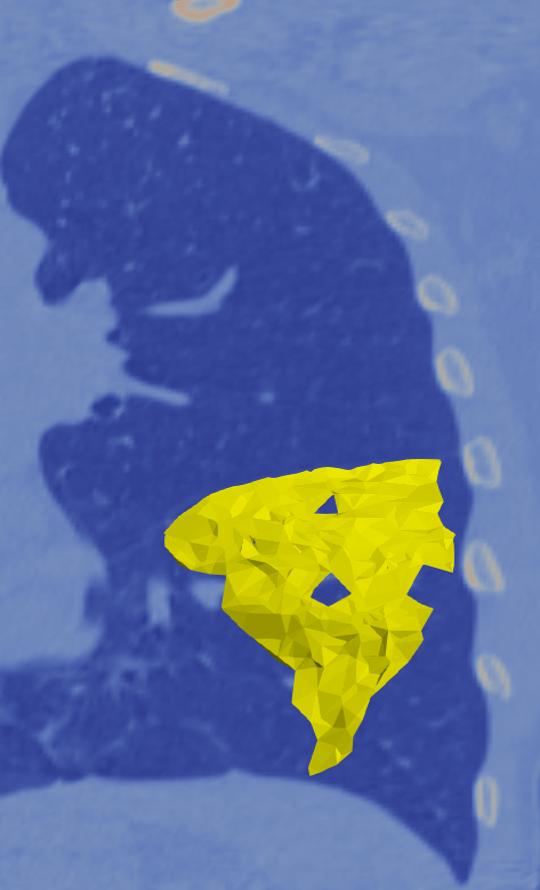} 
    \includegraphics[totalheight=\fHeightLungResult]{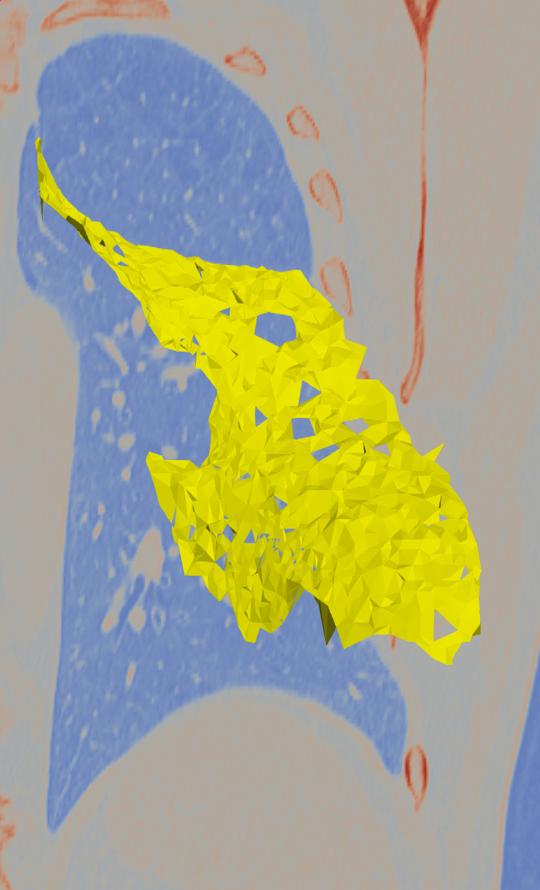} 
    \includegraphics[totalheight=\fHeightLungResult]{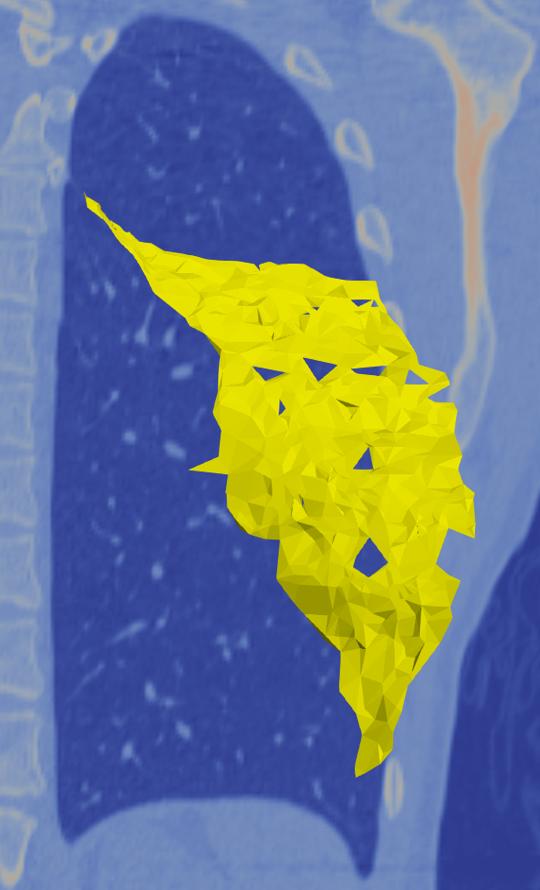} 
    \includegraphics[totalheight=\fHeightLungResult]{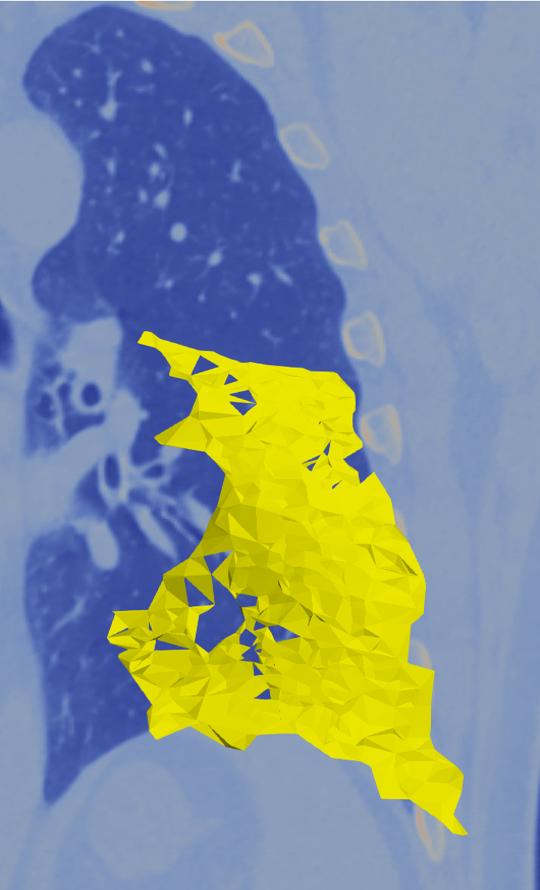} 
    \includegraphics[totalheight=\fHeightLungResult]{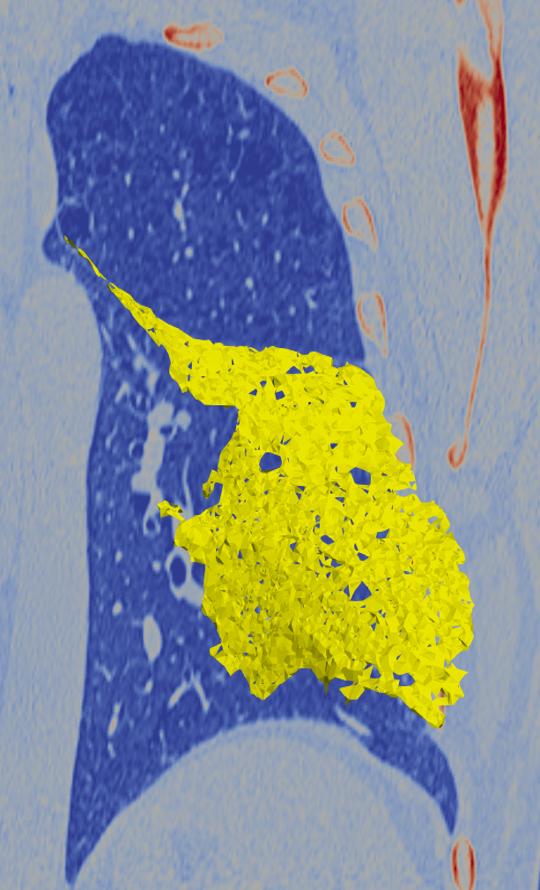} 
    \includegraphics[totalheight=\fHeightLungResult]{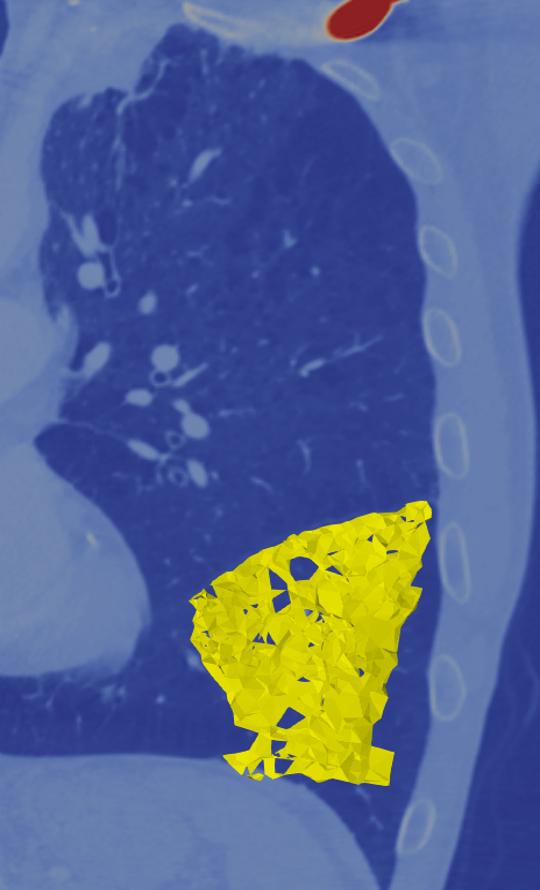} 
    \includegraphics[totalheight=\fHeightLungResult]{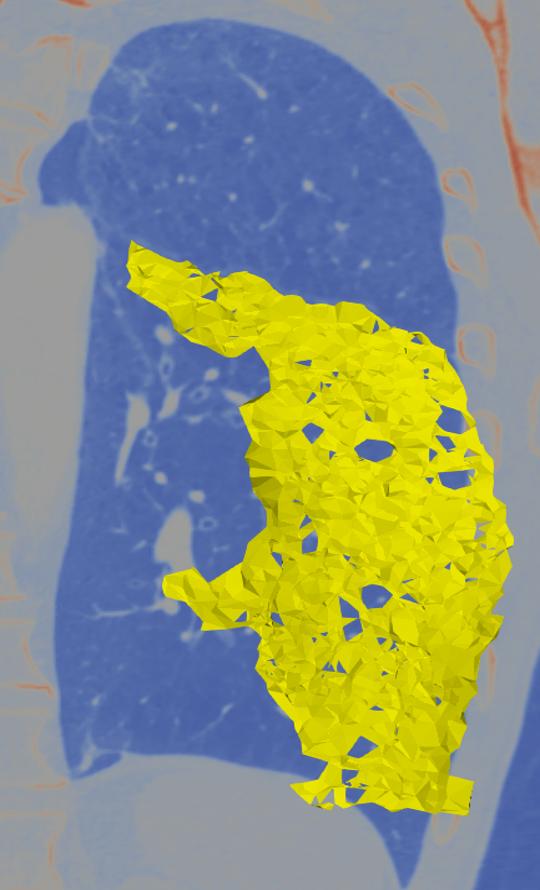} 
    \includegraphics[totalheight=\fHeightLungResult]{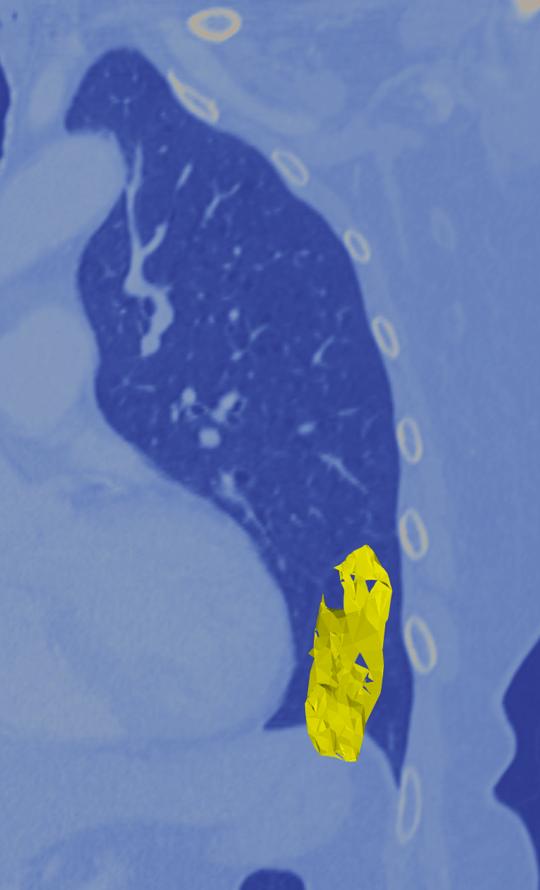} \\
    SurfCut \\
    \includegraphics[totalheight=\fHeightLungResult]{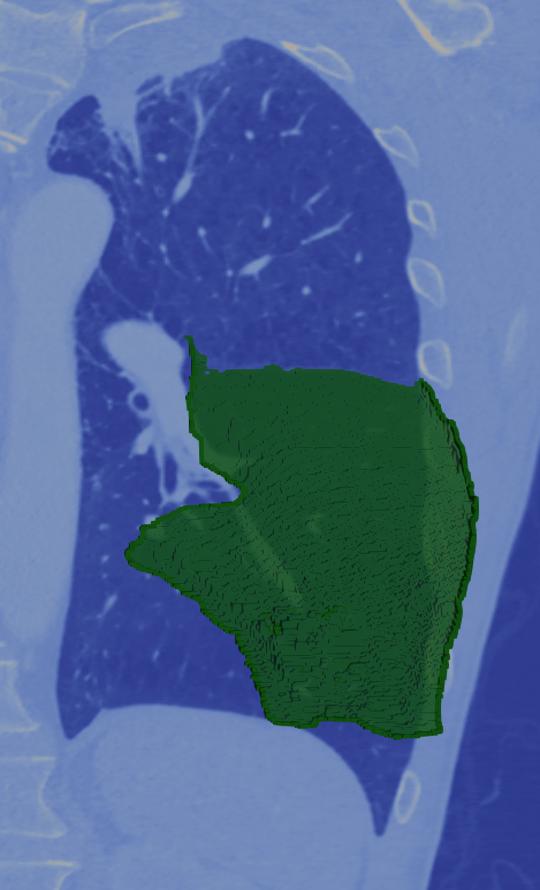} 
    \includegraphics[totalheight=\fHeightLungResult]{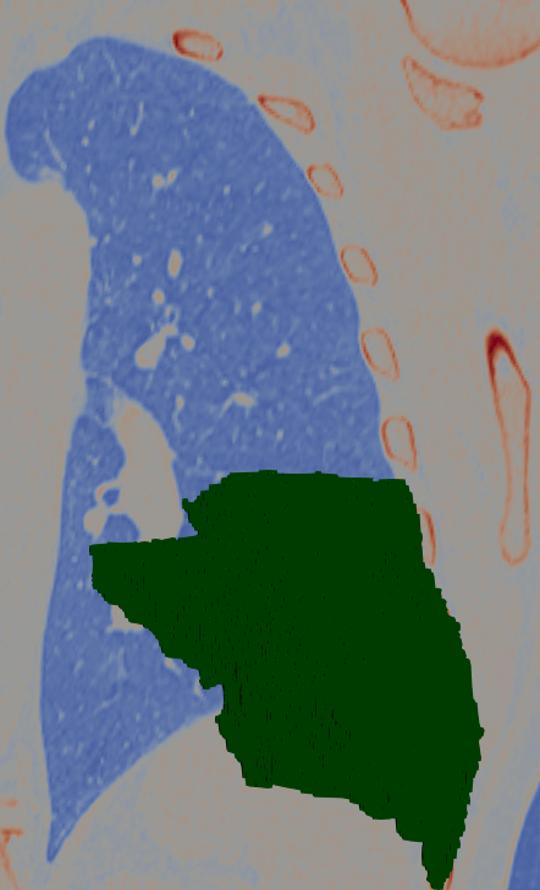} 
    \includegraphics[totalheight=\fHeightLungResult]{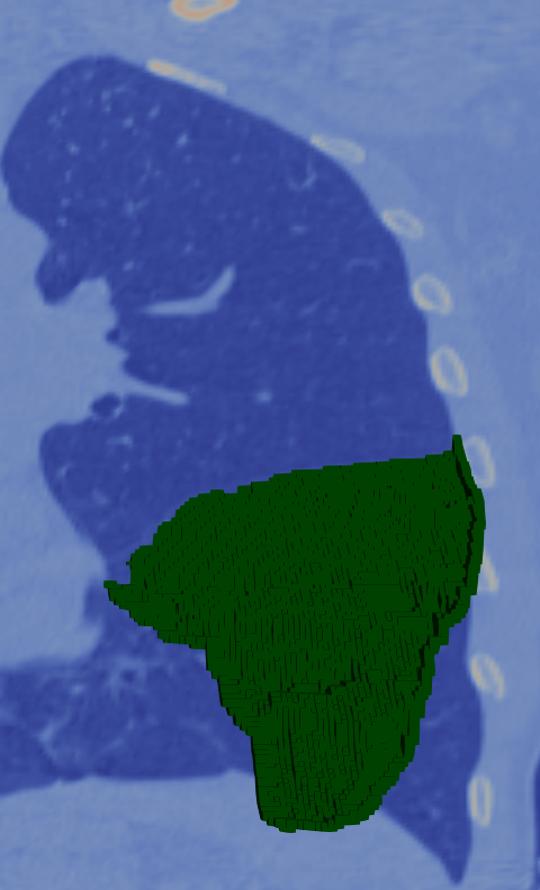} 
    \includegraphics[totalheight=\fHeightLungResult]{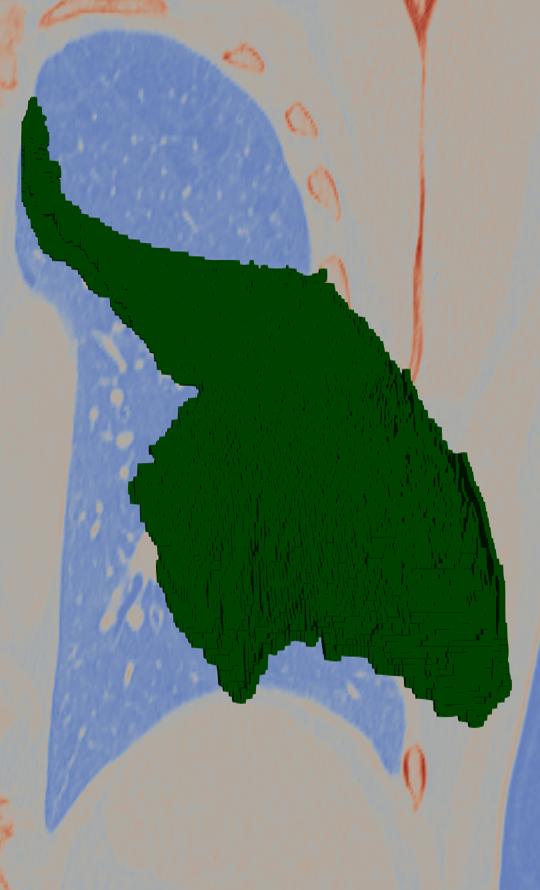} 
    \includegraphics[totalheight=\fHeightLungResult]{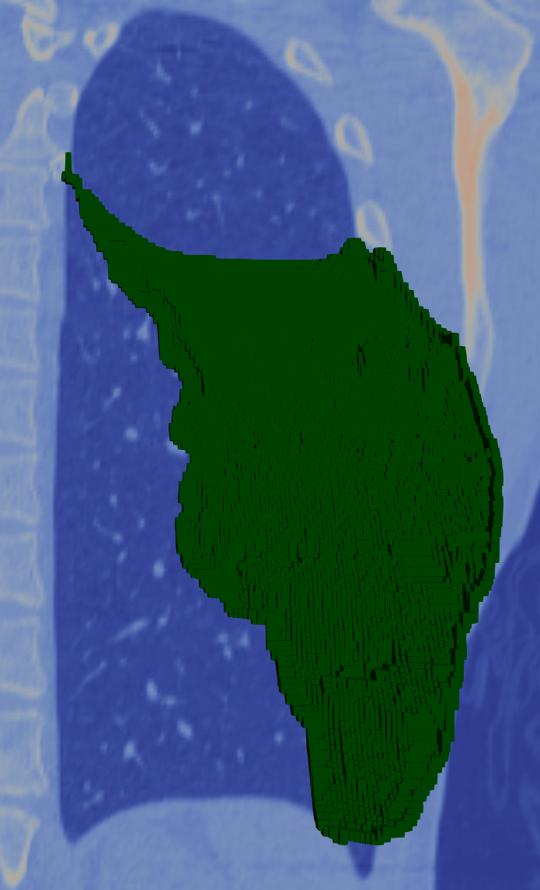} 
    \includegraphics[totalheight=\fHeightLungResult]{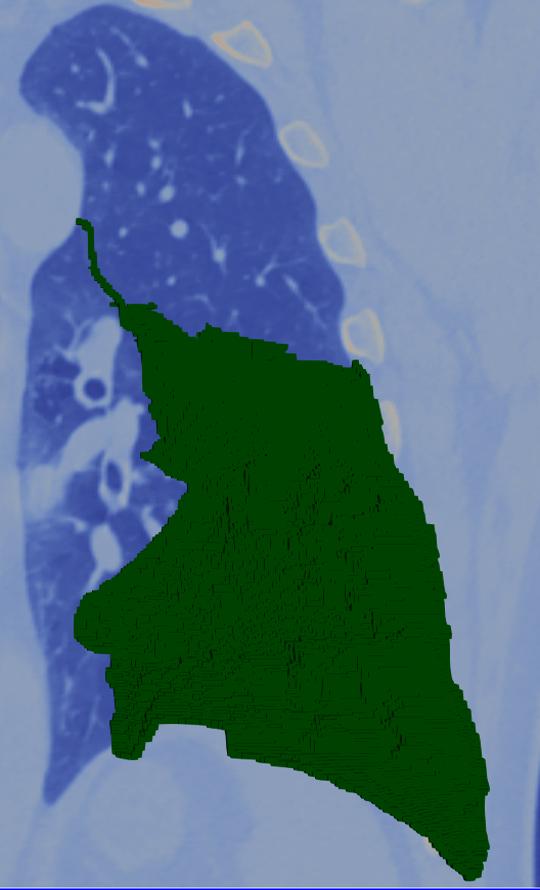} 
    \includegraphics[totalheight=\fHeightLungResult]{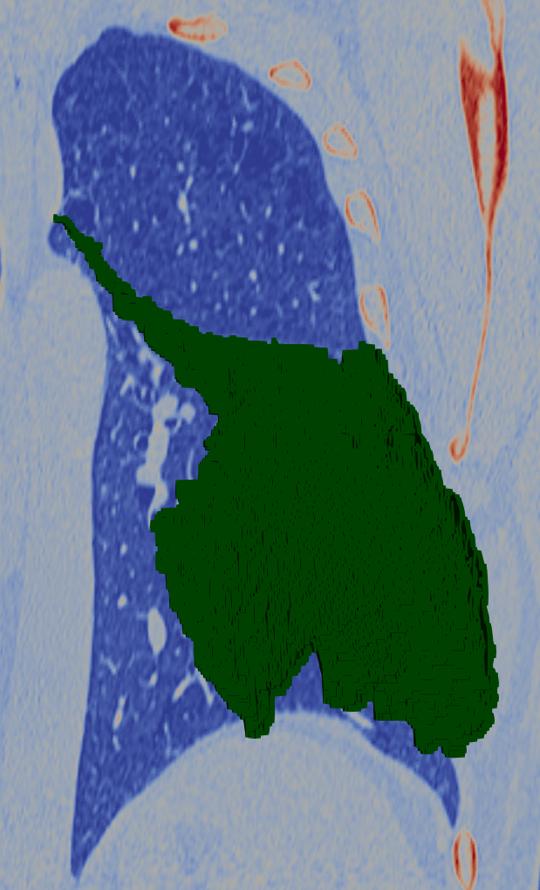} 
    \includegraphics[totalheight=\fHeightLungResult]{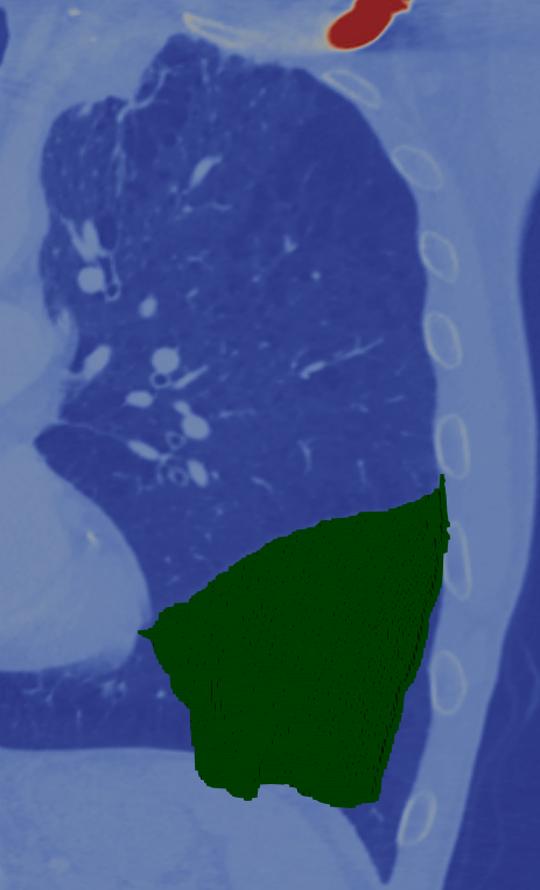} 
    \includegraphics[totalheight=\fHeightLungResult]{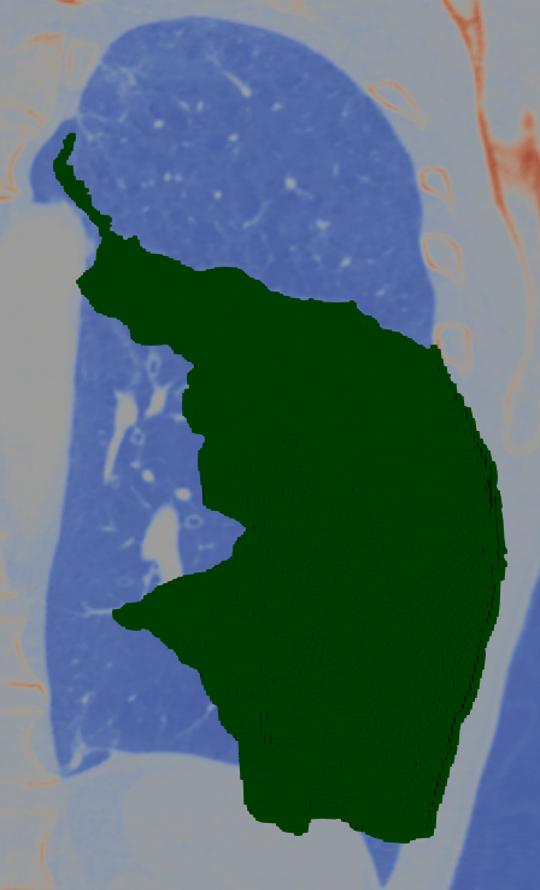} 
    \includegraphics[totalheight=\fHeightLungResult]{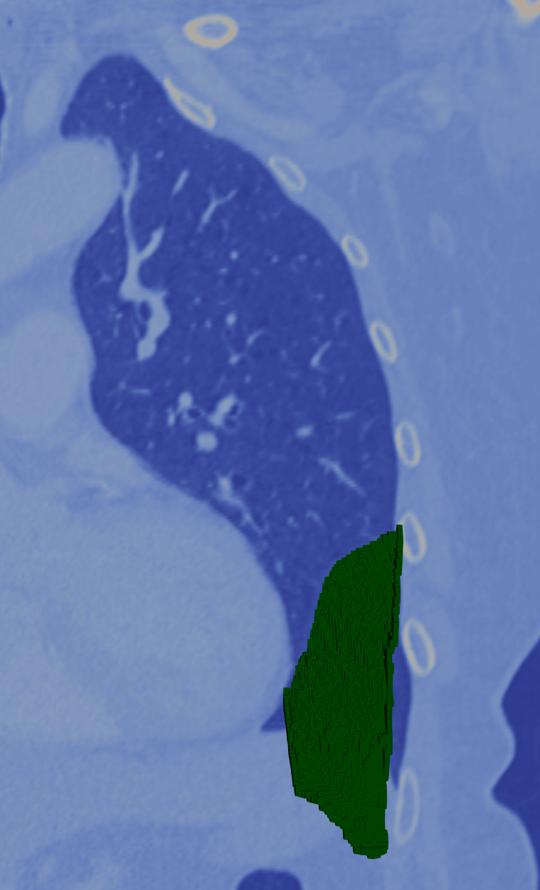} \\
}
\caption{{\bf Qualitative Results on Lung CT Dataset}. Columns show
  the surfaces on the same slice on the same patient for Crease
  Surfaces, SurfCut, and ground truth. Moving through columns on a row
  show the surface for different patients, and a slice of the image is
  shown for various different slices. SurfCut is able to extract more
  of the fine structure of the fissures, better estimates the boundary,
  and recovers more of the surface than Crease surfaces.}
\label{fig:lung_result}
\end{figure*}
}

\def\fHeightLungResult{0.88in}
\begin{figure}
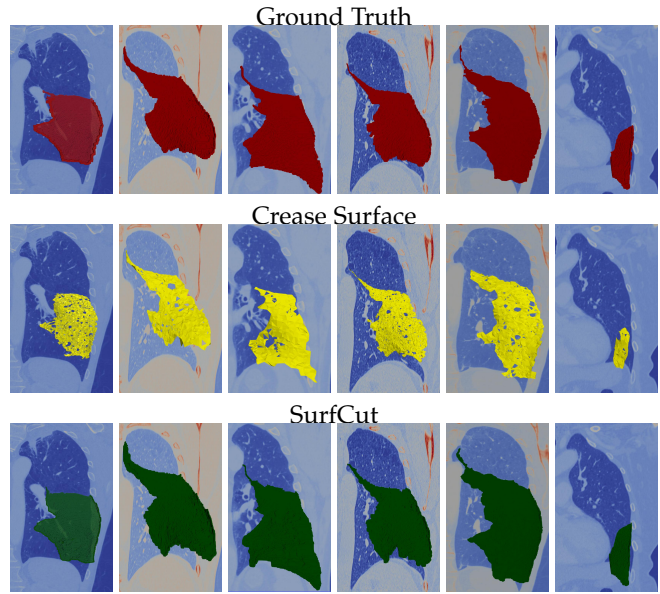

  \centering
  {Ground Truth \\
    \includegraphics[totalheight=\fHeightLungResult]{figures/lung_result/01g} 
    \includegraphics[totalheight=\fHeightLungResult]{figures/lung_result/04g} 
    \includegraphics[totalheight=\fHeightLungResult]{figures/lung_result/06g} 
    \includegraphics[totalheight=\fHeightLungResult]{figures/lung_result/07g} 
    \includegraphics[totalheight=\fHeightLungResult]{figures/lung_result/09g} 
    \includegraphics[totalheight=\fHeightLungResult]{figures/lung_result/10g} \\
    Crease Surface \\
    \includegraphics[totalheight=\fHeightLungResult]{figures/lung_result/01c} 
    \includegraphics[totalheight=\fHeightLungResult]{figures/lung_result/04c} 
    \includegraphics[totalheight=\fHeightLungResult]{figures/lung_result/06c} 
    \includegraphics[totalheight=\fHeightLungResult]{figures/lung_result/07c} 
    \includegraphics[totalheight=\fHeightLungResult]{figures/lung_result/09c} 
    \includegraphics[totalheight=\fHeightLungResult]{figures/lung_result/10c} \\
    SurfCut \\
    \includegraphics[totalheight=\fHeightLungResult]{figures/lung_result/01s} 
    \includegraphics[totalheight=\fHeightLungResult]{figures/lung_result/04s} 
    \includegraphics[totalheight=\fHeightLungResult]{figures/lung_result/06s} 
    \includegraphics[totalheight=\fHeightLungResult]{figures/lung_result/07s} 
    \includegraphics[totalheight=\fHeightLungResult]{figures/lung_result/09s} 
    \includegraphics[totalheight=\fHeightLungResult]{figures/lung_result/10s} \\
}
\caption{{\bf Qualitative Results on Lung CT}. Columns show the
  surfaces on the same slice on the same patient for various
  methods. Moving through a row shows the surface for different
  patients, and a slice of the image is shown for various different
  slices. SurfCut extracts more of the fine structure of the fissures,
  better estimates the boundary, and recovers more of the surfaces
  than Crease surfaces.}
\label{fig:lung_result}
\end{figure}

\section{Conclusion}
We have provided a general method for extracting a smooth simple
(without holes) surface with unknown boundary in a 3D image with noisy
local measurements of the surface, e.g., edges. Our novel method takes
as input a single seed point, and extracts the unknown boundary that
may lie in 3D. It then uses this boundary curve to determine the
entire surface efficiently. We have demonstrated with extensive
experiments on noisy and corrupted data with possible interruptions
that our method accurately determines both the boundary and the
surface, and the method is robust to seed point choice. In comparison
to an approach which extracts connected components of edges in 3D
images, our method is more accurate in both surface and boundary
measures. The computational cost of our algorithm is less than
competing approaches.

A limitation of our method (as with competing methods) is extracting
intersecting surfaces. Our boundary extraction method may extract
boundaries of one or both of the surfaces depending on the
data. However, if given the correct boundary of one of the surfaces,
our surface extraction produces the relevant surface. This limitation
of our boundary extraction is the subject of future work. This is
important in seismic images, where surfaces can intersect.

% if have a single appendix:
%\appendix[Proof of the Zonklar Equations]
% or
%\appendix  % for no appendix heading
% do not use \section anymore after \appendix, only \section*
% is possibly needed

% use appendices with more than one appendix
% then use \section to start each appendix
% you must declare a \section before using any
% \subsection or using \label (\appendices by itself
% starts a section numbered zero.)
%

% use section* for acknowledgment
\ifCLASSOPTIONcompsoc
  % The Computer Society usually uses the plural form
  \section*{Acknowledgments}
\else
  % regular IEEE prefers the singular form
  \section*{Acknowledgment}
\fi

This work was supported by KAUST OCRF-2014-CRG3-62140401, and the
Visual Computing Center at KAUST.

% Can use something like this to put references on a page
% by themselves when using endfloat and the captionsoff option.
\ifCLASSOPTIONcaptionsoff
  \newpage
\fi

% trigger a \newpage just before the given reference
% number - used to balance the columns on the last page
% adjust value as needed - may need to be readjusted if
% the document is modified later
%\IEEEtriggeratref{8}
% The "triggered" command can be changed if desired:
%\IEEEtriggercmd{\enlargethispage{-5in}}

% references section

% can use a bibliography generated by BibTeX as a .bbl file
% BibTeX documentation can be easily obtained at:
% http://mirror.ctan.org/biblio/bibtex/contrib/doc/
% The IEEEtran BibTeX style support page is at:
% http://www.michaelshell.org/tex/ieeetran/bibtex/
%\bibliographystyle{IEEEtran}
% argument is your BibTeX string definitions and bibliography database(s)
%\bibliography{IEEEabrv,../bib/paper}
%
% <OR> manually copy in the resultant .bbl file
% set second argument of \begin to the number of references
% (used to reserve space for the reference number labels box)

\bibliographystyle{IEEEtran}
\bibliography{References}

% Generated by IEEEtran.bst, version: 1.14 (2015/08/26)
\begin{thebibliography}{10}
\providecommand{\url}[1]{#1}
\csname url@samestyle\endcsname
\providecommand{\newblock}{\relax}
\providecommand{\bibinfo}[2]{#2}
\providecommand{\BIBentrySTDinterwordspacing}{\spaceskip=0pt\relax}
\providecommand{\BIBentryALTinterwordstretchfactor}{4}
\providecommand{\BIBentryALTinterwordspacing}{\spaceskip=\fontdimen2\font plus
\BIBentryALTinterwordstretchfactor\fontdimen3\font minus
  \fontdimen4\font\relax}
\providecommand{\BIBforeignlanguage}[2]{{%
\expandafter\ifx\csname l@#1\endcsname\relax
\typeout{** WARNING: IEEEtran.bst: No hyphenation pattern has been}%
\typeout{** loaded for the language `#1'. Using the pattern for}%
\typeout{** the default language instead.}%
\else
\language=\csname l@#1\endcsname
\fi
#2}}
\providecommand{\BIBdecl}{\relax}
\BIBdecl

\bibitem{cohen1997global}
L.~D. Cohen and R.~Kimmel, ``Global minimum for active contour models: A
  minimal path approach,'' \emph{International journal of computer vision},
  vol.~24, no.~1, pp. 57--78, 1997.

\bibitem{sethian1996fast}
J.~A. Sethian, ``A fast marching level set method for monotonically advancing
  fronts,'' \emph{Proceedings of the National Academy of Sciences}, vol.~93,
  no.~4, pp. 1591--1595, 1996.

\bibitem{tsitsiklis1995efficient}
J.~N. Tsitsiklis, ``Efficient algorithms for globally optimal trajectories,''
  \emph{Automatic Control, IEEE Transactions on}, vol.~40, no.~9, pp.
  1528--1538, 1995.

\bibitem{kaul2012detecting}
V.~Kaul, A.~Yezzi, and Y.~Tsai, ``Detecting curves with unknown endpoints and
  arbitrary topology using minimal paths,'' \emph{Pattern Analysis and Machine
  Intelligence, IEEE Transactions on}, vol.~34, no.~10, pp. 1952--1965, 2012.

\bibitem{mille2015combination}
J.~Mille, S.~Bougleux, and L.~D. Cohen, ``Combination of piecewise-geodesic
  paths for interactive segmentation,'' \emph{International Journal of Computer
  Vision}, vol. 112, no.~1, pp. 1--22, 2015.

\bibitem{benmansour2009single}
F.~Benmansour and L.~D. Cohen, ``From a single point to a surface patch by
  growing minimal paths,'' in \emph{Scale Space and Variational Methods in
  Computer Vision}.\hskip 1em plus 0.5em minus 0.4em\relax Springer, 2009, pp.
  648--659.

\bibitem{ardon2005new}
R.~Ardon, L.~D. Cohen, and A.~Yezzi, ``A new implicit method for surface
  segmentation by minimal paths: Applications in 3d medical images,'' in
  \emph{Energy Minimization Methods in Computer Vision and Pattern
  Recognition}.\hskip 1em plus 0.5em minus 0.4em\relax Springer, 2005, pp.
  520--535.

\bibitem{grady2010minimal}
L.~Grady, ``Minimal surfaces extend shortest path segmentation methods to 3d,''
  \emph{Pattern Analysis and Machine Intelligence, IEEE Transactions on},
  vol.~32, no.~2, pp. 321--334, 2010.

\bibitem{hale2013methods}
D.~Hale, ``Methods to compute fault images, extract fault surfaces, and
  estimate fault throws from 3d seismic images,'' \emph{Geophysics}, vol.~78,
  no.~2, pp. O33--O43, 2013.

\bibitem{sullivan1990crystalline}
J.~M. Sullivan, ``A crystalline approximation theorem for hypersurfaces,''
  \emph{Princeton Ph. D}, 1990.

\bibitem{caselles1997geodesic}
V.~Caselles, R.~Kimmel, and G.~Sapiro, ``Geodesic active contours,''
  \emph{International journal of computer vision}, vol.~22, no.~1, pp. 61--79,
  1997.

\bibitem{yezzi1997geometric}
A.~Yezzi~Jr, S.~Kichenassamy, A.~Kumar, P.~Olver, and A.~Tannenbaum, ``A
  geometric snake model for segmentation of medical imagery,'' \emph{Medical
  Imaging, IEEE Transactions on}, vol.~16, no.~2, pp. 199--209, 1997.

\bibitem{chan2001active}
T.~F. Chan and L.~A. Vese, ``Active contours without edges,'' \emph{Image
  processing, IEEE transactions on}, vol.~10, no.~2, pp. 266--277, 2001.

\bibitem{osher1988fronts}
S.~Osher and J.~A. Sethian, ``Fronts propagating with curvature-dependent
  speed: algorithms based on hamilton-jacobi formulations,'' \emph{Journal of
  computational physics}, vol.~79, no.~1, pp. 12--49, 1988.

\bibitem{pock2008convex}
T.~Pock, T.~Schoenemann, G.~Graber, H.~Bischof, and D.~Cremers, ``A convex
  formulation of continuous multi-label problems,'' in \emph{Computer
  Vision--ECCV 2008}.\hskip 1em plus 0.5em minus 0.4em\relax Springer, 2008,
  pp. 792--805.

\bibitem{boykov2001interactive}
Y.~Y. Boykov and M.-P. Jolly, ``Interactive graph cuts for optimal boundary \&
  region segmentation of objects in nd images,'' in \emph{Computer Vision,
  2001. ICCV 2001. Proceedings. Eighth IEEE International Conference on},
  vol.~1.\hskip 1em plus 0.5em minus 0.4em\relax IEEE, 2001, pp. 105--112.

\bibitem{rother2004grabcut}
C.~Rother, V.~Kolmogorov, and A.~Blake, ``Grabcut: Interactive foreground
  extraction using iterated graph cuts,'' in \emph{ACM transactions on graphics
  (TOG)}, vol.~23, no.~3.\hskip 1em plus 0.5em minus 0.4em\relax ACM, 2004, pp.
  309--314.

\bibitem{ulen2015shortest}
J.~Ulen, P.~Strandmark, and F.~Kahl, ``Shortest paths with higher-order
  regularization,'' \emph{Pattern Analysis and Machine Intelligence, IEEE
  Transactions on}, vol.~37, no.~12, pp. 2588--2600, 2015.

\bibitem{ardon2007new}
R.~Ardon, L.~D. Cohen, and A.~Yezzi, ``A new implicit method for surface
  segmentation by minimal paths in 3d images,'' \emph{Applied Mathematics \&
  Optimization}, vol.~55, no.~2, pp. 127--144, 2007.

\bibitem{grady2006computing}
L.~Grady, ``Computing exact discrete minimal surfaces: Extending and solving
  the shortest path problem in 3d with application to segmentation,'' in
  \emph{Computer Vision and Pattern Recognition, 2006 IEEE Computer Society
  Conference on}, vol.~1.\hskip 1em plus 0.5em minus 0.4em\relax IEEE, 2006,
  pp. 69--78.

\bibitem{goldberg1997efficient}
A.~V. Goldberg, ``An efficient implementation of a scaling minimum-cost flow
  algorithm,'' \emph{Journal of algorithms}, vol.~22, no.~1, pp. 1--29, 1997.

\bibitem{kovacs2015minimum}
P.~Kov{\'a}cs, ``Minimum-cost flow algorithms: an experimental evaluation,''
  \emph{Optimization Methods and Software}, vol.~30, no.~1, pp. 94--127, 2015.

\bibitem{brunsch2015smoothed}
T.~Brunsch, K.~Cornelissen, B.~Manthey, H.~Röglin, and C.~Rösner,
  ``Smoothed analysis of the successive shortest path algorithm,'' \emph{SIAM
  Journal on Computing}, vol.~44, no.~6, pp. 1798--1819, 2015.

\bibitem{ford2015flows}
L.~R. Ford~Jr and D.~R. Fulkerson, \emph{Flows in networks}.\hskip 1em plus
  0.5em minus 0.4em\relax Princeton university press, 2015.

\bibitem{schultz2010crease}
T.~Schultz, H.~Theisel, and H.-P. Seidel, ``Crease surfaces: From theory to
  extraction and application to diffusion tensor mri,'' \emph{Visualization and
  Computer Graphics, IEEE Transactions on}, vol.~16, no.~1, pp. 109--119, 2010.

\bibitem{kovalevsky1989finite}
V.~A. Kovalevsky, ``Finite topology as applied to image analysis,''
  \emph{Computer vision, graphics, and image processing}, vol.~46, no.~2, pp.
  141--161, 1989.

\bibitem{kaczynski2006computational}
T.~Kaczynski, K.~Mischaikow, and M.~Mrozek, \emph{Computational
  homology}.\hskip 1em plus 0.5em minus 0.4em\relax Springer Science \&
  Business Media, 2006, vol. 157.

\bibitem{chaussard2009surface}
J.~Chaussard and M.~Couprie, ``Surface thinning in 3d cubical complexes,'' in
  \emph{Combinatorial Image Analysis}.\hskip 1em plus 0.5em minus 0.4em\relax
  Springer, 2009, pp. 135--148.

\bibitem{siddiqi1999shock}
K.~Siddiqi, A.~Shokoufandeh, S.~J. Dickinson, and S.~W. Zucker, ``Shock graphs
  and shape matching,'' \emph{International Journal of Computer Vision},
  vol.~35, no.~1, pp. 13--32, 1999.

\bibitem{sebastian2004recognition}
T.~B. Sebastian, P.~N. Klein, and B.~B. Kimia, ``Recognition of shapes by
  editing their shock graphs,'' \emph{IEEE Transactions on pattern analysis and
  machine intelligence}, vol.~26, no.~5, pp. 550--571, 2004.

\bibitem{siddiqi2008medial}
K.~Siddiqi and S.~Pizer, \emph{Medial representations: mathematics, algorithms
  and applications}.\hskip 1em plus 0.5em minus 0.4em\relax Springer Science \&
  Business Media, 2008, vol.~37.

\bibitem{zomorodian2009topology}
A.~J. Zomorodian, \emph{Topology for computing}.\hskip 1em plus 0.5em minus
  0.4em\relax Cambridge university press, 2009, vol.~16.

\bibitem{edelsbrunner2001hierarchical}
H.~Edelsbrunner, J.~Harer, and A.~Zomorodian, ``Hierarchical morse complexes
  for piecewise linear 2-manifolds,'' in \emph{Proceedings of the seventeenth
  annual symposium on Computational geometry}.\hskip 1em plus 0.5em minus
  0.4em\relax ACM, 2001, pp. 70--79.

\bibitem{edelsbrunner2003morse}
H.~Edelsbrunner, J.~Harer, V.~Natarajan, and V.~Pascucci, ``Morse-smale
  complexes for piecewise linear 3-manifolds,'' in \emph{Proceedings of the
  nineteenth annual symposium on Computational geometry}.\hskip 1em plus 0.5em
  minus 0.4em\relax ACM, 2003, pp. 361--370.

\bibitem{gyulassy2008practical}
A.~Gyulassy, P.-T. Bremer, B.~Hamann, and V.~Pascucci, ``A practical approach
  to morse-smale complex computation: Scalability and generality,'' \emph{IEEE
  Transactions on Visualization and Computer Graphics}, vol.~14, no.~6, 2008.

\bibitem{gyulassy2012computing}
A.~Gyulassy, P.-T. Bremer, and V.~Pascucci, ``Computing morse-smale complexes
  with accurate geometry,'' \emph{IEEE transactions on visualization and
  computer graphics}, vol.~18, no.~12, pp. 2014--2022, 2012.

\bibitem{algarni2016surfcut}
M.~Algarni and G.~Sundaramoorthi, ``Surfcut: Free-boundary surface
  extraction,'' in \emph{European Conference on Computer Vision}.\hskip 1em
  plus 0.5em minus 0.4em\relax Springer, 2016, pp. 171--186.

\bibitem{eberly1994ridges}
D.~Eberly, R.~Gardner, B.~Morse, S.~Pizer, and C.~Scharlach, ``Ridges for image
  analysis,'' \emph{Journal of Mathematical Imaging and Vision}, vol.~4, no.~4,
  pp. 353--373, 1994.

\bibitem{milnor2016morse}
J.~Milnor, \emph{Morse Theory.(AM-51)}.\hskip 1em plus 0.5em minus 0.4em\relax
  Princeton university press, 2016, vol.~51.

\bibitem{crandall1983viscosity}
M.~G. Crandall and P.-L. Lions, ``Viscosity solutions of hamilton-jacobi
  equations,'' \emph{Transactions of the American Mathematical Society}, vol.
  277, no.~1, pp. 1--42, 1983.

\bibitem{lindeberg1998edge}
T.~Lindeberg, ``Edge detection and ridge detection with automatic scale
  selection,'' \emph{International Journal of Computer Vision}, vol.~30, no.~2,
  pp. 117--156, 1998.

\bibitem{kolomenkin2013multi}
M.~Kolomenkin, I.~Shimshoni, and A.~Tal, ``Multi-scale curve detection on
  surfaces,'' in \emph{Proceedings of the IEEE Conference on Computer Vision
  and Pattern Recognition}, 2013, pp. 225--232.

\bibitem{hugo2016}
G.~D. Hugo, E.~Weiss, W.~C.~B. Sleeman, K.~Salim, L.~Paul~J., Jun, and J.~F.
  Williamson, ``Data from 4d lung imaging of nsclc patients,'' The Cancer
  Imaging Archive, \url{http://doi.org/10.7937/K9/TCIA.2016.ELN8YGLE}.

\bibitem{lassen2013automatic}
B.~Lassen, E.~M. van Rikxoort, M.~Schmidt, S.~Kerkstra, B.~van Ginneken, and
  J.-M. Kuhnigk, ``Automatic segmentation of the pulmonary lobes from chest ct
  scans based on fissures, vessels, and bronchi,'' \emph{IEEE transactions on
  medical imaging}, vol.~32, no.~2, pp. 210--222, 2013.

\bibitem{xiao2016pulmonary}
C.~Xiao, B.~C. Stoel, M.~E. Bakker, Y.~Peng, J.~Stolk, and M.~Staring,
  ``Pulmonary fissure detection in ct images using a derivative of stick
  filter,'' \emph{IEEE transactions on medical imaging}, vol.~35, no.~6, pp.
  1488--1500, 2016.

\bibitem{seitz2006comparison}
S.~M. Seitz, B.~Curless, J.~Diebel, D.~Scharstein, and R.~Szeliski, ``A
  comparison and evaluation of multi-view stereo reconstruction algorithms,''
  in \emph{Computer vision and pattern recognition, 2006 IEEE Computer Society
  Conference on}, vol.~1.\hskip 1em plus 0.5em minus 0.4em\relax IEEE, 2006,
  pp. 519--528.

\bibitem{dezsHo2011lemon}
B.~Dezs{\H{o}}, A.~J{\"u}ttner, and P.~Kov{\'a}cs, ``Lemon--an open source c++
  graph template library,'' \emph{Electronic Notes in Theoretical Computer
  Science}, vol. 264, no.~5, pp. 23--45, 2011.

\bibitem{rusu20113d}
R.~B. Rusu and S.~Cousins, ``3d is here: Point cloud library (pcl),'' in
  \emph{Robotics and Automation (ICRA), 2011 IEEE International Conference
  on}.\hskip 1em plus 0.5em minus 0.4em\relax IEEE, 2011, pp. 1--4.

\bibitem{wiemker2005unsupervised}
R.~Wiemker, T.~B{\"u}low, and T.~Blaffert, ``Unsupervised extraction of the
  pulmonary interlobar fissures from high resolution thoracic ct data,'' in
  \emph{International Congress Series}, vol. 1281.\hskip 1em plus 0.5em minus
  0.4em\relax Elsevier, 2005, pp. 1121--1126.

\bibitem{van2008supervised}
E.~M. van Rikxoort, B.~van Ginneken, M.~Klik, and M.~Prokop, ``Supervised
  enhancement filters: Application to fissure detection in chest ct scans,''
  \emph{IEEE Transactions on Medical Imaging}, vol.~27, no.~1, pp. 1--10, 2008.

\end{thebibliography}

% biography section
% 
% If you have an EPS/PDF photo (graphicx package needed) extra braces are
% needed around the contents of the optional argument to biography to prevent
% the LaTeX parser from getting confused when it sees the complicated
% \includegraphics command within an optional argument. (You could create
% your own custom macro containing the \includegraphics command to make things
% simpler here.)
%\begin{IEEEbiography}[{\includegraphics[width=1in,height=1.25in,clip,keepaspectratio]{mshell}}]{Michael Shell}
% or if you just want to reserve a space for a photo:

% You can push biographies down or up by placing
% a \vfill before or after them. The appropriate
% use of \vfill depends on what kind of text is
% on the last page and whether or not the columns
% are being equalized.

%\vfill

% Can be used to pull up biographies so that the bottom of the last one
% is flush with the other column.
%\enlargethispage{-5in}
%[{\includegraphics[totalheight=1.25in]{figures2/ganesh_pami}}]

\begin{IEEEbiography}{Marei Algarni}
  received the BS degree in Computer Science from KAU, Saudi Arabia,
  and MSc with Merit from the University of Bradford/UK, 2008. He then
  worked at Saudi Aramco in the Exploration Application Service
  Department. He is currently working toward his PhD degree in the
  Department of Computer Science, KAUST (King Abdullah University of
  Science and Technology). His research interests include computer
  vision with particular interest in segmentation of 3D scientific
  datasets.
\end{IEEEbiography}

\begin{IEEEbiography}{Ganesh Sundaramoorthi}
  received the PhD in Electrical and Computer Engineering from Georgia
  Institute of Technology, Atlanta, USA. He was then a postdoctoral
  researcher in the Computer Science Department at the University of
  California, Los Angeles between 2008 and 2010. In 2011, he was
  appointed Assistant Professor of Electrical Engineering and
  Assistant Professor of Applied Mathematics and Computational Science
  at King Abdullah University of Science and Technology (KAUST). His
  research interests include computer vision and its mathematical
  foundations with recent interest in shape and motion analysis, video
  analysis, invariant representations for visual tasks, and
  applications in medical and scientific imaging. He is an area chair for
  IEEE ICCV 2017.
\end{IEEEbiography}

% that's all folks
\end{document}